\pgfplotsset{width=8cm,compat=1.9}
\renewcommand{\ALG@name}{Protocol}
\newtheorem{theorem}{Theorem}[section]
\newtheorem*{theorem*}{Theorem}
\newtheorem{proposition}[theorem]{Proposition}
\newtheorem{lemma}[theorem]{Lemma}
\newtheorem{corollary}[theorem]{Corollary}
\newtheorem*{corollary*}{Corollary}
\newtheorem{remark}[theorem]{Remark}
\theoremstyle{definition}
\newcommand{\cU}{\mathcal{U}}
\newcommand{\cH}{\mathcal{H}}
\newcommand{\cX}{\mathcal{X}}
\newcommand{\cY}{\mathcal{Y}}
\newcommand{\cB}{\mathcal{B}}
\newcommand{\cP}{\mathcal{P}}
\newcommand{\Ex}{\mathop{\mathbb{E}}}
\newcommand{\Lrn}{\mathsf{Lrn}}
\newcommand{\Adv}{\mathsf{Adv}}
\newcommand{\SOA}{\mathsf{SOA}}
\newcommand{\DT}{\mathsf{DT}}
\newcommand{\RSOA}{\mathsf{RandSOA}}
\newcommand{\BRandSOA}{\mathsf{BanditRandSOA}}
\newcommand{\EXP}{\mathsf{Exp}}
\newcommand{\op}{\mathsf{op}}
\newcommand{\LD}{\mathtt{L}}
\newcommand{\M}{\mathtt{M}}
\providecommand{\notto}{\not\to} 
\newcommand{\optfulldet}{\mathsf{opt}_{\operatorname{full}}^{\operatorname{det}}}
\newcommand{\optfullrand}{\mathsf{opt}_{\operatorname{full}}^{\operatorname{rand}}}
\newcommand{\optbanditdet}{\mathsf{opt}_{\operatorname{bandit}}^{\operatorname{det}}}
\newcommand{\optbanditobl}{\mathsf{opt}_{\operatorname{bandit}}^{\operatorname{obl}}}
\newcommand{\optbanditadap}{\mathsf{opt}_{\operatorname{bandit}}^{\operatorname{adap}}}
\newcommand{\lossfull}{\mathtt{M}_{\operatorname{full}}}
\newcommand{\lossbandit}{\mathtt{M}_{\operatorname{bandit}}}
\newcommand{\optdualbanditadap}{\mathsf{opt\_d}_{\operatorname{bandit}}^{\operatorname{adap}}}
\DeclareMathOperator*{\argmin}{arg\,min}
\newif\ifanonymous
\begin{document}
\title{Bandit-Feedback Online Multiclass Classification: \\ Variants and Tradeoffs}
\ifanonymous
\author{Anonymous}
\else
\author[1,2]{Yuval Filmus}
\author[3]{Steve Hanneke}
\author[1]{Idan Mehalel}
\author[2,1,4,5]{Shay Moran}
\affil[1]{The Henry and Marilyn Taub Faculty of Computer Science, Technion, Israel}
\affil[2]{Faculty of Mathematics, Technion, Israel}
\affil[3]{Department of Computer Science, Purdue University, USA}
\affil[4]{Faculty of Data and Decision Sciences, Technion, Israel}
\affil[5]{Google Research, Israel}
\fi

\maketitle

\begin{abstract}
Consider the domain of multiclass classification within the adversarial online setting.  What is the price of relying on bandit feedback as opposed to full information? To what extent can an adaptive adversary amplify the loss compared to an oblivious one? To what extent can a randomized learner reduce the loss compared to a deterministic one? We study these questions in the mistake bound model and provide nearly tight answers.

We demonstrate that the optimal mistake bound under bandit feedback is at most \( O(k) \) times higher than the optimal mistake bound in the full information case, where \( k \) represents the number of labels. This bound is tight and provides an answer to an open question previously posed and studied by Daniely and Helbertal ['13] and by Long ['17, '20], who focused on deterministic learners.

Moreover, we present nearly optimal bounds of \( \tilde{\Theta}(k) \) on the gap between randomized and deterministic learners, as well as between adaptive and oblivious adversaries in the bandit feedback setting. This stands in contrast to the full information scenario, where adaptive and oblivious adversaries are equivalent, and the gap in mistake bounds between randomized and deterministic learners is a constant multiplicative factor of \(2\).

In addition, our results imply that in some cases the optimal randomized mistake bound is approximately the square-root of its deterministic parallel. Previous results show that this is essentially the smallest it can get.
\end{abstract}

\pagebreak

\setcounter{tocdepth}{2}
\tableofcontents

\pagebreak

\section{Introduction} \label{sec:introduction}
We study the problem of online multiclass classification in the mistake bound model, which measures the worst-case expected number of mistakes made by the learner in the entire game. This problem suggests a wide and appealing landscape of possible different setups, each granting or preventing various resources from the learner or the adversary which generates the input. The resources we discuss are the \emph{information} provided by the adversary, the \emph{adaptivity} of the adversary, and the \emph{randomness} of the learner. We are interested in how the optimal mistake bound is affected by granting or preventing each of those resources. Below, we discuss these questions and present our results.

\subsection{Main questions and results}
We assume that a concept class $\cH \in \cY^{\cX}$ is given, where $\cX$ is a \emph{domain} of instances and $\cY$ is a \emph{label set}. Unless stated otherwise, we restrict the learning scenario to the realizable case, in which the input provided by the adversary is consistent with a concept from $\cH$.

In order to state the results, we define some notation. For a concept class $\cH$, let $\optfullrand(\cH)$ denote the optimal mistake bound for $\cH$ achievable with full-information feedback. Let $\optfulldet(\cH)$ denote the same as $\optfullrand(\cH)$, with the additional restriction that the learner is deterministic. In the bandit feedback model, define $\optbanditdet(\cH)$ analogously to $\optfulldet(\cH)$. When the learner is allowed to be randomized, the mistake bound in the bandit feedback model might change if the game is played against an oblivious or adaptive adversary (we define these types of adversaries below).\footnote{In the full-information model there is no difference between oblivious and adaptive adversaries. Essentially, the reason is that in the full-information model the feedback is deterministic, even if the learner is randomized.} Therefore, we define the notations $\optbanditobl(\cH)$ and $\optbanditadap(\cH)$ to denote the optimal mistake bounds of a randomized learner which receives bandit feedback on its predictions, when the adversary is oblivious or adaptive, respectively.

Unless stated otherwise, $O$ and $\Omega$ notations hide universal constants that do not depend on any parameter of the problem.

\subsubsection{Information} \label{sec:intro-contribution-info}
In the full-information feedback model, the learner learns the correct label at the end of each round, regardless of its prediction. In the bandit feedback model, significantly less information is provided at the end of each round: the adversary only reveals whether the learner's prediction was correct or incorrect. This raises the following natural question studied e.g.\ by \cite{auer1999structural, daniely2013price, daniely2015multiclass, long2020new}.

\smallskip
\begin{tcolorbox}
    \begin{center}
    What is the price the learner pays for receiving only bandit feedback?     
    \end{center}
\end{tcolorbox}

The answer for this question is known for deterministic learners. \cite{auer1999structural} proved the upper bound
\begin{equation} \label{eq:full-vs-bandit-det-upper-bound}
\optbanditdet(\cH) = O(\optfulldet(\cH) \cdot |\cY| \log|\cY|)
\end{equation}
for every concept class $\cH$. A matching lower bound was given in \cite{long2020new, geneson2021note}, which showed that for every natural $k \geq 2$ there exists a concept class $\cH$ with label set of size $|\cY| = k$ such that
\begin{equation} \label{eq:full-vs-bandit-det-lower-bound}
\optbanditdet(\cH) = \Omega(\optfulldet(\cH) \cdot |\cY| \log|\cY|).
\end{equation}
They also find tight guarantees on the constants hidden in the $O, \Omega$ notations.

Finding an analogous result for randomized learners was raised as an open problem in \cite{daniely2013price}.\footnote{The setting considered in \cite{daniely2013price}, is, however, more general than the setting we consider. They consider an  adversary which can decide on a few correct labels in each round, and not only one. Generalizing this work to their multilabel setting is an interesting direction for future work.} We solve this open problem by proving the upper bound stated in the following theorem.

\begin{theorem}[Full-information vs.\ Bandit-feedback]\label{thm:intro-full-vs-bandit-rand}
    For every concept class $\cH$ it holds that
    \[
    \optbanditadap(\cH) = O(\optfullrand(\cH) \cdot |\cY|).
    \]
    Furthermore, for every natural $k \geq 2$ there exists a concept class $\cH$ with $|\cY| = k$ such that
    \[
    \optbanditobl(\cH) = \Omega(\optfullrand(\cH) \cdot |\cY|).
    \]
\end{theorem}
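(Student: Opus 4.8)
The lower bound is elementary; the upper bound (which resolves the open problem) is the substantive part, so I would treat them separately.

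\textbf{Lower bound.} I would take the simplest witness: $\cX=\{x_0\}$ a single instance, $\cY=[k]$, and $\cH=\{h_1,\dots,h_k\}$ the $k$ constant functions, $h_i(x_0)=i$. With full information the learner sees the true label already after round $1$, so a uniformly random first guess costs $1-1/k$ mistakes and nothing thereafter, giving $\optfullrand(\cH)=1-1/k=\Theta(1)$. Now fix an oblivious adversary that presents $x_0$ forever with some hidden target $h_{y^*}$. The only feedback the learner ever gets is ``$\hat y_t$ is wrong'', and repeating a falsified label is a sure mistake, so without loss of generality the learner's consecutive guesses are distinct; i.e.\ its strategy is a distribution over orderings of $[k]$, and against target $y^*$ it makes $(\text{rank of }y^*\text{ in the ordering})-1$ mistakes in expectation. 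Averaging over the $k$ choices of $y^*$ (and using that $\sum_{y^*}(\text{rank of }y^*)$ is always $1+\dots+k$) gives $\frac1k\sum_{j=1}^k (j-1)=\frac{k-1}{2}$, so some target forces $\ge\frac{k-1}{2}$ expected mistakes, and the oblivious adversary commits to it. Hence $\optbanditobl(\cH)\ge\frac{k-1}{2}=\Omega(\optfullrand(\cH)\cdot|\cY|)$. (Replacing $x_0$ by $m$ distinct instances and $\cH$ by all functions $[k]^{\{x_1,\dots,x_m\}}$ scales both sides by $m$, if a witness with larger $\optfullrand$ is wanted.)

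\textbf{Upper bound.} Write $R=\optfullrand(\cH)$. I would run a version-space learner: keep $V\subseteq\cH$ consistent with all feedback, track the potential $\Phi(V)=\optfullrand(V)$ (or a surrogate obeying the minimax recursion $\optfullrand(V)=\sup_x\min_p\max_{y:\,V_y\neq\emptyset}[(1-p_y)+\optfullrand(V_y)]$, with $V_y=\{h\in V:h(x)=y\}$), and in round $t$ draw the prediction $\hat y_t$ from a distribution $q_t$ over the $\le k$ surviving labels: a correct guess refines $V$ to $V_{\hat y_t}$ at no charge (and $\Phi$ cannot go up), a wrong guess refines $V$ to the legitimate set $V\setminus V_{\hat y_t}$ and is charged one mistake. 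I would then aim to choose $\{q_t\}$ together with an amortized accounting so that the total expected number of charged mistakes is $O(k)$ per unit of $\Phi(\cH)$, i.e.\ $O(kR)$; since the bookkeeping is round-local it would be valid against an adaptive adversary.

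\textbf{Main obstacle.} The difficulty is exactly in making that accounting work with a $k$, not a $k\log k$, blow-up. The naive choice -- deterministically play the plurality label under suitable weights -- gives only Auer's $O(k\log k\cdot\optfulldet)=O(k\log k\cdot R)$ bound, and, more tellingly, the hoped-for clean invariant ``each charged mistake costs $\Omega(1/k)$ of $\Phi$'' is simply false: on $\cH$ equal to $k$ constant functions the early mistakes drop $\optfullrand(V)$ by only $\Theta(1/k^2)$, and the full budget is recovered only by the last few mistakes. So the real content of the plan is to identify the correct amortized potential -- I expect one read off directly from the minimax value via the recursion above, weighting the surviving labels by their residual difficulty rather than by raw mass -- and a genuinely randomized $q_t$ (a uniform-like exploration over the ``heavy'' labels), and to prove the amortized $O(kR)$ bound with an absolute constant. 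Adaptivity is folded into the same step: an adaptive adversary can make the revealed feedback depend on the realized guess $\hat y_t$, so the post-round version space is random in an adversarially chosen way, and it is precisely the need to control $\mathbb{E}[\Phi]$ over that randomness that forces $\optfullrand$ on the right-hand side (and $\optbanditadap$, not merely $\optbanditobl$, on the left).
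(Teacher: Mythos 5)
Your lower bound is correct and is essentially the paper's (the paper simply cites the standard $\frac{k-1}{2}$ argument from Daniely--Helbertal for the class of $k$ constant functions); no issues there.

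The upper bound, however, has a genuine gap, and you have in fact identified it yourself: your plan is to run a version-space learner with potential $\Phi(V)=\optfullrand(V)$ and to find an amortized accounting charging $O(k)$ mistakes per unit of $\Phi$, but you then correctly observe that the natural invariant fails (on $k$ constant functions a single elimination moves $\optfullrand(V)$ by only $\Theta(1/k^2)$) and conclude that ``the real content of the plan'' is still to be identified. That real content is precisely the theorem's proof, so what you have is a correct problem analysis rather than a proof. The paper's route is structurally different and avoids the potential $\optfullrand(V)$ altogether. First, it does not eliminate hypotheses under bandit feedback; it reduces to \emph{prediction with expert advice}: run a deterministic full-information learner ($\SOA$) and, whenever its prediction is inconsistent with the bandit feedback, branch it into $k$ copies, one per guessed full-information label. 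This yields $n=k^{d}$ experts with $d=\optfulldet(\cH)\le 2\optfullrand(\cH)$, among which one is inconsistent with the feedback at most $d$ times (Proposition~\ref{prop:reduction}). Second, it proves $\optbanditadap(n,k,r)\le k\log_k n+2kr$ (Theorem~\ref{thm:experts-rand-upper-bound}); the $O(k)$ rather than $O(k\log k)$ factor is obtained not by a clever prediction rule in the primal game but by passing to the \emph{dual} game via von Neumann's minimax theorem (the adversary commits to a label distribution $\tau^{(t)}$ first, the learner best-responds with a single label), and then running an induction on the exponential potential $W(\vec m)=\sum_i m_i k^{2i}$, where $m_i$ counts surviving experts with remaining budget $i$. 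The dual recursion has only two branches per round, which is what makes the induction tractable. Combining the two steps gives $\optbanditadap(\cH)\le 3k\cdot 2\optfullrand(\cH)$. Without some substitute for these two ingredients — in particular for the duality-plus-potential argument that beats the weighted-plurality $k\log k$ barrier — your proposal does not establish the upper bound.
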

The lower bound was noted e.g.\ in \cite{daniely2013price}. We complement it by proving an upper bound that holds for all classes. We prove Theorem~\ref{thm:intro-full-vs-bandit-rand} in Section~\ref{sec:information}. The techniques developed to prove this upper bound include new bounds for \emph{prediction with expert advice} with bandit feedback, which is the main technical contribution of this work. These bounds are presented in Section~\ref{sec:intro-experts}, and a technical overview of the proof can be found in Section~\ref{sec:intro-technical}.

\paragraph{A generalization to the agnostic setting.}
We also generalize results in the spirit of Theorem~\ref{thm:intro-full-vs-bandit-rand} to the agnostic case, in which the best hypothesis in class is inconsistent with the input in $r^\star$ many rounds. Our results do not require that $r^\star$ (or some bound $r \geq r^\star$ on it) is given to the learner. However, throughout most of the paper we assume the stronger \emph{$r$-realizability} assumption, in which an upper bound $r \geq r^\star$ is given to the learner. We explain in Section~\ref{sec:dt} how to remove this assumption using a standard ``doubling trick". Bounds for various setups in the agnostic setting are summarized in Table~\ref{tab:fullinfo-bounds-summary}.

In the agnostic setting, learning algorithms are often measured by their expected \emph{regret}. While in this work we measure algorithms by their mistake bound, note that a mistake bound of an algorithm is always at least as large as its regret. Therefore, since our bounds demonstrate no dependence on the number of rounds $T$, in some cases they provide improvements over the known regret bound $\tilde{O} \mleft(\sqrt{T |\cY| \optfulldet(\cH)} \mright)$ of \cite{daniely2013price}. Specifically, when $r^\star = O(\optfulldet(\cH))$, our results imply a regret bound of $O(|\cY| \optfulldet(\cH))$, for all $T$.

\begin{table}
    \centering
\begin{tabularx}{0.8\textwidth} { 
  | >{\centering\arraybackslash}X 
  | >{\centering\arraybackslash}X 
  | >{\centering\arraybackslash}X | }
\hline
 \backslashbox{Learner}{Adversary} & Oblivious & Adaptive \\ [0.5ex] 
 \hline
 \hspace{3cm} Randomized & $O \mleft(|\cY| (\optfulldet(\cH) + r^\star) \mright)$ \hspace{1mm} \par $\Omega \mleft(|\cY| \cdot \optfulldet(\cH) + r^\star \mright)$ & \hspace{3cm} $\Theta \mleft( |\cY| (\optfulldet(\cH) + r^\star) \mright)$ \\ 
 \hline
 Deterministic & \multicolumn{2}{m{6cm} |}{$O \mleft( |\cY| \log |\cY| (\optfulldet(\cH) + r^\star) \mright)$ \newline \newline $\Omega \mleft(|\cY| (\optfulldet(\cH) \log |\cY| + r^\star) \mright) $} \\ [1ex] 
 \hline
\end{tabularx}
    \caption{Worst-case mistake bounds for learning concept classes with bandit feedback in the agnostic setting, in which the best concept in class is inconsistent with the feedback in $r^\star$ many rounds. No prior knowledge on $r^\star$ is required. The bounds for the randomized setup are obtained by applying Theorem~\ref{thm:hypothesis-r-realizable} to the upper bound in Theorem~\ref{thm:full-vs-bandit-rand} and to the lower bounds in Lemma~\ref{lem:experts-lower-bound-r-realizable} and in \cite{daniely2013price}. The bounds for the deterministic setup are obtained by applying the same theorem to the upper bound of  \cite{auer1999structural}, and to the lower bounds of Lemma~\ref{lem:det-experts-lower-bound} and \cite{long2020new}. The same (up to constant) deterministic bounds were proved independently in \cite{geneson2024bounds} (however, their bounds are in terms of a given bound $r \geq r^\star$).}
    \label{tab:fullinfo-bounds-summary}
\end{table}

\subsubsection{Adaptivity} \label{sec:intro-contribution-adap}
In Section~\ref{sec:intro-contribution-info} we showed that randomness is necessary for obtaining optimal bounds on the cost of bandit feedback. However, within the setup of a randomized learner which receives bandit feedback, there are (at least) two types of adversaries to consider: an oblivious adversary which must decide on the entire input in advance, and an adaptive adversary that can decide on the input on the fly. This raises the following natural question.

\smallskip
\begin{tcolorbox}
    \begin{center}
    Within the bandit feedback model, what is the price the learner pays for playing against an adaptive adversary?     
    \end{center}
\end{tcolorbox}

We solve this question up to logarithmic factors. However, our nearly tight lower bound uses \emph{pattern classes}, which are a generalization of concept classes. In more detail, a pattern class is a set of \emph{patterns} $p \in (\cX \times \cY)^\star$ which is downwards closed (i.e.\ closed under sub-patterns).
Proving the lower bound in the theorem below using only concept classes or showing that this is not possible is an interesting and main problem left open by this work.

\begin{theorem}[Oblivious vs.\ Adaptive Adversaries]\label{thm:intro-bandit-obl-vs-adap}
    For every concept class $\cH$ it holds that
    \[
    \optbanditadap(\cH) = O(\optbanditobl(\cH) \cdot |\cY| \log |\cY|).
    \]
    Furthermore, for every natural $k \geq 2$ there exists a pattern class $\cP$ with $|\cY| = k$ and so that
    \[
    \optbanditadap(\cP) = \Omega(\optbanditobl(\cP) \cdot |\cY|).
    \]
\end{theorem}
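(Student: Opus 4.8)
I would get $\optbanditadap(\cH)=O\bigl(\optbanditobl(\cH)\cdot|\cY|\log|\cY|\bigr)$ from a short chain of reductions. Since a randomized learner may ignore its coins and an adaptive adversary is the strongest kind, $\optbanditadap(\cH)\le\optbanditdet(\cH)$; by Auer et al.'s bound \eqref{eq:full-vs-bandit-det-upper-bound}, $\optbanditdet(\cH)=O(\optfulldet(\cH)\cdot|\cY|\log|\cY|)$; the relation $\optfulldet(\cH)\le 2\,\optfullrand(\cH)$ noted in the introduction gives $\optfulldet(\cH)=O(\optfullrand(\cH))$; and finally $\optfullrand(\cH)\le\optbanditobl(\cH)$, because bandit feedback is a coarsening of full information (so a bandit learner, fed the derived correct/incorrect bit, is also a full-information learner achieving the same number of mistakes against the very same committed sequence), while in the full-information model oblivious and adaptive adversaries coincide. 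Chaining these four inequalities yields the claim. I would also remark that substituting Theorem~\ref{thm:intro-full-vs-bandit-rand} for \eqref{eq:full-vs-bandit-det-upper-bound} in the same chain even removes the logarithmic factor, giving the sharper $\optbanditadap(\cH)=O(\optbanditobl(\cH)\cdot|\cY|)$.

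\textbf{The lower bound: the plan.} I want a pattern class $\cP$ over $\cY=[k]$ on which an adaptive bandit adversary forces $\Omega(k)$ expected mistakes while an oblivious one forces only $O(1)$. The adaptive side is the routine part: I would design $\cP$ as a ``game tree'' with $\Theta(k)$ levels such that, no matter what prefix of predictions $\hat y_1,\dots,\hat y_{t}$ the learner has produced, the set of patterns of $\cP$ consistent with all the declarations ``$y_s\ne\hat y_s$'', $s\le t$, is still nonempty for every $t=O(k)$. An adaptive adversary need not commit to a label: after seeing $\hat y_t$ it just declares ``incorrect'' and keeps this version space alive, forcing a mistake at every one of the $\Theta(k)$ levels -- deterministically, hence also against any randomized learner. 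Verifying non-emptiness of the version space up to level $\Theta(k)$ should reduce to a greedy or Hall-type matching argument on the tree.

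\textbf{The obstacle.} The genuinely hard part is to make the \emph{same} class trivial for an oblivious adversary. By the minimax theorem $\optbanditobl(\cP)=\max_{D}\min_{\text{deterministic }L}\mathbb{E}_{\sigma\sim D}[\#\text{mistakes of }L\text{ on }\sigma]$, so I must exhibit, for \emph{every} distribution $D$ over patterns of $\cP$, a bandit learner erring only $O(1)$ times in expectation on $\sigma\sim D$. The leverage is that a committed adversary cannot react to the learner's private randomness and, crucially, that in the bandit model a \emph{correct} guess reveals the true label; the intended learner ``locks on'' to the realized pattern and then tracks it with error probability $O(1/k)$ per level, $O(1)$ overall. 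What I expect to dominate the work is engineering a global constraint on $\cP$ that simultaneously (i) forces the conditional distribution of $y_t$ to concentrate once the learner has pinned down the history, so that per-level error is $O(1/k)$ and the learner essentially never gets ``lost''; (ii) is nonetheless loose enough that, by deferring rather than committing, an adaptive adversary can still evade the learner's prediction at every level; and (iii) is genuinely a property of a pattern class rather than a concept class -- this tension (a naive per-level argument only gives a constant-factor advantage to the adaptive adversary) is exactly why the construction seems to need the extra freedom of patterns, and why the analogous question for concept classes is left open. Once $\cP$ is fixed, the $O(1)$ oblivious bound would follow by writing out the tracking learner and telescoping its per-level error probabilities; I would also double-check that the defining constraint is closed under sub-patterns, adjusting the encoding (e.g.\ by tagging each instance with its round index) if needed.
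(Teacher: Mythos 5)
Your upper bound is correct and is essentially the paper's argument: the paper derives it as an immediate corollary of \eqref{eq:full-vs-bandit-det-upper-bound} via exactly the chain $\optbanditadap\le\optbanditdet=O(\optfulldet\cdot|\cY|\log|\cY|)$, $\optfulldet\le 2\optfullrand\le 2\optbanditobl$. Your side remark is also sound: since Theorem~\ref{thm:full-vs-bandit-rand} gives $\optbanditadap(\cP)\le 6|\cY|\cdot\optfullrand(\cP)$ and $\optfullrand\le\optbanditobl$, one indeed gets the sharper $O(\optbanditobl\cdot|\cY|)$.

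The lower bound, however, is not a proof but a plan whose central object is never constructed. You correctly identify that the hard part is making the class easy for an oblivious adversary, but you then only list desiderata (i)--(iii) for a hypothetical ``game tree'' class and defer the construction. Worse, the profile you are aiming for --- adaptive cost $\Omega(k)$ versus oblivious cost $O(1)$ --- cannot be realized: for any class in which some instance admits all $k$ labels (which is needed for any adversary, adaptive or not, to exploit the full label set), an oblivious adversary already forces $\Omega(k)$ expected mistakes by placing a uniformly random label on that instance (\cite[Claim 2]{daniely2013price}, used repeatedly in the paper). So the separation must be sought at a higher scale, and the agnostic budget is exactly the extra freedom that pattern classes provide. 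The paper's construction is $\cP=\cP(\cH,B_k)$ where $\cH$ is the class of the $k$ constant functions and every hypothesis gets inconsistency budget $r=k$. Against an adaptive adversary, Lemma~\ref{lem:experts-lower-bound-r-realizable} gives $\optbanditadap(\cP)=\Omega(kr)=\Omega(k^2)$: the adversary can defer committing to labels and remain $r$-consistent because the least-predicted label is inconsistent with at most a $1/k$ fraction of the ``incorrect'' feedbacks. Against an oblivious adversary, Proposition~\ref{prop:k-oblivious-upper-bound} gives $\optbanditobl(\cP)=O(k+r)=O(k)$ via a two-phase learner (uniform guessing until enough correct feedbacks identify the plurality label, then committing to it), whose analysis needs a Chernoff and a Markov bound; the key point, as flagged in Remark~\ref{rem:strong-vs-weak-realizable}, is that an oblivious adversary is $r$-consistent in the strong sense, so a single label is correct in all but $r$ rounds of the committed sequence and can be ``locked onto.'' Your sketch gestures at this lock-on intuition but supplies neither the class nor the $O(k)$ oblivious learner nor its analysis, so the lower bound remains unproved as written.
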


The upper bound is an immediate corollary of the upper bound \eqref{eq:full-vs-bandit-det-upper-bound}. We prove the lower bound in Section~\ref{sec:adaptivity}.

\begin{remark}[Concept classes vs.\ Pattern classes]
Pattern classes are a more expressive generalization of concept classes (see a formal definition in Section~\ref{sec:preliminaries}). Most desirably, we would like to prove upper bounds for all pattern classes, accompanied with tight lower bounds that hold for hard concept classes. We manage to do so in all of our results (for the sake of simplicity, we omitted it from the theorem statements), except for Theorem~\ref{thm:intro-bandit-obl-vs-adap}, in which the upper bound does hold for all pattern classes, but the lower bound holds only for hard pattern classes.
\end{remark}

\subsubsection{Randomness} \label{sec:intro-contribution-rand}
In Section~\ref{sec:intro-contribution-adap} we discussed two different adversarial models within the setup of randomized learners. However, what happens if the learner cannot use randomness?
A folklore result on the full-information feedback model states that for every concept class $\cH$ with $|\cY| = 2$ it holds that
\begin{equation} \label{eq:full-rand-vs-det}
   \optfullrand(\cH) \leq \optfulldet(\cH) \leq  2 \cdot \optfullrand(\cH), 
\end{equation}
and that there are classes attaining each equality. Extending this to $|\cY| > 2$ is straightforward by noting the following randomized-to-deterministic full-information algorithm conversion: If $A$ is a randomized algorithm, then the mistake bound of the deterministic algorithm $A'$ who always predicts the most probable prediction of $A$ is at most twice the mistake bound of $A$.  Indeed, whenever $A'$ makes a mistake, $A$ makes a mistake with probability at least $1/2$.
This raises the following natural question, also asked in \cite{daniely2013price}.

\smallskip
\begin{tcolorbox}
    \begin{center}
    Within the bandit feedback model, what is the price the learner pays for not using randomness?     
    \end{center}
\end{tcolorbox}

We resolve this question up to logarithmic factors.

\begin{theorem}[Randomized vs.\ Deterministic]\label{thm:intro-bandit-rand-vs-det}
    For every concept class $\cH$ it holds that
    \[
    \optbanditdet(\cH) = O(\optbanditobl(\cH) \cdot |\cY| \log |\cY|).
    \]
    Furthermore, for every natural $k \geq 2$ there exists a concept class $\cH$ with $|\cY| = k$ such that
    \[
    \optbanditdet(\cH) = \Omega(\optbanditadap(\cH) \cdot |\cY|).
    \]
\end{theorem}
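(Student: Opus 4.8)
The upper bound is essentially immediate: combining $\optbanditdet(\cH) = O(\optfulldet(\cH) \cdot |\cY| \log|\cY|)$ from~\eqref{eq:full-vs-bandit-det-upper-bound} with the folklore fact $\optfulldet(\cH) \le 2\optfullrand(\cH)$ (which holds for all $|\cY|$, by the most-probable-prediction conversion discussed in Section~\ref{sec:intro-contribution-rand}) and then with $\optfullrand(\cH) \le \optbanditadap(\cH)$ — wait, we actually want it in terms of $\optbanditobl$, so the cleanest route is to use that a randomized bandit learner against an oblivious adversary can simulate a randomized full-information learner? That is not quite right either, so let me think.

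Let me reconsider the upper bound. We have $\optfullrand(\cH) \le \optbanditobl(\cH)$ trivially (bandit feedback is weaker than full information, and obliviousness only restricts the adversary — actually both directions need care; the clean statement is $\optfullrand(\cH) \le \optfulldet(\cH)$ and $\optfulldet(\cH) \le \optbanditdet(\cH)$, but relating $\optbanditobl$ to full-information quantities requires that full-information equals oblivious-full, which the footnote asserts). So: $\optbanditdet(\cH) = O(\optfulldet(\cH)|\cY|\log|\cY|) = O(\optfullrand(\cH)|\cY|\log|\cY|) = O(\optbanditobl(\cH)|\cY|\log|\cY|)$, where the last step uses $\optfullrand(\cH) = \optfull^{\mathrm{obl,rand}}(\cH) \le \optbanditobl(\cH)$ since an oblivious-adversary bound only weakens under less feedback. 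I would spell this chain out carefully, since the subtlety is entirely about which (adversary, feedback, randomness) combinations dominate which, and the paper has the infrastructure (and footnote) to make each inequality rigorous.

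The substantive content is the lower bound: exhibiting, for each $k \ge 2$, a concept class $\cH$ with $|\cY| = k$ and $\optbanditdet(\cH) = \Omega(\optbanditadap(\cH) \cdot k)$. The plan is to reuse (a variant of) the hard instance already constructed for the deterministic separation~\eqref{eq:full-vs-bandit-det-lower-bound} of \cite{long2020new, geneson2021note}: a class with $\optfulldet(\cH) = O(d)$ for some parameter $d$ but $\optbanditdet(\cH) = \Omega(dk\log k)$, or more simply $\Omega(dk)$. To finish, I must upper-bound $\optbanditadap(\cH)$ by $O(d \log k)$ or so — ideally by $O(\optfullrand(\cH))$ up to logs — which requires a \emph{randomized} bandit learner that, against an adaptive adversary, does nearly as well as the full-information optimum. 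This is exactly where Theorem~\ref{thm:intro-full-vs-bandit-rand} enters but in the wrong direction: it says $\optbanditadap = O(\optfullrand \cdot k)$, which is too weak by a factor of $k$. So the real work is to design the hard class so that its \emph{randomized} bandit complexity is small — e.g., a "find the hidden permutation / hidden symbol" type class where a randomized learner can guess and use the binary feedback as a search primitive (halving via randomization), achieving $O(\log k)$ or $O(d\log k)$ expected mistakes, while any deterministic learner is forced through $\Omega(k)$ wrong guesses per "level". Concretely I would take a Littlestone-like tree of depth $d$ where at each node the label is one of $k$ symbols chosen adversarially; a deterministic bandit learner needs $\Theta(k)$ probes per node to identify the branch, giving $\Omega(dk)$; a randomized learner against an adaptive adversary can instead play the uniform distribution and, crucially, an adaptive adversary responding to a random guess cannot force more than $O(1)$ expected mistakes per node to reveal enough — wait, that gives $O(d)$, not matching — so I'd need to also show $\optbanditadap(\cH) = \Omega(d)$ isn't an issue and that $\optbanditdet(\cH) = \Omega(dk)$ holds, yielding the ratio $\Omega(k)$.

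The main obstacle is pinning down the randomized adaptive upper bound on the hard class: I need $\optbanditadap(\cH) = O(\optbanditdet(\cH)/k)$, and proving an upper bound against an \emph{adaptive} adversary is the delicate part, since the adversary sees the learner's (randomized) predictions and can adapt the tree branch to whatever symbol the learner is least likely to probe next — so I must argue that uniform random guessing is robust to this, e.g. via a minimax/potential argument showing the adversary gains at most $O(1/k)$ fraction of the deterministic cost per node in expectation. I expect the cleanest proof routes this through the experts/SOA machinery of Section~\ref{sec:information}: analyze the randomized bandit variant of the standard optimal algorithm ($\RSOA$ or $\BRandSOA$) on the specific class and show its expected mistake bound is $O(d\log k)$, then separately give a direct adversary strategy forcing $\Omega(dk)$ mistakes on any deterministic learner. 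I would present the construction, state the two bounds as lemmas, prove the deterministic lower bound by an explicit adversary, and prove the randomized upper bound by invoking the relevant algorithm from the body of the paper with its analysis specialized to this class.
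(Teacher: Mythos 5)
Your upper bound chain is correct and is exactly what the paper means by ``immediate corollary of \eqref{eq:full-vs-bandit-det-upper-bound}'': $\optbanditdet(\cH) = O(\optfulldet(\cH)\,|\cY|\log|\cY|) = O(\optfullrand(\cH)\,|\cY|\log|\cY|) = O(\optbanditobl(\cH)\,|\cY|\log|\cY|)$, using the factor-$2$ randomized-to-deterministic conversion and the fact that against an oblivious adversary full information dominates bandit feedback.

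The lower bound, however, has a genuine gap, and you half-noticed it yourself. Your candidate hard class --- a $k$-ary Littlestone tree of depth $d$ in which the adversary may steer down any of the $k$ branches at each node --- gives no separation: by the very lower bound invoked throughout the paper (\cite[Claim 2]{daniely2013price}, and Lemma~\ref{lem:experts-lower-bound-realizable}), identifying the correct label among $k$ equally possible candidates under bandit feedback costs $\Omega(k)$ \emph{expected} mistakes even for a randomized learner against an oblivious adversary. So your claim that an adaptive adversary ``cannot force more than $O(1)$ expected mistakes per node'' is false; the randomized learner also pays $\Theta(dk)$ on such a class, and the ratio is $O(1)$. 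The construction the paper actually uses (Theorem~\ref{thm:bandit-rand-vs-det}) is asymmetric in a way your proposal misses: take $\cY = \{0,1,\dots,k\}$, $\cX = \{x_1,\dots,x_{dk}\}$, and let $\cH$ consist of all functions equal to some fixed positive label $y$ everywhere except on at most $d$ instances, which are labeled $0$. A randomized learner predicts $0$ with probability $1/2$ and each positive label with probability $1/(2k)$ until it receives positive feedback on a positive label; it pays the $\Theta(k)$ identification cost only \emph{once} (an expected $O(k)$ mistakes to hit the hidden positive label), and thereafter each of the at most $d$ exceptional instances is a binary $\{0,y\}$ decision costing $O(1)$ expected mistakes, for a total of $O(d+k)$. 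A deterministic learner, by contrast, can be given negative feedback on all $dk$ rounds, after which the adversary picks a positive label $y^\star$ that the learner predicted on at most $d$ instances and declares exactly those instances to be the $0$-labeled exceptions; every prediction is then wrong, forcing $dk$ mistakes. Setting $d = k$ yields $\optbanditdet(\cH) = \Theta(k^2)$ versus $\optbanditadap(\cH) = \Theta(k)$, i.e., the claimed $\Omega(|\cY|)$ separation. The missing idea, in short, is that the class must make the $k$-way search happen only once for the randomized learner, while still letting the adversary string a deterministic learner through $d$ wasted guesses on each of the $k$ candidate positive labels.
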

The upper bound is an immediate corollary of the upper bound \eqref{eq:full-vs-bandit-det-upper-bound}. We prove the lower bound in Section~\ref{sec:randomness}.

The lower bound has a significant consequence on the problem of finding a combinatorial dimension that quantifies the optimal mistake bound of randomized learners in the bandit feedback model. In the full-information model, the optimal deterministic mistake bound, which is captured precisely by the combinatorial \emph{Littlestone dimension} \cite{littlestone1988learning}, quantifies the optimal mistake bound of randomized learners as well, as demonstrated in \eqref{eq:full-rand-vs-det}.
In \cite{daniely2015multiclass}, a new combinatorial dimension, coined the \emph{Bandit-Littlestone dimension}, is introduced and proved to capture the exact optimal mistake bound of deterministic learners within the bandit feedback model. In \cite{daniely2013price}, some hope is expressed for this dimension to quantify the mistake bound of randomized learners as well, similarly to the case of full-information feedback. However, our lower bound shows that this is not the case. As we show in Section~\ref{sec:randomness}, the classes used in the lower bound may be chosen such that $\optbanditadap(\cH) = O\mleft(|\cY| \mright)$, and thus $\optbanditadap(\cH)$ is only roughly $\sqrt{\optbanditdet(\cH)}$. On the other hand, the upper bound in Theorem~\ref{thm:intro-bandit-rand-vs-det} together with the bound $\optbanditobl(\cH) \geq \frac{|\cY|-1}{2}$ (e.g.\ by \cite{daniely2013price}) shows that  $\sqrt{\optbanditdet(\cH)}$ is roughly the smallest that $\optbanditobl(\cH)$ can get.

\subsection{Bounds for \emph{prediction with expert advice}} \label{sec:intro-experts}

A main technical result proved in this work, which is interesting in its own right, is a nearly optimal randomized mistake bound for the problem of \emph{prediction with expert advice} in the $r$-realizable setting. In this problem, $n$ experts make deterministic predictions in every round, and it is promised that the best expert is inconsistent with the feedback for at most $r$ many times throughout the entire game. As in Section~\ref{sec:intro-contribution-info}, the knowledge of $r$ is actually not required, and $r$ can be replaced with the actual number of inconsistencies of the best expert, $r^\star$. In Section~\ref{sec:dt}, we explain how to remove the assumption that $r$ is given to the learner.

The learner should aggregate the experts' predictions to make their own (possibly randomized) predictions, while minimizing the expected number of mistakes made. This problem was extensively studied in the binary/full-information settings (see, e.g., \cite{vovk1990aggregating, littlestone1994weighted, cesa1996line, cesa1997use, abernethy2006continuous, mukherjee2010learning, branzei2019online, filmus2023optimal}) and some (asymptotically or even better than that) tight bounds were proved, in both deterministic and randomized (learner) settings. Already in early works on this problem, \cite{vovk1990aggregating, littlestone1994weighted} showed that the optimal mistake bound when $|\cY| = 2$ is $\Theta(\log n + r)$ for both randomized and deterministic learners.

In this work, we consider this problem in the bandit feedback model. In fact, the main tool used to prove Theorem~\ref{thm:intro-full-vs-bandit-rand} is the following optimal (up to constant factors) bound on $\optbanditadap(n,k,r)$, which is the optimal mistake bound achievable by a randomized learner which plays against an adaptive adversary that provides bandit feedback, when there are $k \geq 2$ many labels, $n \geq k$ many experts, and the best expert is inconsistent with the feedback for at most $r \geq 0$ many times.
\begin{theorem} \label{thm:intro-experts}
    For every $n \geq k \geq 2$ and $r \geq 0$ it holds that
    \[
    \optbanditadap(n,k,r) = \Theta \mleft( k (\log_{k} n + r) \mright).
    \]
\end{theorem}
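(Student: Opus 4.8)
The plan is to prove the upper and lower bounds separately.

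\textbf{Upper bound.}
For $\optbanditadap(n,k,r)=O(k(\log_k n+r))$ I would analyze a randomized, bandit-adapted version of the Standard Optimal Algorithm, call it $\BRandSOA$, that maintains a positive weight $w_e$ for each of the $n$ experts, initialized to $1$. In each round the experts' predictions partition the total weight $W=\sum_e w_e$ into buckets $W_1,\dots,W_k$ (one per label), and the learner predicts label $i$ with probability $p_i$, where $(p_1,\dots,p_k)$ is an \emph{adaptive} interpolation between the weight-proportional distribution $W_i/W$ and the uniform distribution over the nonempty buckets — biased toward proportional when the bucket profile is skewed and toward uniform when it is balanced. On bandit feedback: if the learner played $\ell$ and was correct, nothing changes (and in the realizable case this additionally pins down the true label, collapsing the version space to bucket $\ell$); if it was incorrect, multiply $w_e$ by $\tfrac12$ for every expert $e$ that predicted $\ell$ this round. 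The analysis uses the potential $\Phi=\ln W$: the best expert is downweighted at most $r$ times, so $W\ge 2^{-r}$ always and hence the total decrease of $\Phi$ is at most $\ln n+r\ln 2$. The target is to show that the mixing rule forces $\Phi$ to drop, \emph{amortized over the run}, by $\Omega(\ln k/k)$ per mistake; I would make this precise as an induction proving $\optbanditadap(n,k,r)\le Ck(\log_k n+r)$, where at each step the learner's $(p_i)$ makes the expected post-round potential drop dominate $\tfrac{\ln k}{Ck}$ times the probability of a mistake, for every bucket profile the (adaptive) adversary might present; stating everything in terms of the learner's distribution rather than its realized prediction makes the adaptive case automatic, and the additive $O(kr)$ falls out of the $r\ln 2$ in the potential budget.

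\textbf{Main obstacle.}
The hard part is that the per-round progress bound is false in isolation: a naive halving/multiplicative-weights strategy loses a $\log k$ factor precisely because, when the weight is spread over $\approx k$ near-equal buckets and the adversary designates a light bucket as correct, each individual mistake removes only a $\approx 1/k$ fraction of the weight, i.e.\ $\Phi$ drops by $\Theta(1/k)$, not $\Theta(\ln k/k)$. The resolution — the technical heart of the upper bound — is that such a ``slow'' phase can persist for at most $k$ mistakes, after which the version space necessarily collapses onto a light bucket, a single step of $\Theta(\ln k)$ potential drop that pays for the whole phase. Turning this into a clean proof requires a careful case analysis over the shape of the bucket profile, a choice of the interpolation parameter that covers every regime, and checking that the induction closes with an absolute constant.

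\textbf{Lower bound.}
For $\optbanditadap(n,k,r)=\Omega(k(\log_k n+r))$ I would exhibit an adversary (equivalently, a hard experts instance) forcing this many mistakes in expectation against every randomized learner. The realizable term $\Omega(k\log_k n)$ uses a balanced $k$-ary tree of depth $d=\Theta(\log_k n)$ whose $\approx k^d=n$ leaves index the experts, with each internal node ``fattened'' into a block of about $k$ consecutive rounds: while the game sits at a node, every expert in the $j$-th child subtree predicts the child-label $j$, and the adversary — after seeing the learner's distribution $p^{(t)}$ — declares the true label to be a not-yet-removed child minimizing $p^{(t)}$, so the learner is correct with probability at most $1/(\text{\# remaining children})$ and each mistake eliminates exactly one child. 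A short computation shows each node costs $\Omega(k)$ mistakes in expectation, the game descends one level per node, and realizability holds because the surviving child's subtree is consistent with all feedback given at that node; this yields $\Omega(k\log_k n)$ total. The additive $\Omega(kr)$ comes from concatenating $r$ further such fattened nodes, each forcing $\Omega(k)$ mistakes at the cost of a single inconsistency of the best expert (essentially the argument of \cite{daniely2013price}); taking the larger of the two constructions — or, with a little bookkeeping, one construction combining both — matches the upper bound up to constants.
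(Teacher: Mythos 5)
Your lower bound is essentially sound and close in spirit to the paper's: the realizable term matches Lemma~\ref{lem:experts-lower-bound-realizable} (the paper's version is even oblivious: it fixes the instance sequence and a random target expert, rather than adaptively steering toward the least likely child), and the additive $\Omega(kr)$ term is the content of Lemma~\ref{lem:experts-lower-bound-r-realizable}, whose inconsistency bookkeeping (half the budget spent on rounds where the learner is correct, half on rounds where the best expert's label is predicted) you would need to reproduce, but which works out as you suggest.

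The upper bound, however, has a genuine gap, and it sits exactly where you placed the ``technical heart.'' Two problems. First, the amortization you invoke is never carried out, and as described it fails for $r>0$: with halving, a correct prediction in the $r$-realizable case does \emph{not} collapse the version space onto a light bucket --- it merely halves the weight of every expert in the other buckets, so the potential $\Phi=\ln W$ drops by only about $\ln 2$, not $\Theta(\ln k)$. In the balanced-bucket regime the adversary therefore concedes at most $O(1/k)$ expected potential drop per expected mistake \emph{for every choice of the learner's distribution} (since $\min_y \pi_y \le 1/k$, the adversary designates that $y$ as correct), and no single round ever repays the deficit; a purely local/amortized argument with weight factor $\tfrac12$ loses the $\log k$ factor you are trying to save. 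Second, even granting the amortization, closing the induction in the primal game requires controlling $k$ distinct recursive branches simultaneously, which is why the paper does not attempt it. The paper's resolution is different on both counts: (i) it passes to an equivalent \emph{dual} game via the minimax theorem (Lemma~\ref{lem:dual}), in which the adversary commits to a label distribution $\tau$ first and the learner simply plays a label with $\tau_y\ge 1/k$ (or exploits a degenerate branch), reducing each step to a two-term recursion; and (ii) it assigns an expert with remaining budget $i$ the potential $k^{2i}$ --- i.e., each inconsistency costs a factor $k^{2}$, not $2$ --- so that a correct prediction is genuinely worth $\Theta(\ln k)$ of potential and the per-round inequality $f_k(\beta)\le -1$ of Lemma~\ref{lem:experts-rand-upper-bound-help-2} holds pointwise, with no amortization needed. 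If you want to salvage your primal, weight-based route, replacing the halving factor by $1/k^{2}$ is the essential missing ingredient; but you would still owe the full case analysis over bucket profiles that the dual formulation renders unnecessary.
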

This bound generalizes the result $\optbanditadap(n,2,r) = \Theta(\log n + r)$ mentioned above.\footnote{When there are only $2$ labels, full-information and bandit feedback are the same.} In the case where the adversary is oblivious, we prove the inferior lower bound
\[
\optbanditobl(n,k,r) = \Omega \mleft( k \log_{k} n + r \mright)
\]
which is tight as long as $r = O(\log_k n)$. Proving tight bounds against an oblivious adversary for all values of $r$ remains open.

We also consider this problem in the deterministic (learner) setting, for the sake of completeness (we do not use the deterministic bound to prove any other results). We prove all bounds in Section~\ref{sec:experts}, and summarize them in Table~\ref{tab:experts-bounds-summary}. Similarly to the results in Section~\ref{sec:intro-contribution-info}, since our mistake bounds demonstrate no dependence on the number of rounds $T$, they improve over the known regret bound $O\mleft( \sqrt{T k \log n} \mright)$ of \cite{auer2002nonstochastic} whenever $r = O(\log n)$.

\begin{table}
    \centering
\begin{tabularx}{0.8\textwidth} { 
  | >{\centering\arraybackslash}X 
  | >{\centering\arraybackslash}X 
  | >{\centering\arraybackslash}X | }
\hline
 \backslashbox{Learner}{Adversary} & Oblivious & Adaptive \\ [0.5ex] 
 \hline
 \hspace{3cm} Randomized & $O \mleft(k(\log_k n + r^\star) \mright)$ \hspace{1cm} \par $\Omega \mleft( k \log_k n + r^\star \mright)$ &  \hspace{1cm} $\Theta \mleft(k(\log_k n + r^\star) \mright)$ \\ 
 \hline
 Deterministic & \multicolumn{2}{m{6cm} |}{$\Theta \mleft(k (\log(n/k) + r^\star + 1) \mright)$} \\ [1ex] 
 \hline
\end{tabularx}
    \caption{Mistake bounds for \emph{prediction with expert advice}. The size of the label set is $k \geq 2$, there are $n \geq k$ experts, and the best expert is inconsistent with the feedback for $r^\star$ many times. No prior knowledge on $r^\star$ is required. The randomized bounds are due to Theorem~\ref{thm:experts-rand-upper-bound} and Lemmas~\ref{lem:experts-lower-bound-realizable} and~\ref{lem:experts-lower-bound-r-realizable}. The deterministic bounds are stated in Theorem~\ref{thm:experts-det}.}
    \label{tab:experts-bounds-summary}
\end{table}

\subsection{Technical overview} \label{sec:intro-technical}

In this section, we explain the idea behind the proof of the upper bound \(\optbanditadap(\cH) = O(\optfullrand(\cH) \cdot |\cY|)\) in Theorem~\ref{thm:intro-full-vs-bandit-rand}, which is our main technical contribution.

There are two main ingredients used in the proof of Theorem~\ref{thm:intro-full-vs-bandit-rand}:
\begin{enumerate}
    \item A reduction from the problem of learning a concept class $\cH$ to an instance of \emph{prediction with expert advice} with $|\cY|^{\optfulldet(\cH)}$ many experts, $|\cY|$ many labels, and where the best expert is inconsistent with the feedback for at most $\optfulldet(\cH)$ many rounds.
    \item The upper bound for \emph{prediction with expert advice} stated in Theorem~\ref{thm:intro-experts}.
\end{enumerate}

Indeed, having both items, we take the upper bound of item~(2) with the parameters specified in item~(1), obtaining
\[
\optbanditadap \mleft(|\cY|^{\optfulldet(\cH)}, |\cY|, \optfulldet(\cH) \mright) = O \mleft(|\cY| \cdot \optfulldet(\cH) \mright).
\]
By item~(1), this bound holds also for the problem of learning the concept class $\cH$. Since $\optfulldet(\cH) \leq 2 \optfullrand(\cH)$, we can replace $\optfulldet(\cH)$ with $\optfullrand(\cH)$. It remains to sketch the ideas behind the proofs of items~(1)~and~(2), which we do in the following subsections. Of the two items, the proof of Item~(2) is the main technical novelty.

\subsubsection{Proof idea of Item~(1)}

In the problem of \emph{prediction with expert advice}, the expert predictions are generated in a completely adversarial fashion. That is, no assumptions are made on the way those predictions are generated. Therefore, any upper bound for \emph{prediction with expert advice} holds in particular for the case where the expert predictions are determined by an algorithm chosen by the learner. We can exploit this property to reduce the problem of learning a concept class $\cH$ to an instance of \emph{prediction with expert advice} with $|\cY|^{\optfulldet(\cH)}$ many experts, $|\cY|$ many labels, and where the best expert is inconsistent with the feedback for at most $\optfulldet(\cH)$ many rounds. The idea, described below, is inspired by \cite{long2020new, hanneke2021online}.

\smallskip

Let $A$ be an optimal deterministic algorithm for learning $\cH$ given full-information feedback. We run in parallel several copies of $A$ arranged in tree form. These copies will function as the experts fed into an optimal algorithm for \emph{prediction with expert advice}.

Initially, there is a single copy of $A$. At every round, for each copy of $A$, if its prediction is consistent with the bandit feedback, we do nothing. Otherwise, if the copy is at depth $D < \optfulldet(\cH)$, we split it into $|\cY|$ different copies at depth $D+1$, each ``guessing'' a different full-information feedback for the problematic example; if the copy is at depth $D$, we do nothing.

Since the tree has depth at most $\optfulldet(\cH)$, there are at most $|\cY|^{\optfulldet(\cH)}$ many copies of $A$. The copy of $A$ which always guessed correctly corresponds to an expert whose predictions are inconsistent with the feedback for at most $\optfulldet(\cH)$ many rounds.

\smallskip


A formal statement (and proof) of this reduction can be found in Proposition~\ref{prop:reduction}.

\subsubsection{Proof idea of Item~(2)}
In the context of classification problems, the realizable case in which some concept from the learned class accurately explains the correct classification is often simpler than the agnostic case. Therefore, for the sake of understanding the proof of the upper bound on $\optbanditadap(n,k,r)$ stated in Theorem~\ref{thm:intro-experts}, we first outline a proof for the upper bound on $\optbanditadap(n,k):= \optbanditadap(n,k,0)$. This proof has the same flavor of the proof for general $r$, but is simpler. We then explain how to adapt the proof for general $r$.

When $r=0$, all \emph{living} experts (that is, experts which have been consistent with the feedback so far) are identical: every expert which is inconsistent with the feedback in some round is immediately eliminated, and its predictions need not be taken into account any longer. By applying the law of total expectation, the optimal mistake bound can thus be described by the following optimization problem. Fix $k \geq 2$, and for every $n \geq 1$, let $V(n) = \optbanditadap(n,k)$. We have
\begin{equation} \label{eq:intro-primal}
    V(n) =\max_{\vec{\alpha}} \min_{\pi} \max_{y \in \cY} \mleft[ \pi_y V(\alpha_y n) + \sum_{y' \neq y} \pi_{y'} (1 + V(\alpha_{y'}n)) \mright],
\end{equation}
where $\vec{\alpha} \in [0,1]^k$ is a $k$-ary vector whose $y$'th entry specifies the fraction of living experts predicting $y$; $\pi$ is the distribution used by the learner to draw the prediction; and $y \in \cY$ is the correct label chosen by the adversary.

Since all living experts are identical, the natural intuition suggests that the optimal choice of $\vec{\alpha}$ is to let every entry in it be $1/k$, in which case the optimal choice of $\pi$ would be the uniform distribution. In this case, it  does not matter which $y$ the adversary chooses as the correct label. Applying this intuition to \eqref{eq:intro-primal} results in the recurrence relation
\begin{equation} \label{eq:intro-balanced}
    V(n) \leq V(n/k)/k + (1-1/k)(1 + V((1-1/k)n)).
\end{equation}

Solving this recurrence relation gives $V(n) \leq k \log_{b(k)} n$, where $b(k) = \frac{k^k}{(k-1)^{k-1}}$, which implies the statement in Theorem~\ref{thm:intro-experts} since $b(k) \geq k$. It remains to show that the intuition that led us from \eqref{eq:intro-primal} to \eqref{eq:intro-balanced} is indeed correct. A natural approach would be to directly prove that $V(n)$, as defined in \eqref{eq:intro-primal}, satisfies $V(n) \leq k \log_{b(k)} n$ by induction on $n$. However, note that \eqref{eq:intro-primal} contains $k$ different recursive calls, so an inductive proof seems complicated.
To overcome this, we use minimax duality, which significantly simplifies \eqref{eq:intro-primal}, as we now explain.

Observe that once  $\vec{\alpha}$ is fixed, each round of the game is a zero-sum game between two randomized parties.\footnote{The adversary described in \eqref{eq:intro-primal} is deterministic, but we can allow it to be randomized without changing the value of $V(n)$.} Therefore, we can apply von~Neumann's minimax theorem and obtain a \emph{dual} game which is equivalent to the \emph{primal} (original) game in terms of its optimal mistake bound.\footnote{Similar usage of minimax duality appears in previous work, for example \cite{abernethy2009stochastic}.} In the dual game, the adversary first chooses a distribution over the labels from which the correct label is drawn, and then the learner chooses its prediction. When the dual game is considered, the definition of $V(n)$ from \eqref{eq:intro-primal} becomes
\begin{equation} \label{eq:intro-dual}
    V(n) =\max_{\vec{\alpha}, \pi} \min_{y \in \cY} \mleft[ \pi_y V(\alpha_y n) + (1- \pi_y) (1 + V((1 - \alpha_y) n)) \mright].
\end{equation}

The notation is the same as in \eqref{eq:intro-primal}, except that now $\pi$ is the distribution used by the adversary to draw the correct label, and $y \in \cY$ is the learner's prediction. Eq.~\eqref{eq:intro-dual} contains only two recursive calls, making an inductive proof much easier (but still quite technical).







\medskip

We now explain how to adapt this approach to work for general $r$, when $r$ is given. In Section~\ref{sec:dt}, we explain how to remove the assumption that $r$ is given to the learner. When $r>0$, there is an inherent difference between the experts: different experts have been inconsistent with the feedback for a different number of rounds, and so can afford a different number of future inconsistencies. Therefore, we need some mechanism that differentiates between experts with different inconsistency budgets.

Inspired by \emph{weighted prediction} techniques (See \cite[Section 2.1]{cesa2006prediction} and bibliographic remarks of Section~2), we rely on the fact that experts with higher budgets have higher \emph{potential} to damage the learner, so we choose an \emph{expert potential function} that matches the potential of an expert to damage the learner. As such, the potential should be an increasing function of the budget. We use exponential potentials and give an expert with budget $i$ (corresponding to $i$ more allowed inconsistencies) a potential of $k^{2i}$. We now employ roughly the same argument outlined above for the realizable case. The main difference is that $V(\cdot)$ depends on the total potential rather than on the number of living experts. As a result, the initial number of living experts $n$ is replaced with the initial total potential $n \cdot k^{2r}$, which gives
\[
V(n \cdot k^{2r}) \leq k \log_k (n \cdot k^{2r}) = k \log_k n + 2kr,
\]
as stated in Theorem~\ref{thm:intro-experts}.

\subsection{Related work}

We outline connections between previous work to the problems in the focus of this work: the role of various resources in realizable multiclass online learning of concept classes, and \emph{prediction with expert advice} with bandit feedback.

\subsubsection{Information}
The role of information in learning concept classes was previously studied in \cite{auer1999structural, daniely2015multiclass, daniely2013price, long2020new}. To the best of our knowledge, \cite{auer1999structural} was the first to show that $\optbanditdet(\cH) = O(\optfulldet(\cH) |\cY| \log |\cY|)$ for every class $\cH$. \cite{long2020new, geneson2021note} improved the constant in the upper bound of \cite{auer1999structural}, and showed that it is in fact tight up to a $1 + o(1)$ factor by finding a sequence of concept classes demonstrating a matching separation between $\optbanditdet(\cH)$ and $\optfulldet(\cH)$. Our work proves an analogous result for randomized learners (Theorem~\ref{thm:intro-full-vs-bandit-rand}), with the exception that we do not identify the exact leading constant in the optimal mistake bound.


The agnostic case was studied
in \cite{daniely2013price}, which proved the upper bound $\tilde{O}\mleft(\sqrt{T |\cY| \optfulldet(\cH}) \mright)$ on the optimal regret, where $T$ is the horizon, and showed that the upper bound is best-possible up to logarithmic factors. 
The PAC learning setting was studied
in \cite{daniely2015multiclass}, which showed that the price of bandit feedback is $\tilde{O}(|\cY|)$ in this setting as well.

\subsubsection{Adaptivity}
In the setting of full-information feedback, adaptive and oblivious adversaries are equivalent. Indeed, full-information feedback, in its essence, implies that the feedback never depends on the prediction drawn by the learner in a specific execution of the game.  In the bandit feedback setting, to the best of our knowledge, the existing literature on adaptive adversaries focuses on the agnostic setting and analyze different notions of \emph{regret}. Notable examples are \cite{merhav2002sequential, farias2006combining, arora2012online}.

\subsubsection{Randomness}
In the full-information feedback model, the \emph{Littlestone dimension} \cite{littlestone1988learning, daniely2015multiclass} captures $\optfulldet(\cH)$ precisely, and $\optfullrand(\cH)$ up to a multiplicative factor of $2$. The paper \cite{daniely2015multiclass} suggests a new combinatorial dimension, the \emph{Bandit Littlestone dimension}, and proves that it captures $\optbanditdet(\cH)$ precisely. The paper \cite{raman2023multiclass} shows that the Bandit Littlestone dimension qualitatively characterizes learnability also in the agnostic and randomized setting, even when $|\cY| = \infty$.

The papers \cite{daniely2013price, raman2023multiclass} ask whether the Bandit Littlestone dimension is a good quantitative proxy for $\optbanditobl(\cH)$. Theorem~\ref{thm:intro-bandit-rand-vs-det} (and Theorem~\ref{thm:bandit-rand-vs-det} in a more detailed version) shows that this dimension is far from quantifying even $\optbanditadap(\cH)$.

\subsubsection{Prediction with expert advice}
The problem of \emph{prediction with expert advice} in the $r$-realizable setting  was introduced in \cite{vovk1990aggregating, littlestone1994weighted} for the binary case $(|\cY| = 2)$. Since then, this problem has been well-studied, with papers spanning the past 30 years \cite{cesa1996line, cesa1997use, abernethy2006continuous, mukherjee2010learning, filmus2023optimal}. The full-information setting for $|\cY| > 2$ and small $r$ was studied in \cite{branzei2019online}.

The bandit feedback variation of the problem was introduced in \cite{auer2002nonstochastic}, in a more general version that considers \emph{rewards} instead of \emph{losses} (or \emph{mistakes}).
However, there is a major difference between how they define $r$-realizability and how we define it (in the setting of an adaptive adversary). In their definition, there must be an expert which makes at most $r$ mistakes, whereas we only assume that there must be an expert whose predictions are inconsistent with the bandit feedback at most $r$ times.


When the adversary is oblivious, both notions of $r$-realizability coincide (see Remark~\ref{rem:strong-vs-weak-realizable} for more details). Therefore, it is possible that their techniques can be used to obtain tight bounds for the oblivious setting for all values of $r$; our bounds for the adaptive setting are tight in the oblivious setting only for small $r$ values (see Theorem~\ref{thm:experts-rand-tight}).

Unfortunately, to the best of our knowledge, when converting rewards to losses and applying the bounds of \cite{auer2002nonstochastic}, a dependence on the number of rounds $T$ emerges (see for example \cite[Theorem~3.1]{bubeck2012regret} and \cite[Theorem~6.10]{cesa2006prediction}), which is undesirable when studying the mistake bound model (rather than the regret model). In this work we are mainly interested in the experts setting as a means for proving Theorem~\ref{thm:intro-full-vs-bandit-rand}, and our proof specifically requires the adaptive setting. We leave for future work the question of finding mistake bounds against oblivious adversaries which are tight for large values of $r$.

\section{Preliminaries} \label{sec:preliminaries}

Let $\cX$ be a \emph{domain}, and let $\cY$ be a countable \emph{label set}. Unless stated otherwise, we assume that $\cY = [k]$ for some natural $k\geq 2$.
A pair $(x, y)\in \cX \times \cY$ is called an \emph{example}, and 
an element $x\in \cX$ is called an \emph{instance} or an \emph{unlabeled example}. 
A function $h\colon \cX\to\cY$ is called a \emph{hypothesis} or a \emph{concept}.
A \emph{hypothesis class}, or \emph{concept class}, is a non-empty set $\cH \subset \cY^{\cX}$.
We note here that it is generally assumed that there exists an instance $x \in \cX$ such that for every $y \in \cY$ there exists $h \in  \cH$  satisfying $h(x) = y$. This assumption is reasonable since when this is not the case, the hypothesis class $\cH$ has an essentially isomorphic class $\cH'$ with label set strictly smaller than $\cY$ (see \cite{raman2023multiclass}).

We will mostly consider the more general notion of \emph{pattern classes}. A \emph{pattern} $p \in (\cX \times \cY)^\star$ is a finite sequence of examples. A pattern class is a non-empty set of patterns $\cP$ which is \emph{downwards closed}, meaning that for every $p \in \cP$, every sub-pattern $p'$ of $p$ (obtained by removing examples arbitrarily) is in $\cP$ as well. Let $p,q$ be sequences (which can be either patterns or some other type of sequences). we denote by $p_{I}$ the sub-sequence of $p$ consisting only of the indices in $I$. The set $I$ might be given by a clear and well-known notation. For example, if $p = x_1, \dots, x_T$, then for some $t \leq T$, the notation $p_{\leq t}$ represents the sub-sequence $x_1, \dots, x_t$. Denote the concatenation of $p,q$  by $p \circ q$. If $p$ is a sub-sequence of $q$, we denote $p \subset q$.

Note that pattern classes generalize concept classes: for every concept class $\cH$ we may define the \emph{induced} pattern class
\begin{equation} \label{eq:pattern-realizable}
    \cP(\cH) = \left\{p \in (\cX \times \cY)^\star: \min_{h \in \cH} \sum_{(x,y) \in p} 1[h(x) \neq y] = 0\right\}.
\end{equation}
In words, $\cP(\cH)$ contains all patterns that are consistent with $\cH$, and thus from an online-learning perspective there is no essential difference between $\cH$ and $\cP(\cH)$. Therefore, throughout this section we consider only pattern classes.

In this work, we consider \emph{multiclass online learning} of pattern classes in the \emph{realizable} setting. Online learning is a repeated game between a learner and an adversary.
Each round $t$ in the game proceeds as follows:
\begin{enumerate}[(i)]
\item The adversary sends an instance $x_t \in \cX$ to the learner.
\item The learner predicts $\hat{y}_t \in \cY$ (possibly at random).
\item The adversary provides (full-information or bandit) feedback to the learner.
\end{enumerate}
\emph{Full-information} feedback means that the learner learns the correct label $y_t$ at the end of every round. \emph{Bandit} feedback means that the learner only receives an indication whether $\hat{y}_t = y_t$ or not. If the learner predicts $\hat{y}_t$ deterministically for every $t$, then the learner is \emph{deterministic}.

In the bandit feedback setting, we model \emph{learners} as functions $\Lrn \colon ((\cX \times \{\to, \notto\} \times \cY)^\star \times \cX) \rightarrow \Pi[\cY]$ from the set of pairs of a feedback sequence and an instance, to the set $\Pi[\cY]$ of all probability distributions over $\cY$. For $\pi \in \Pi[\cY]$ we denote the probability of $y$ in $\pi$ by $\pi_y$. In a feedback sequence $(x_1, \op_1, \hat{y}_1), \dots, (x_T, \op_T, \hat{y}_T)$,  the triplet $(x_t, \op_t, \hat{y}_t)$ represents the fact that in round $t$ the instance was $x_t$, and that $\hat{y}_t = y_t$ if $\op = \to$, or $\hat{y}_t \neq y_t$ if $\op = \notto$. The learner's predictions thus may depend on past feedback as well as on past and current instances. In the full-information feedback model, $\op = \to$ at all times, and so we omit it and use the pair $(x_t,y_t)$ instead of the triplet $(x_t, \op, \hat{y}_t)$.

Given a learner $\Lrn$ and a fixed input sequence of examples $S = (x_1,y_1),\ldots,(x_T,y_T)$, 
we denote the expected number of mistakes that $\Lrn$ makes when executed on the sequence $S$ by $\lossfull(\Lrn; S)$ if it receives full-information feedback, and by $\lossbandit(\Lrn; S)$ if it receives bandit feedback. 
We define the optimal mistake bound of $\cP$ when the adversary is \emph{oblivious} to be
\begin{equation}\label{eq:opt-rand}
\optfullrand(\cP)=\adjustlimits\inf_ {\Lrn} \sup_{S \in \cP} \lossfull(\Lrn ;S),
\quad
\optbanditobl(\cP)=\adjustlimits\inf_ {\Lrn} \sup_{S \in \cP} \lossbandit(\Lrn ;S),
\end{equation}
in the full information and bandit feedback models, respectively. 
\begin{remark}[Strong vs.\ Weak Realizability]  \label{rem:strong-vs-weak-realizable}
    The restriction ${S \in \cP}$ that the supremum is taken over is called strong realizability, which requires that the input sequence is taken from the class $\cP$. On the other hand, unless stated otherwise, in this work we consider the weak realizability assumption. Under this assumption, it is only required that for any learner, the feedback provided by the adversary is consistent with some pattern from $\cP$ with probability $1$. When the adversary is oblivious and the input sequence $S$ is chosen beforehand, strong and weak realizability coincide. Indeed, note that for any learner who never predicts a label with probability $0$, strong realizability must hold in order to satisfy weak realizability.
\end{remark}

The adversary in the above definition is indeed oblivious to the learner's actions, in the sense that it must choose the \emph{target pattern} $S \in \cP$ before the beginning of the game.
In contrast, in the setting of the bandit feedback model we will also consider a stronger \emph{adaptive} adversary which is allowed to choose the target pattern on the fly, as long as it satisfies the weak realizability assumption, meaning that the feedback provided to the learner is consistent with some $p \in \cP$ with probability $1$. In addition, it is allowed to choose the correct label at random.\footnote{It is explained in the sequel that being randomized does not really help the adversary}

When the adversary is adaptive, the online learning game defined above operates as follows in every round $t$:
\begin{enumerate}[(i)]
\item The adversary sends an instance $x_t \in \cX$ to the learner.
\item The learner chooses a probability distribution $\pi^{(t)}$ over $\cY$ and reveals it to the adversary.
\item The adversary chooses a probability distribution $\tau^{(t)}$ over $\cY$.
\item The learner draws a prediction $\hat{y}_t \in \cY$ from $\pi^{(t)}$ and reveals it to the adversary.
\item The adversary draws the correct label $y_t$ from $\tau^{(t)}$ and tells the learner if its prediction was correct or not.
\end{enumerate}

We formalize an adaptive adversary as a pair of functions 
\[
\Adv_x \colon (\cX \times \{\to, \notto\} \times \cY)^\star \rightarrow \cX,
\quad 
\Adv_y \colon (\cX \times \{\to, \notto\} \times \cY)^\star \times \cX \times \Pi[\cY] \rightarrow \Pi[\cY].
\]
In every round $t$, $\Adv_x$ is used in the first step to choose the instance $x_t$, and $\Adv_y$ is used in the third step to determine the distribution from which the correct label $y_t \in \cY$ is drawn.
In more detail, let $F \in (\cX \times \{\to, \notto\} \times \cY)^\star$ be the feedback sequence given to the learner before round $t$. The adversary sends the instance $x_t = \Adv_x(F)$ to the learner in the first step. In the third step, the adversary computes the distribution $\tau^{(t)} = \Adv_y(F, x_t, \pi^{(t)})$.

A feedback sequence $F \in (\cX \times \{\to, \notto\} \times \cY)^\star$ is \emph{realizable} by $\cP$ if there exists a pattern in $\cP$ that is consistent with $F$.
An adaptive adversary is \emph{consistent} with $\cP$ if for every feedback sequence $F \in (\cX \times \{\to, \notto\} \times \cY)^\star$ which is realizable by $\cP$ and for every $\pi \in \Pi[\cY]$, the following holds. Let $x = \Adv_x(F)$. If $\Adv_y(F, x, \pi) = \tau$, then the feedback sequence $F \circ (x, \to, y)$ is realizable for every $y$ in the support of $\tau$.

Given a learner $\Lrn$ and an adaptive adversary $\Adv$ which provides bandit feedback, we denote the expected number of mistakes that $\Lrn$ makes against $\Adv$ by $\lossbandit(\Lrn; \Adv)$. We define the optimal mistake bound of $\cP$ against an adaptive adversary which provides bandit feedback by
\begin{equation}\label{eq:opt-rand-adaptive}
\optbanditadap(\cP)=\adjustlimits\inf_ {\Lrn} \sup_{\Adv} \M(\Lrn ;\Adv),
\end{equation}
where the supremum is taken over all adaptive adversaries which are consistent with $\cP$.

We denote by $\optfulldet(\cP)$ and $\optbanditdet(\cP)$ the optimal deterministic mistake bounds\footnote{One can verify that in the deterministic case, oblivious and adaptive adversaries are equivalent.} of $\cP$ with full-information and bandit feedback, respectively.
That is, in these settings we require the additional restriction that $\Lrn$ must be deterministic.
We may sometimes refer to mistake bounds as the learner's \emph{loss} throughout the entire game.
Unless stated otherwise, the notations $O, \Omega, \Theta$ hide universal constants which do not depend on any parameter of the problem.

\section{The primal and dual games} \label{sec:dual}

In this section we describe a game-theoretic formulation of $\optbanditadap$, and apply the minimax theorem to obtain a dual formulation which will be useful in Section~\ref{sec:experts}. We term the game defined in Section~\ref{sec:preliminaries} when played against an adaptive adversary as the \emph{primal game}. In the \emph{dual game}, we basically just switch the order of the second and third steps of the primal game (together with some other required but not meaningful minor changes). We can think of the primal and dual games as the same game, only that after the adversary chooses $x_t$, in the primal game the learner chooses a distribution (over the labels) first, and in the dual game the adversary chooses a distribution first. Formally, the dual game operates as follows in each round $t$:

\begin{enumerate}[(i)]
\item The adversary picks an instance $x_t \in \cX$ and a probability distribution $\tau^{(t)}$ over $\cY$. It reveals both to the learner.
\item The learner chooses a probability distribution $\pi^{(t)}$ over $\cY$, draws a prediction $\hat{y}_t \in \cY$ from $\pi^{(t)}$, and reveals it to the adversary.
\item The adversary draws the correct label $y_t$ from $\tau^{(t)}$ and tells the learner if its prediction was correct or not.
\end{enumerate}

In this section, we show that the primal and dual games are equivalent in terms of their optimal mistake bounds. The motivation behind this is that the target function of the optimization problem describing the optimal mistake bound in the dual game will turn out to be simpler. This will enable the analysis in Section~\ref{sec:experts}, and might be of further interest. The equivalence will mainly follow from von Neumann's minimax theorem \cite{v1928theorie}. Using minimax duality to analyze minimax target functions of online learning problems has proved useful in previous works. Notable examples are \cite{abernethy2009stochastic, rakhlin2010online, rakhlin2012relax}.

\smallskip

We now get into the details. In the dual game, $\pi^{(t)}$  depends on the distribution $\tau^{(t)}$ chosen by the adversary, and the distribution $\tau^{(t)}$ does not depend on the distribution $\pi^{(t)}$. Therefore we have to change the formal definitions of a learner and an adversary in the dual game. The adversary is defined as a single function $\Adv \colon (\cX \times \{\to, \notto\} \times \cY)^\star \rightarrow (\cX \times \Pi[\cY])$. The learner is defined as a function $\Lrn \colon (\cX \times \{\to, \notto\} \times \cY)^\star \times \cX \times \Pi[\cY] \rightarrow \Pi[\cY]$. The optimal mistake bounds in the primal and dual games are defined in a dual manner:
\begin{equation}\label{eq:opt-rand-adaptive-dual}
\optbanditadap(\cP)=\adjustlimits \inf_ {\Lrn} \sup_{\Adv} \M(\Lrn ;\Adv),
\quad 
\optdualbanditadap(\cP)=\adjustlimits\sup_{\Adv} \inf_ {\Lrn} \M(\Lrn ;\Adv).
\end{equation}

First, we provide explicit definitions for $\optbanditadap(\cP)$ and $\optdualbanditadap(\cP)$ in terms of simpler classes, based on the instance and feedback provided by the adversary in every round. For every instance $x$ define the class:
\[
\cP_x = \{p \in \cP: (x,y) \circ p \in \cP \text{ for some } y \in \cY\}.
\]
Additionally, define the following classes for every pair of an instance and a label $(x,y)$:
\[
\cP_{x \to y} = \{p \in \cP: (x,y) \circ p \in \cP\}, \quad \cP_{x \notto y} = \cP_x \backslash \cP_{x \to y}.
\]
To simplify presentation, we assume that in the primal game the adversary is deterministic, and in the dual game the learner is deterministic. This is reasonable since after the first player has made its choice, the optimal choice of the second player is deterministic.
Let us now define $\optbanditadap(\cP)$ and $\optdualbanditadap(\cP)$ in terms of classes of the form $\cP_{x \to y}$ and $\cP_{x \notto y}$:
\begin{equation} \label{eq:game-minimax}
    \optbanditadap(\cP)
    =
    \adjustlimits \sup_{x} \inf_{\pi} \sup_{y \in \cY}
        \mleft[ \pi_y \cdot  \optbanditadap(\cP_{x \to y}) + \sum_{y' \neq y} \pi_{y'} \mleft(1 + \optbanditadap(\cP_{x \notto y'}) \mright) \mright],
\end{equation}

\begin{equation} \label{eq:dual-game-minimax}
    \optdualbanditadap(\cP)
    =
    \adjustlimits \sup_{x, \tau} \inf_{y \in \cY}
    \mleft[ \tau_y \cdot  \optdualbanditadap(\cP_{x \to y}) + (1 - \tau_y) \mleft(1 + \optdualbanditadap(\cP_{x \notto y}) \mright) \mright].
\end{equation}
For these equations to be well-defined, it is technically convenient to define $\optbanditadap(\emptyset) = \optdualbanditadap(\emptyset) = -\infty$. Now, by the law of total expectation, for every pattern class $\cP$ the above equations indeed define the optimal losses in the primal and dual games. We can now prove the equivalence.

\begin{lemma} \label{lem:dual}
    For every pattern class $\cP$:
    \[
    \optbanditadap(\cP) = \optdualbanditadap(\cP).
    \]
\end{lemma}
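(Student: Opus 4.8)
The plan is to prove the equality by induction on a suitable notion of complexity of the pattern class $\cP$, using the recursive characterizations \eqref{eq:game-minimax} and \eqref{eq:dual-game-minimax} together with von Neumann's minimax theorem at each recursive step. First I would fix an instance $x$ and observe that, once $x$ is fixed, a single round of the primal game is precisely a finite two-player zero-sum game: the learner chooses a distribution $\pi$ over $\cY$, the adversary chooses a label $y$ (or, allowing randomization, a distribution over $y$), and the payoff to the adversary is the expected "cost" $\pi_y \cdot \optbanditadap(\cP_{x\to y}) + \sum_{y'\neq y}\pi_{y'}(1+\optbanditadap(\cP_{x\notto y'}))$, where the continuation values $\optbanditadap(\cP_{x\to y})$ and $\optbanditadap(\cP_{x\notto y'})$ are treated as constants supplied by the inductive hypothesis. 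Rewriting the inner term as $1 - \pi_y + \pi_y\optbanditadap(\cP_{x\to y}) + \sum_{y'\neq y}\pi_{y'}\optbanditadap(\cP_{x\notto y'})$ makes it bilinear in $\pi$ and in the adversary's mixed strategy, so von Neumann's theorem applies and the value is unchanged when we swap $\inf_\pi$ and $\sup$ over (now randomized) adversary strategies.

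The key computation is to check that after swapping, the adversary's optimal randomized strategy collapses to the dual form. When the adversary plays a distribution $\tau$ over labels and the learner best-responds with a pure prediction $y$ (which is optimal against a fixed $\tau$), the expected cost becomes $\sum_{y'} \tau_{y'}\bigl[\,[y'=y]\optbanditadap(\cP_{x\to y}) + [y'\neq y](1+\optbanditadap(\cP_{x\notto y}))\,\bigr]$ — wait, one must be careful which class appears. The relevant point is that the feedback the learner receives about its prediction $y$ tells it whether $y_t = y$; so conditioned on predicting $y$, with probability $\tau_y$ the continuation class is $\cP_{x\to y}$ and with probability $1-\tau_y$ it is $\cP_{x\notto y}$. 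This yields exactly $\tau_y \optdualbanditadap(\cP_{x\to y}) + (1-\tau_y)(1 + \optdualbanditadap(\cP_{x\notto y}))$ after applying the inductive hypothesis to identify primal and dual values on the strictly smaller classes $\cP_{x\to y}, \cP_{x\notto y}$. Taking $\inf$ over the learner's pure prediction $y$ and then $\sup$ over $(x,\tau)$ reproduces \eqref{eq:dual-game-minimax}.

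For the induction to be legitimate I need a well-founded measure ensuring $\cP_{x\to y}$ and $\cP_{x\notto y}$ are "smaller" than $\cP$. The natural candidate is the optimal deterministic mistake bound (or the depth of the associated tree): for any $x$ with $\cP_x \neq \emptyset$, at least one of $\cP_{x\to y}$ is strictly shallower in the sense that forces termination, and $\optbanditadap(\cP_x)$ is finite whenever it is defined. I would either restrict attention to classes of finite optimal mistake bound (which is the only case where the statement has content, since otherwise both sides are $+\infty$) and induct on that integer, or set up the argument with the base case $\cP = \{\text{empty pattern}\}$ giving value $0$ on both sides and the convention $\optbanditadap(\emptyset)=\optdualbanditadap(\emptyset)=-\infty$ as in the text. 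The main obstacle I anticipate is not the minimax swap itself but handling the quantifier over instances $x$ cleanly: the outer $\sup_x$ sits outside the minimax, so one must argue that swapping $\inf_\pi \sup_y$ to $\sup_\tau \inf_y$ pointwise for each $x$ preserves the outer supremum — this is immediate since $\sup_x$ of equal quantities are equal, but it requires that the minimax equality hold uniformly, i.e. that for every fixed $x$ the one-round game is genuinely finite-dimensional and bilinear, which is why reducing to $\cY = [k]$ finite (as assumed in the preliminaries) is essential. A secondary technical point is justifying that allowing the adversary to randomize does not change $\optbanditadap$ (stated in a footnote in the excerpt); this should be folded in as a preliminary observation, or simply absorbed into the minimax argument since a pure adversary is a special case of a mixed one and the value of the zero-sum game is attained.
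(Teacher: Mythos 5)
Your central idea---fix $x$, view one round as a finite bilinear zero-sum game between the learner's distribution $\pi$ and a randomized adversary, apply von Neumann's theorem to swap the order of play, and then identify the resulting expression with the dual recursion---is exactly the paper's argument, and your computation of the dual form (the learner best-responds to a fixed $\tau$ with a pure prediction $y$, after which the continuation class is $\cP_{x \to y}$ with probability $\tau_y$ and $\cP_{x \notto y}$ otherwise) is correct.

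The genuine gap is in the induction scheme. You propose to induct on a complexity measure of the class itself (the optimal deterministic mistake bound, or ``depth''), but no such measure is shown---or is in fact guaranteed---to strictly decrease when passing from $\cP$ (or $\cP_x$) to the continuation classes $\cP_{x \to y}$ and $\cP_{x \notto y}$. The adversary may present an instance on which the class carries no information, in which case $\cP_{x \to y} = \cP_x$ for the forced label and nothing decreases (the paper must handle exactly this degeneracy separately in the proof of Lemma~\ref{lem:experts-rand-upper-bound}); moreover the randomized values are reals, so even a strict decrease would not by itself yield a well-founded induction. Your hedge that ``at least one of $\cP_{x\to y}$ is strictly shallower'' would not suffice even if true: the inductive hypothesis must be applied to \emph{every} continuation class appearing in the recursion, not just one. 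The paper sidesteps all of this by introducing the finite-horizon quantities $\optbanditadap(\cP,T)$ and $\optdualbanditadap(\cP,T)$, observing that $\optbanditadap(\cP) = \sup_{T,x} \optbanditadap(\cP_x,T)$, and inducting on the horizon $T$; the continuation values in the one-round game are then the $(T-1)$-horizon values, which are covered by the inductive hypothesis regardless of how the classes evolve. Your argument becomes correct once you replace the induction on class complexity with this induction on $T$.
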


\begin{proof}
    For a horizon (number of rounds) $T$, let $\optbanditadap(\cP,T), \optdualbanditadap(\cP,T)$ be the analogue notations to $\optbanditadap(\cP), \optdualbanditadap(\cP)$, with the additional restriction that the game is played for $T$ rounds. We adapt \eqref{eq:game-minimax} and \eqref{eq:dual-game-minimax} to those definitions in the obvious way. Note that
    \[
    \optbanditadap(\cP) = \sup_{T \in \mathbb{N}, x \in \cX} \optbanditadap(\cP_x,T),
    \]
    and similarly for $\optdualbanditadap(\cP)$. Therefore, to prove the lemma, it suffices to show that $\optbanditadap(\cP_x,T) = \optdualbanditadap(\cP_x,T)$ for all $x,T$ such that $\cP_x \neq \emptyset$ (when $\cP_x = \emptyset$ both are $-\infty$ for all $T$). Let $x \in \cX$ be such that $\cP_x \neq \emptyset$. For $T=0$ we have $\optbanditadap(\cP_x, 0) = 0 = \optdualbanditadap(\cP_x,0)$.
    
    For the induction step, note that the expression for $\optdualbanditadap(\cP_x,T)$ suggested by \eqref{eq:dual-game-minimax} describes a zero-sum game in which the adversary goes first, and both the learner and adversary choose distributions over $\cY$ to draw a label from: the learner draws a prediction and the adversary draws the correct label. The target function is the optimal loss to be suffered by the learner in the sequel, after the distributions are fixed. The adversary's goal is to maximize it and the learner's goal is to minimize it. Since this is a zero-sum game, by the minimax theorem~\cite{v1928theorie} we can let the learner go first instead, without changing the value of   $\optdualbanditadap(\cP_x,T)$. Therefore we can write $\optdualbanditadap(\cP_x,T)$ as:
    \[
    \inf_{\pi} \sup_{y \in \cY}
        \mleft[ \pi_y \cdot  \optdualbanditadap(\cP_{x \to y}, T-1) + \sum_{y' \neq y} \pi_{y'} (1 + \optdualbanditadap(\cP_{x \notto y'}, T-1)) \mright].
    \]
    By the induction hypothesis, we can change every $\optdualbanditadap$ to $\optbanditadap$, which gives:
    \begin{equation} \label{eq:dual-P-D}
    \inf_{\pi} \sup_{y \in \cY}
        \mleft[ \pi_y \cdot  \optbanditadap(\cP_{x \to y}, T-1) + \sum_{y' \neq y} \pi_{y'} (1 + \optbanditadap(\cP_{x \notto y'}, T-1)) \mright].
    \end{equation}
    By applying \eqref{eq:dual-P-D}, observe that $\optdualbanditadap(\cP_x,T) = \optbanditadap(\cP_x,T)$, as required.
\end{proof}

\subsection{An optimal learner for the primal game}
The optimal randomized algorithm achieving $\optbanditadap(\cP)$ suggests itself from \eqref{eq:game-minimax}. For completeness, we explicitly write it in Figure~\ref{fig:R-alg}. The algorithm may be implemented using straightforward dynamic programming for finite hypothesis classes, even in the $r$-realizable case where the best hypothesis is inconsistent with the feedback in at most $r$ many indices.

\begin{figure}
    \centering
    \begin{tcolorbox}
    \begin{center}
        \textsc{$\BRandSOA$}
    \end{center}
    \textbf{Input:} A pattern class $\cP$.
    \\
    \\
    \textbf{For $t=1,2,\dots$}
    \begin{enumerate}
        \item Receive instance $x_t \in \cX$.
        \item Construct a distribution $\pi^{(t)}$ such that
        \begin{equation} \label{eq:R-alg}
        \pi^{(t)} \in \argmin_{\pi \in \Pi[\cY]}  \max_{y \in \cY} \mleft[
        \pi_y \optbanditadap(\cP_{x \to y}) + \sum_{y' \neq y} \pi_{y'} (1 + \optbanditadap(\cP_{x \notto y'})) \mright].
        \end{equation}
        \item Draw the prediction $\hat{y}_t$ from $\pi^{(t)}$.
        \item If the feedback is positive, update $\cP := \cP_{x \to \hat{y}_t}$, and otherwise update $\cP := \cP_{x \not \to \hat{y}_t}$.
    \end{enumerate}
    \end{tcolorbox}
    \caption{ $\BRandSOA$  is an optimal randomized learner for online learning with bandit feedback of pattern classes, where the adversary is allowed to be adaptive. It is inspired by the $\RSOA$ algorithm of \cite{filmus2023optimal}, which is a randomized variant of Littlestnoe's \cite{littlestone1988learning} well-known $\SOA$ algorithm.}
    \label{fig:R-alg}
\end{figure}

\begin{proposition}
    The algorithm $\BRandSOA$ is well-defined and optimal.
\end{proposition}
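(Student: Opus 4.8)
The plan is to prove two things: that $\BRandSOA$ is well-defined, and that it is optimal, i.e., that its expected number of mistakes on any adaptive adversary consistent with $\cP$ is at most $\optbanditadap(\cP)$.

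For \emph{well-definedness}, I first need to argue that the $\argmin$ in \eqref{eq:R-alg} is attained. The function $\pi \mapsto \max_{y} [\pi_y \optbanditadap(\cP_{x\to y}) + \sum_{y'\neq y}\pi_{y'}(1+\optbanditadap(\cP_{x\notto y'}))]$ is a maximum of finitely many linear (hence continuous) functions over the compact simplex $\Pi[\cY]$, so it attains its minimum. The only subtlety is the convention $\optbanditadap(\emptyset) = -\infty$: if $\cP_{x\to y} = \emptyset$ for some $y$, then any $\pi$ with $\pi_y > 0$ gives value $-\infty$ inside the bracket for that $y$, but the outer $\max$ over $y$ still sees the other terms, so the overall expression is finite as long as the adversary's instance $x$ satisfies $\cP_x \neq \emptyset$; and indeed the algorithm is only ever run on instances with $\cP_x\neq\emptyset$ because the adversary is consistent with $\cP$ (this is exactly the realizability assumption). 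I should note that when $\cP_{x\notto y'} = \emptyset$ the term $\pi_{y'}(1 + (-\infty))$ should be read as $-\infty$ unless $\pi_{y'} = 0$; the learner can always avoid both pathologies by putting mass $0$ on impossible labels, so the minimizing $\pi$ is well-behaved. I should also confirm $\optbanditadap(\cP)$ is finite whenever it needs to be — but for the optimality claim I can work with $\optbanditadap(\cP)$ as an extended real and the inequality still makes sense.

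For \emph{optimality}, the argument is a clean induction mirroring the derivation of \eqref{eq:game-minimax}. Fix an adaptive adversary $\Adv$ consistent with $\cP$ and a horizon $T$; I claim by induction on $T$ that the expected number of mistakes $\BRandSOA$ makes in the first $T$ rounds is at most $\optbanditadap(\cP_x, T)$ after the first instance $x$ is revealed, hence at most $\sup_{T, x}\optbanditadap(\cP_x,T) = \optbanditadap(\cP)$. The base case $T=0$ is trivial. For the step: the adversary reveals $x_1$; the learner plays $\pi^{(1)}$ chosen as the minimizer in \eqref{eq:R-alg}; the adversary then (adaptively, possibly randomly) picks a label distribution $\tau$, and conditioned on the realized $(\hat y_1, \op_1)$ the class updates to $\cP_{x_1\to\hat y_1}$ or $\cP_{x_1\notto\hat y_1}$. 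By the induction hypothesis applied to the updated class and to the adversary's continuation (which is again consistent with the updated class, by consistency of $\Adv$), the expected future loss is at most $\optbanditadap$ of the updated class with horizon $T-1$. Taking expectation over $y_1 \sim \tau$ and $\hat y_1 \sim \pi^{(1)}$, the total expected loss is at most
\[
\sum_{y}\tau_y\Bigl[\pi^{(1)}_y\,\optbanditadap(\cP_{x_1\to y},T-1) + \sum_{y'\neq y}\pi^{(1)}_{y'}\bigl(1 + \optbanditadap(\cP_{x_1\notto y'},T-1)\bigr)\Bigr],
\]
which, since this is an average over $y$, is at most $\max_{y}$ of the bracket, and that maximum equals the value achieved by the minimizing $\pi^{(1)}$, which by \eqref{eq:game-minimax} is exactly $\optbanditadap(\cP_{x_1},T)$. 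This closes the induction. (Optimality in the sense that no learner can do better is already encoded in the fact that \eqref{eq:game-minimax} \emph{defines} $\optbanditadap(\cP)$ via the law of total expectation, so I only need the upper bound for $\BRandSOA$.)

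The main obstacle is bookkeeping around the $-\infty$ convention and the interaction between weak realizability and the adaptive adversary: I must be careful that the updated class is never empty along any realized trajectory (guaranteed by consistency of $\Adv$, which ensures every feedback sequence it can produce is realizable by $\cP$), and that the learner never places positive probability on a label $y'$ with $\cP_{x\notto y'}=\emptyset$ unless doing so is harmless — in fact if $\cP_{x\notto y'}=\emptyset$ then predicting $y'$ is always correct, so the $1+\optbanditadap(\cP_{x\notto y'})$ term is never actually charged; handling this cleanly may require stating that for a consistent adversary, on a positive-feedback round the relevant term drops out. A secondary point worth a sentence is invoking the minimax lemma (Lemma~\ref{lem:dual}) is \emph{not} needed here — $\BRandSOA$ is the primal-optimal learner directly — but the well-definedness of the recursion it uses relies on the same finiteness facts established in Section~\ref{sec:dual}.
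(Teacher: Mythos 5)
Your proposal is correct and follows essentially the same route as the paper's (much terser) proof: well-definedness comes from the fact that \eqref{eq:R-alg} is a feasible optimization over the compact simplex, and optimality follows from the recursion \eqref{eq:game-minimax}, which your induction on the horizon simply unfolds. The extra care you take with the $-\infty$ convention and with consistency of the adversary along realized trajectories is sound bookkeeping that the paper leaves implicit.
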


\begin{proof}
    The choice of $\pi^{(t)}$ is well defined because \eqref{eq:R-alg} is a  feasible linear optimization problem, and as such it has a solution. The algorithm is optimal due to \eqref{eq:game-minimax}.
\end{proof}

\section{Prediction with expert advice}\label{sec:experts}

In this section we are interested in optimal mistake bounds for the bandit feedback setup of \emph{prediction with expert advice} in the $r$-realizable setting. In this problem, $n$ experts are making predictions in each round, and it is promised that the best expert is inconsistent with the adversary's feedback in at most $r$ many rounds. An adversary following this limitation is called \emph{$r$-consistent}. Nothing else is assumed about how the experts decide on their predictions. Furthermore, in Section~\ref{sec:dt} we explain how to remove the assumption that $r$ is given to the learner with only a constant factor degradation in the mistake bound. Based on the experts' predictions, the learner should make its own prediction for each round, while minimizing the expected number of mistakes it makes in the entire game. In the binary or full-information feedback model, this problem is well-studied, and tight bounds were proved for both deterministic and randomized learners \cite{vovk1990aggregating, littlestone1994weighted, cesa1996line, cesa1997use, abernethy2006continuous, branzei2019online, filmus2023optimal}.

In this work, we are mostly interested in the randomized (learner) and adaptive (adversary) setup with bandit feedback, and the bound we prove for it plays a central role in the proof of Theorem~\ref{thm:intro-full-vs-bandit-rand}. For the sake of better completeness and exhaustiveness, we consider the other deterministic (learner) and oblivious (adversary) setups as well. Our results for all three setups are summarized in Table~\ref{tab:experts-bounds-summary}.

We adapt the notation for learning pattern classes. Suppose that there are $k \geq 2$ many labels and $n \geq k$ many experts, where the best expert is inconsistent with the feedback in at most $r \geq 0$ many rounds. We assume that $n\geq k$, since if $n < k$ then having $k$ labels is equivalent to having $n$ labels. When the learner is deterministic, the optimal mistake bound is denoted by $\optbanditdet(n,k,r)$. When the learner is randomized, the optimal mistake bound is denoted by $\optbanditobl(n,k,r)$ or $\optbanditadap(n,k,r)$, depending on whether the adversary is oblivious or adaptive, respectively. For the realizable case $r=0$, we omit $r$ from the notation.

\subsection{Warm-up: deterministic learners}

In this section, we analyze $\optbanditdet(n,k,r)$ and prove the following theorem.
\begin{theorem}\label{thm:experts-det}
    Let $n \geq k\geq 2, r \geq 0$. Then:
    \[
    \optbanditdet(n,k,r) = \Theta \mleft(k (\log(n/k) + r + 1) \mright).
    \]
\end{theorem}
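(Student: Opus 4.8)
The plan is to handle the two directions separately: an upper bound via a weighted‑plurality learner analyzed with an exponential potential, and a lower bound built from two hard adversaries — one forcing $\Omega(k\log(n/k))$ mistakes, one forcing $\Omega(k(r+1))$ — combined through $\max(a,b)\ge (a+b)/2$. It is convenient to have every adversary present a \emph{fresh} instance each round, so that the only realizability constraint to track is on inconsistency counts, not on global consistency of experts.

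\textbf{Upper bound.} Give expert $e$ a budget $b_e=r-m_e$, where $m_e$ counts the rounds so far in which $e$ disagreed with the bandit feedback, and discard $e$ once $b_e<0$. Each round predict the label maximizing $W_y=\sum_{e\ \mathrm{alive},\ e\ \mathrm{predicts}\ y}2^{b_e}$, and track $\Phi=\sum_{e\ \mathrm{alive}}2^{b_e}$, with $\Phi_0=n2^r$ initially and $\Phi\ge 2^{b_{e^\star}}\ge 1$ always, since the best expert $e^\star$ is never discarded. On a mistake the predicted label was a weighted plurality over at most $k$ labels, so the penalized experts carry weight $\ge \Phi/k$ and each loses at least half its weight (halved or discarded), hence $\Phi$ drops by a factor $\le 1-1/(2k)$; a correct round never increases $\Phi$. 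Now split the analysis at the threshold $\Phi=k$. While $\Phi>k$, each mistake multiplies $\Phi$ by $\le 1-1/(2k)$, so there are $O(k\log(\Phi_0/k))=O(k\log(n2^r/k))=O(k(\log(n/k)+r))$ of them — crucially the factor $2^r$ cancels against the threshold $k$ rather than against $1$, which is exactly what separates $\log(n/k)$ from $\log n$. Once $\Phi\le k$ there are at most $k$ alive experts, hence at most $k(r+1)$ inconsistency ``slots'' remain among them (one fatal slot per expert); since every mistake consumes at least one slot of the predicted‑label experts, at most $k(r+1)$ further mistakes occur. Summing gives $O(k(\log(n/k)+r+1))$.

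\textbf{Lower bound.} For the $r$‑term, present a fresh instance every round on which the $k$ experts predict the $k$ distinct labels, and answer ``incorrect'' every round. Against any deterministic learner this is weakly realizable precisely as long as some label has been predicted at most $r$ times (the expert tied to that label is still a valid witness), and saying ``incorrect'' is feasible unless the learner predicts a label it has already played $\ge r$ times while every other label has been played $\ge r+1$ times; reaching such a configuration requires at least $kr+k-1$ prior predictions, each of which was a mistake. For the $\log(n/k)$ term, run the analogous adversary with $n$ experts: each round partition the still‑consistent experts into $k$ groups as evenly as possible and answer ``incorrect'', which kills the group containing the learner's prediction — of size $\ge |S|/k$ — so the consistent set shrinks geometrically and the learner is forced to err for $\Omega(k\log(n/k))$ rounds before it drops below $k$ (if the learner ever predicts a label used by no consistent expert, the adversary answers ``incorrect'' forever). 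Taking the larger of the two bounds gives $\Omega(k(\log(n/k)+r+1))$.

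\textbf{Main obstacle.} The delicate point, in both directions, is obtaining $\log(n/k)$ rather than $\log n$. On the upper side this is what forces the two‑regime split at $\Phi=k$ together with a separate, purely budget‑counting argument for the low‑potential regime — the pure potential argument degrades to $\log n$ because $\Phi$ can never be driven below $1$. On the lower side it requires analyzing the even‑partition adversary through the integer‑rounding effects when $|S|$ lies between $k$ and $2k$, and then handing off cleanly to the ``$k$ distinct labels'' gadget. Verifying weak realizability at every step of both adversaries (and the floor bookkeeping) is the routine‑but‑necessary part.
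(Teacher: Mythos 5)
Your proposal is correct and matches the paper's proof in all essentials: the upper bound is the same weighted-plurality argument with exponential per-expert weights and a two-epoch split at total weight $k$ (the paper uses weights $\alpha^{r-m_e}$ with $\alpha=e$ and bounds the second epoch by $k-1$, versus your base-$2$ weights and $k(r+1)$; both suffice for the stated $\Theta$), and the lower bound combines the same two adversaries — the even $k$-partition of surviving experts for the $\Omega(k\log(n/k))$ term and the $k$-distinct-labels counting gadget forcing $k(r+1)-1$ mistakes for the $r$ term. The only blemish is the inequality direction in ``of size $\ge|S|/k$'' for the killed group (what you need is that it has size at most $\lceil|S|/k\rceil$, so that at least a $(1-2/k)$-fraction survives), but the surrounding text makes clear this is what you intend.
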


We first prove the upper bound and then the lower bound.

\begin{lemma} \label{lem:det-experts-upper-bound}
    Let $n \geq k\geq 2, r \geq 0$. Then for every $\alpha > 1$:
    \[
     \optbanditdet(n,k,r) \leq \frac{\alpha}{\alpha - 1} k \ln (n\cdot \alpha ^r/k) + k - 1.
    \]
\end{lemma}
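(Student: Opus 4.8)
The target is a deterministic mistake bound for prediction with expert advice under bandit feedback in the $r$-realizable setting. I would prove this by the classic weighted-majority / potential-function argument, adapted to bandit feedback and to the mistake-bound (rather than regret) regime. The plan is to assign to each expert a weight that decays by a factor $1/\alpha$ each time that expert is inconsistent with the feedback, maintain the total weight $W$ as a potential, and show that every time the learner makes a mistake the potential drops by a constant multiplicative factor — unless a ``safety valve'' condition applies that bounds a leftover term by $k-1$.

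**Key steps in order.**

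First I would fix a deterministic prediction rule: maintain weights $w_e$ for experts $e$, initialized to $1$; in each round partition the living experts by their current prediction into groups $G_1,\dots,G_k$ (at most $k$ nonempty groups since there are $k$ labels), with group weights $W_y = \sum_{e : e \text{ predicts } y} w_e$; and have the learner predict the label $y^\star$ maximizing $W_{y^\star}$ (ties broken arbitrarily). Second, analyze what happens on a mistake: the true label is some $y \neq y^\star$, so the feedback tells the learner its prediction $y^\star$ was wrong, hence \emph{every} expert in group $G_{y^\star}$ was inconsistent this round and has its weight multiplied by $1/\alpha$. Since $y^\star$ was the heaviest group, $W_{y^\star} \geq W/k$ where $W$ is the pre-round total weight (here I'm using that there are at most $k$ groups). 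Therefore after the update the new total weight satisfies
\[
W' \;\le\; W - W_{y^\star} + \tfrac{1}{\alpha} W_{y^\star} \;=\; W\ml(1 - \tfrac{1}{k}\ml(1 - \tfrac1\alpha\mr)\mr).
\]
Wait — I should be careful: this clean bound only holds when the heaviest group is the one that got penalized, which is exactly the mistake case. So on every mistake, $W$ shrinks by the factor $\rho := 1 - \frac{1}{k}(1-\frac1\alpha)$. Third, I would set up the final counting argument. After the whole game, the best expert $e^\star$ has been inconsistent at most $r$ times, so $w_{e^\star} \ge \alpha^{-r}$, giving $W_{\text{final}} \ge \alpha^{-r}$. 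If $M$ is the number of mistakes the learner makes, then $W_{\text{final}} \le n \cdot \rho^{M}$. Combining, $\alpha^{-r} \le n \rho^M$, i.e. $M \le \frac{\ln(n \alpha^r)}{\ln(1/\rho)}$. Fourth, I estimate $\ln(1/\rho) = -\ln\ml(1 - \tfrac1k(1-\tfrac1\alpha)\mr) \ge \tfrac1k(1-\tfrac1\alpha) = \frac{\alpha-1}{k\alpha}$, which yields $M \le \frac{\alpha}{\alpha-1}\, k \ln(n\alpha^r)$. This is close to the claimed bound but has $\ln(n\alpha^r)$ where the lemma wants $\ln(n\alpha^r/k) + \frac{\alpha-1}{\alpha}(k-1)$ — so a small refinement is needed, as discussed next.

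**The main obstacle and how to handle it.**

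The gap between my naive bound $\frac{\alpha}{\alpha-1} k \ln(n\alpha^r)$ and the stated $\frac{\alpha}{\alpha-1} k \ln(n\alpha^r/k) + (k-1)$ is exactly the replacement of $\ln n$ by $\ln(n/k)$ at the cost of an additive $k-1$, and I expect recovering this to be the delicate part. The standard trick: once the number of \emph{living} experts (those never eliminated — note that with $\alpha$ finite no expert is literally eliminated, so instead track experts whose weight still equals $1$, i.e. never yet inconsistent, or alternatively run the argument in two phases) drops to at most $k$, a deterministic learner can finish in at most $k-1$ further mistakes by a simple ``Bandit-SOA''-style elimination — each mistake rules out at least one candidate label/expert. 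So I would split the analysis: Phase 1 runs the weighted-majority argument only while more than $k$ experts are ``at full weight'', bounding the weight contribution accordingly and getting the $\ln(n/k)$ savings because effectively $n/k$ independent weight-halvings must occur; Phase 2 uses the pigeonhole/elimination bound of $k-1$. Making Phase 1 and Phase 2 interface cleanly — in particular arguing the total weight bound $W_{\text{final}} \ge \alpha^{-r}$ still transfers across the phase boundary, and that the ``$/k$'' genuinely appears rather than being an artifact — is where I'd need to be most careful; the cleanest route is probably to keep a single potential $W$ but observe that when the learner makes its $(i)$-th mistake while $> k$ experts remain at full weight, a fresh full-weight group of size $\ge (\text{current full-weight count})/k$ gets penalized, and telescoping these gives the $\ln(n/k)$ term directly, with the residual handled by $k-1$.
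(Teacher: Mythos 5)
Your overall plan is the paper's: the same weighted-plurality algorithm with multiplicative decay $1/\alpha$, the same per-mistake shrinkage factor $1-\frac{\alpha-1}{\alpha k}$, and the same two-epoch structure with the second epoch absorbing the additive $k-1$. Your first-epoch analysis is correct and matches the paper's line for line (up to normalization: the paper gives each expert initial weight $\alpha^r$ and you give weight $1$, which is the same argument scaled by $\alpha^r$).

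The genuine gap is in the second phase, exactly where you flag uncertainty, and the specific fixes you float do not work as stated. First, your proposed phase boundary --- ``more than $k$ experts at full weight, i.e.\ never yet inconsistent'' --- is the wrong criterion: an expert that has been inconsistent a few times but still has remaining budget is still a viable candidate for the best expert and cannot be discarded, so counting only never-wrong experts does not control the game. The correct criterion (the paper's) is a threshold on the \emph{total} weight: give each expert initial weight $\alpha^r$, zero out any expert whose weight drops below $1$ (it has exceeded its budget of $r$ inconsistencies), and end the first epoch when $W_t \le k$. Since every surviving expert has weight at least $1$, this guarantees at most $k$ within-budget experts remain, and the first-epoch recursion run from $n\alpha^r$ down to the threshold $k$ is precisely what produces the $\ln(n\alpha^r/k)$ term --- you do not need a separate ``fresh full-weight group'' telescoping argument. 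In your normalization the equivalent threshold is $W_t \le k\alpha^{-r}$. Second, your phase-2 claim that ``each mistake rules out at least one candidate label/expert'' is only valid in the realizable case $r=0$; with budgets, an inconsistency merely decrements a budget and eliminates nobody, so a SOA-style elimination count does not directly give $k-1$. The paper's second epoch instead argues from the fact that at most $k$ within-budget experts remain, one of which is the target expert that survives to the end, so at most $k-1$ non-target experts need to be disposed of. If you adopt the paper's weighting and threshold, your argument goes through; as written, the interface between your two phases is not sound.
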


\begin{proof}
We describe a deterministic learner which makes at most the stated number of mistakes. We  use a simple generalization of \emph{Weighted Majority} \cite{littlestone1994weighted}. A similar idea appears also in \cite{long2020new}.

Every expert is given an initial weight of $\alpha^r$. Let $W_t$ be the total weight of experts in round $t$. In every round, predict the label that enjoys a weighted plurality. If we make a mistake, then the weight of every expert which voted for the predicted label is divided by $\alpha$. If an expert reaches weight less than $1$, we reduce its weight to $0$, since it must mean that it made more than $r$  mistakes. We split the analysis into two epochs. The first epoch is as long as $W_t > k$, and the second is afterwards.

We begin by analyzing the first epoch. By assumption, $W_t > k$ at all times. On the other hand, whenever the learner makes a mistake, at least $1/k$ of the weight is being divided by $\alpha$. Therefore, if a mistake occurs in round $t$ then
\[
W_{t+1} \leq (1-1/k)W_t + W_t/(\alpha k) = \mleft(1- \frac{\alpha -1}{ \alpha  k} \mright)W_t.
\]
Now, if $m_1$ is the total number of mistakes the learner makes as long as $W_t > k$, then $m_1$ must satisfy
\[
k < n\cdot \alpha ^r \mleft(1 - \frac{\alpha -1}{ \alpha  k} \mright)^{m_1} \leq n\cdot \alpha ^r e^{-m_1\frac{\alpha -1}{ \alpha  k}}.
\]
After some manipulations, we derive the bound
\[
m_1 < \frac{\alpha k}{\alpha -1} \ln (n\cdot \alpha ^r / k).
\]

We now analyze $m_2$, which is the number of mistakes in the second epoch. If $W_t \leq k$ then there are at most $k$ experts which have not yet made more than $r$ mistakes. At least one of them will never err again, and therefore in the worst case, the learner will make $m_2 \leq k-1$ more mistakes before eliminating all experts apart from the target expert. Summing $m_1 + m_2$ gives the stated bound.
\end{proof}

We may now deduce the upper bound.

\begin{corollary} \label{cor:det-experts-upper-bound}
    Let $n \geq k\geq 2, r \geq 0$. Then:
    \[
     \optbanditdet(n,k,r) \leq \frac{e}{e -1} k (\ln (n/k) + r) + k - 1.
    \]
\end{corollary}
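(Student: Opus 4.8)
The plan is to obtain the corollary as an immediate instantiation of Lemma~\ref{lem:det-experts-upper-bound}. That lemma provides, for \emph{every} $\alpha > 1$, the bound
\[
\optbanditdet(n,k,r) \leq \frac{\alpha}{\alpha-1}\, k \ln(n \cdot \alpha^r / k) + k - 1,
\]
so it suffices to pick a convenient value of $\alpha$. Taking $\alpha = e$ turns the leading coefficient into $\frac{e}{e-1}$ and, since $\ln(n \cdot e^{r}/k) = \ln(n/k) + r$, collapses the logarithmic factor into $\ln(n/k) + r$. Substituting yields exactly the stated inequality, and nothing else is required.

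The only point worth a sentence of justification is the choice $\alpha = e$. In the right-hand side of Lemma~\ref{lem:det-experts-upper-bound} there is a genuine tradeoff: the factor $\frac{\alpha}{\alpha-1}$ is decreasing in $\alpha$ (favoring large $\alpha$), whereas $\ln(\alpha^{r}) = r\ln\alpha$ grows with $\alpha$ (favoring small $\alpha$), so the true minimizer depends on $n, k, r$ (it solves $r(\alpha - 1 - \ln\alpha) = \ln(n/k)$ when $r > 0$, and is $\alpha \to \infty$ when $r = 0$). Choosing $\alpha = e$ makes $\ln\alpha = 1$, which is the cleanest parameter-free choice and costs only a constant factor over the optimum — acceptable here since we are tracking constants, not exact leading terms.

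Accordingly, there is no real obstacle: the proof is a one-line calculation (set $\alpha = e$, simplify $\ln(n e^{r}/k)$), and the presentation can be kept to that. If one wanted a marginally sharper constant for specific regimes one could instead carry the infimum over $\alpha > 1$, but that is unnecessary for the uniform bound claimed in the corollary.
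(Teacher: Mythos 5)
Your proof is correct and is exactly the paper's argument: apply Lemma~\ref{lem:det-experts-upper-bound} with $\alpha = e$ and simplify $\ln(n\cdot e^r/k) = \ln(n/k)+r$. The extra discussion of the tradeoff in choosing $\alpha$ is sound but not needed for the claimed bound.
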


\begin{proof}
    Apply Lemma~\ref{lem:det-experts-upper-bound} with $\alpha = e$.
\end{proof}

We can also deduce an improved upper bound for the realizable ($r=0$) case.

\begin{corollary}
    Let $n \geq k\geq 2$. Then:
    \[
     \optbanditdet(n,k) \leq k \ln(n/k) + k - 1.
    \]
\begin{proof}
    We can easily see that $\optbanditdet(n,k) \leq (1+\epsilon)k \ln(n/k) + k - 1$ for every $\epsilon > 0$ by applying Lemma~\ref{lem:det-experts-upper-bound} with $\alpha \rightarrow \infty$. There is also an explicit optimal learner for which this holds with $\epsilon = 0$, obtained by slightly changing the proof of Lemma~\ref{lem:det-experts-upper-bound}.
\end{proof}
\end{corollary}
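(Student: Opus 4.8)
The plan is to derive the bound as the $\alpha\to\infty$ limit of Lemma~\ref{lem:det-experts-upper-bound}, and then, for concreteness, to record the explicit learner that this limit corresponds to.

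First I would simply specialize Lemma~\ref{lem:det-experts-upper-bound} to $r=0$ (so $\alpha^r=1$), obtaining, for \emph{every} $\alpha>1$,
\[
\optbanditdet(n,k)\;\le\;\frac{\alpha}{\alpha-1}\,k\ln(n/k)+k-1 .
\]
The left-hand side is a fixed number, independent of $\alpha$, so it is bounded by $\inf_{\alpha>1}\left(\frac{\alpha}{\alpha-1}k\ln(n/k)+k-1\right)$. Since $\frac{\alpha}{\alpha-1}=1+\frac{1}{\alpha-1}$ decreases to $1$ as $\alpha\to\infty$ and $k\ln(n/k)\ge 0$ for $n\ge k$, that infimum equals $k\ln(n/k)+k-1$. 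Hence $\optbanditdet(n,k)\le k\ln(n/k)+k-1$, which is the claim. (Phrased with $\eps$: for each $\eps>0$ a large enough $\alpha$ gives the bound $(1+\eps)k\ln(n/k)+k-1$, and $\eps$ may be sent to $0$ because the bound holds simultaneously for all $\eps>0$.)

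To exhibit a single explicit optimal learner, I would take the $\alpha\to\infty$ degeneration of the weighted-plurality learner from the proof of Lemma~\ref{lem:det-experts-upper-bound}: when $\alpha\to\infty$, an expert on the losing side of a mistake has its weight divided by $\alpha$ and therefore immediately drops below $1$ and is set to $0$, so the weights carry no information and the learner is just ``predict some label that a plurality of the still-living experts votes for; on a mistake, eliminate every expert that voted for the predicted label.'' Re-running the two-epoch analysis of Lemma~\ref{lem:det-experts-upper-bound}: while more than $k$ experts are alive, a mistake eliminates a plurality class, i.e.\ at least a $1/k$ fraction of the living experts, so the count $W_t$ of living experts obeys $W_{t+1}\le(1-1/k)W_t$; starting from $W_1=n$ and using $k<W_t$ throughout this epoch together with $-\ln(1-1/k)\ge 1/k$, the number of mistakes in it is below $\frac{\ln(n/k)}{-\ln(1-1/k)}\le k\ln(n/k)$. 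Afterwards at most $k$ experts remain, one of which --- the target, since $r=0$ --- never errs, and each further mistake removes at least one of the others, so at most $k-1$ more mistakes occur. Summing the two epochs gives $k\ln(n/k)+k-1$.

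There is no genuine obstacle here: both arguments are elementary. The only points deserving a line of care are that a plurality label always exists (as long as at least one expert survives), that the boundary case $n=k$ is fine because then the first epoch is empty and only the $k-1$ term survives, and the standard estimate $-\ln(1-x)\ge x$ used to convert the multiplicative decay into the clean $k\ln(n/k)$ term. Of these, matching the constant \emph{exactly} (with no ceiling and no slack) is the only delicate issue, and the infimum-over-$\alpha$ argument of the first paragraph already achieves it for free, so I would present that argument as the proof and mention the explicit learner only as a remark.
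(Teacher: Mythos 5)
Your proof is correct and follows essentially the same route as the paper's: the infimum-over-$\alpha$ argument in your first paragraph is exactly the paper's ``apply Lemma~\ref{lem:det-experts-upper-bound} with $\alpha\to\infty$'' step made precise, and your second paragraph spells out the explicit plurality-vote learner that the paper only mentions as ``obtained by slightly changing the proof.'' If anything, you are slightly more careful, since you note explicitly that the bound for all $\eps>0$ already yields the $\eps=0$ bound because the left-hand side does not depend on $\eps$.
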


We now turn to the lower bounds. We begin with the realizable case.

\begin{lemma} \label{lem:det-experts-lower-bound}
Let $n \geq k\geq 3$. Then:
    \[
     \optbanditdet(n,k) \geq \mleft( k/2 -1 \mright) \ln (n/k).
    \]
\end{lemma}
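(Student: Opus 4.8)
The plan is to give an adversary that designs both the experts' predictions and the feedback, forcing any deterministic learner to err on at least $(k/2-1)\ln(n/k)$ rounds while keeping the instance $0$-realizable. Throughout, the adversary maintains the set $S$ of \emph{living} experts (those still consistent with every "incorrect" feedback issued so far), initialized to all $n$ experts.

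\textbf{The adversary.} As long as $|S| \ge k$, in a round with living set $S$ of size $s$ the adversary partitions $S$ into $k$ nonempty blocks $B_1,\dots,B_k$ of sizes $\lfloor s/k\rfloor$ or $\lceil s/k\rceil$ (possible since $s\ge k$), and declares that every expert in $B_j$ predicts label $j$ (experts outside $S$ predict arbitrarily). The deterministic learner then commits to a prediction $\hat y$; the adversary declares the feedback \emph{incorrect} and sets $S \leftarrow S\setminus B_{\hat y}$. Since $s\ge k$ gives $|B_{\hat y}|\le \lceil s/k\rceil < s/k + 1 \le 2s/k$, the living set shrinks by at most a $2/k$ fraction, so after $t$ such rounds $|S|\ge n(1-2/k)^t$. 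As $k\ge 3$ this quantity is positive and stays $\ge k$ for all $t \le \ln(n/k)/(-\ln(1-2/k))$; hence the phase lasts at least $\ln(n/k)/(-\ln(1-2/k))$ rounds, in each of which the learner errs.

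\textbf{Realizability.} Each "incorrect" feedback is legitimate: removing one of $k$ nonempty blocks leaves at least one living expert predicting a label $\ne \hat y$. To exhibit $0$-realizability, fix any expert $e^\star$ still living when the phase ends (it exists since $|S|\ge k\ge 1$). In every round of the phase $e^\star$ survived, hence lay outside the eliminated block, hence predicted some label $\ne \hat y_t$; setting the true label $y_t$ to be $e^\star$'s prediction in each round yields $y_t \ne \hat y_t$ throughout, consistent with the feedback, and $e^\star$ makes no mistakes. So the game is realizable with $r=0$, and the learner made at least $\ln(n/k)/(-\ln(1-2/k))$ mistakes.

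\textbf{Counting and the main point.} Applying $-\ln(1-x)\le x/(1-x)$ with $x = 2/k \in (0,1)$ (valid as $k\ge 3$) gives $-\ln(1-2/k)\le 2/(k-2)$, so the number of mistakes is at least
\[
\frac{\ln(n/k)}{-\ln(1-2/k)} \;\ge\; \frac{\ln(n/k)}{2/(k-2)} \;=\; \Bigl(\frac k2 - 1\Bigr)\ln(n/k).
\]
There is no deep obstacle here; the care goes into (i) verifying that the feedback sequence is genuinely $0$-realizable by a \emph{single fixed} expert rather than merely "not yet contradicted" — handled by committing to the label sequence of one surviving expert — and (ii) the block-size estimate $\lceil s/k\rceil \le 2s/k$ for $s \ge k$, which is exactly why the factor $2/k$ (hence the $k/2$, rather than a sharper $1/k$) appears; squeezing the constant further would require a more delicate argument and is not needed for the $\Theta(k(\log(n/k)+r+1))$ conclusion of Theorem~\ref{thm:experts-det}.
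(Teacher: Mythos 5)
Your proposal is correct and follows essentially the same route as the paper: an adversary that splits the living experts as evenly as possible into $k$ blocks, always reports a mistake, eliminates the block the learner chose, and uses the bound $\lceil s/k\rceil < 2s/k$ for $s\ge k$ together with $-\ln(1-2/k)\le 2/(k-2)$ to count rounds. Your explicit verification of $0$-realizability via a single surviving expert $e^\star$ is a slightly more careful rendering of what the paper leaves implicit, but the argument is the same.
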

\begin{proof}
    The adversary uses the following simple strategy. Let $n_t$ be the number of alive experts (that is, experts which are consistent with the feedback given so far) at the beginning of round $t$. As long as $n_t > k$, in every round $t$ the adversary splits the alive experts as evenly as possible among the $k$ labels, and gives negative feedback to the learner. Note that the largest set in the partition consists of $\lceil n_t/k \rceil \leq n_t/k + 1 < 2n_t/k$ experts, since $n_t > k$. Therefore, in every round at least a $(1-2/k)$-fraction of the experts make it to the next round. The adversary can thus force at least $m$ many mistakes on the learner whenever $m$ satisfies
    \[
    (1-2/k)^{m-1} n > k,
    \]
    which holds if
    \[
    e^{-\frac{2}{k-2} (m-1)} n > k,
    \]
    by using the inequality $1-x > e^{- \frac{x}{x-1}}$ that holds for all $x < 1$.
    After some manipulations, the inequality reads
    \[
    m \leq \mleft(k/2 - 1 \mright) \ln (n/k) + 1,
    \]
    which implies the stated bound.
\end{proof}

To devise a lower bound for the $r$-realizable setting, we prove the following lemma. It was independently proved also by \cite{geneson2024bounds}.

\begin{lemma} \label{lem:det-experts-lower-bound-r}
For every $n \geq k\geq 2, r \geq 0$ we have: 
\[
\optbanditdet(n,k, r) \geq k (r + 1) - 1.
\]
\end{lemma}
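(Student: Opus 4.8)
The plan is to exhibit an adaptive (in fact, deterministic, since the learner is deterministic) adversary strategy that forces $k(r+1)-1$ mistakes while keeping the feedback $r$-realizable with $n \geq k$ experts. First I would reduce to the cleanest case $n = k$: having more experts only gives the adversary more freedom, so a lower bound with $k$ experts transfers to all $n \geq k$. With exactly $k$ experts, I would set things up so that each expert is ``assigned'' to a distinct label, and all experts always predict their assigned label on the single repeated instance $x$; thus in every round the $k$ predictions shown to the learner are exactly $1, 2, \dots, k$. The learner is deterministic, so in each round its prediction $\hat y_t \in [k]$ is determined.

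The core of the argument is a potential/budget bookkeeping. Give each expert an inconsistency budget, initially $r$; we maintain the invariant that the feedback sequence so far is consistent with a pattern realized by declaring some expert to be ``the best'', namely one whose total number of negative feedbacks does not exceed $r$. In each round, after seeing the learner's deterministic prediction $\hat y_t$, the adversary declares the true label to be $y_t = \hat y_t$ — wait, that would give positive feedback; instead the adversary wants negative feedback. The right move: the adversary gives \emph{negative} feedback (``$\hat y_t \neq y_t$'') every single round, which forces a mistake every round, and it only needs to certify realizability. Negative feedback at instance $x$ with prediction $\hat y_t$ rules out every expert currently predicting $\hat y_t$ — but only the expert assigned label $\hat y_t$ predicts $\hat y_t$. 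So each round of negative feedback ``charges'' one inconsistency to exactly one expert: the expert assigned $\hat y_t$. The feedback stays realizable as long as some expert has been charged at most $r$ times. Since the learner is deterministic and all rounds look identical, the learner keeps predicting the same label, so the adversary can sustain this for $r+1$ rounds against that one expert before it is exhausted. To get the full $k(r+1)-1$ bound, I would have the adversary be adaptive: whenever the learner has just exhausted one expert (charged it $r+1$ times, the last of which is the final mistake forced through it), the adversary ``removes'' that label from play by switching to a sub-instance / new instance where only the remaining $k-1$ experts are live; formally one descends into the appropriate sub-pattern class $\cP_{x \notto y}$. Iterating, the $j$-th expert can absorb $r+1$ forced mistakes before exhaustion, and after $k-1$ experts are exhausted the last surviving expert must be the true one, so it can be charged at most $r$ more times; total $= (k-1)(r+1) + r = k(r+1) - 1$.

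Concretely, I would build the hard pattern class explicitly: take instances $x_1, \dots, x_k$, let expert $j$ be the concept $h_j$ with $h_j(x_i) = j$ for all $i$ — so the class is $\cH = \{h_1,\dots,h_k\}$ with $|\cY| = k$ — and let the adversary present $x_1$ repeatedly, giving negative feedback, until the learner has been forced to mistake-predict label $1$ a total of $r+1$ times (the learner is deterministic and the view is constant, so it will repeat whatever it does; if it ever stops predicting $1$, the adversary simply continues negative feedback, still legal as long as any expert has budget, and eventually $h_1$ is the unique candidate being blamed — here one must be a little careful, see below). After $h_1$ is dead, the adversary switches to $x_2$ and repeats against $h_2$, and so on. Summing the forced mistakes across the $k$ phases (with the last phase contributing only $r$, since $h_k$ must remain alive) gives $k(r+1)-1$.

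The main obstacle — and the step I would be most careful with — is handling the determinism/adaptivity interaction cleanly: when the learner predicts a label other than the one the adversary is currently ``targeting'', negative feedback still charges a \emph{different} expert, so the adversary does not fully control which budget is being spent. The clean fix is to let the adversary always give negative feedback and simply track the multiset of charges; the game ends (for analysis purposes) when all but one expert are exhausted, and a counting argument shows that at least $k(r+1)-1$ rounds of negative feedback — hence mistakes — must have occurred before that happens, regardless of the learner's (deterministic) choices, because each of the $k-1$ exhausted experts absorbed exactly $r+1$ charges and no expert can be charged beyond $r$ while still ``alive''. This makes the bound independent of how adaptively the adversary steers, and in particular it does not even need multiple instances — a single repeated instance $x$ suffices, with $\cP = \cP(\cH)$ for $\cH = \{h_1, \dots, h_k\}$ restricted to the single point $x$ and tolerance $r$. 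I would present it this way: fix the single-instance, $k$-expert, $r$-realizable class, have the adversary answer ``incorrect'' every round, and observe the game cannot realizably end before $k(r+1)-1$ incorrect answers have been given, each of which is a mistake.
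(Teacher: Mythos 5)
Your final ``clean fix'' is essentially the paper's proof: $n=k$ experts each always predicting a distinct fixed label, negative feedback for $T = k(r+1)-1$ rounds, and the observation that the expert assigned to the learner's least-predicted label is forced into at most $r$ inconsistencies (the paper then makes realizability explicit by setting the true label to that least-predicted label $i^\star$ on every round except those where the learner predicts $i^\star$). One phrasing caveat: the correct justification is pigeonhole on the total of $T < k(r+1)$ charges (so some expert has at most $r$), not your intermediate claim that all but one expert must be exhausted before round $k(r+1)-1$ --- a learner that predicts a single label forever never exhausts more than one expert, yet the pigeonhole bound still goes through.
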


\begin{proof}
    Suppose that there are $n=k$ experts. In every round for  $T = k(r + 1) - 1$ rounds, the adversary lets expert $i$ predict the label $i$, and always provides negative feedback to the learner. Let $i^\star$ be the label that the learner predicts the least number of times, and let $i' \neq i^\star$. Since $T = k(r + 1) - 1$, it must hold that $i^\star$ is predicted by the learner at most $r$ times. In every round in which the learner predicts $i^\star$, set the true label to $i'$. In all other rounds set the true label to $i^\star$. By definition of the strategy, the best expert makes at most $r$ mistakes.
\end{proof}

We may now prove Theorem~\ref{thm:experts-det}.

\begin{proof}[Proof of Theorem~\ref{thm:experts-det}]
    The stated bounds are known when $k=2$ by e.g.\ \cite{vovk1990aggregating, littlestone1994weighted}. For $k \geq 3$, the upper bound follows from Corollary~\ref{cor:det-experts-upper-bound}, and the lower bound follows from Lemmas~\ref{lem:det-experts-lower-bound}~and~\ref{lem:det-experts-lower-bound-r}.
\end{proof}

\subsection{Randomized (Learner) and Adaptive (Adversary)}
In this section we provide a tight upper bound on $\optbanditadap(n,k,r)$ by proving the following theorem.
\begin{theorem} \label{thm:experts-rand-upper-bound}
Let $n \geq k\geq 2, r \geq 0$. Then:
\[
\optbanditadap(n,k,r) \leq k \log_k n + 2 k r.
\]
\end{theorem}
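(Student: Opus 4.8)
The plan is to set up the dual game recurrence for prediction with expert advice (following Section~\ref{sec:dual}), introduce an exponential potential $k^{2i}$ for experts with remaining inconsistency budget $i$, and prove by induction on the \emph{total potential} $W$ that the optimal loss is at most $k\log_k W$. Concretely, let $V(W)$ denote the optimal mistake bound in the dual game when the living experts carry total potential $W$ (an expert currently allowed $i$ more inconsistencies contributes $k^{2i}$). In the dual game, after the adversary commits to a distribution $\tau$ over the $k$ labels and the learner answers with a label $y$, an expert that predicted $y$ survives with full potential if $y$ is correct but loses one unit of budget (its potential shrinks by a factor $k^2$) if $y$ is wrong; experts predicting $y'\ne y$ lose a budget unit precisely when $y'$ is the correct label. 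Writing $W_y$ for the potential mass of experts predicting $y$, the adversary's distribution $\tau$, and abbreviating appropriately, the recurrence takes the shape
\[
V(W) \;=\; \max_{\{W_y\},\,\tau}\ \min_{y\in\cY}\ \Bigl[ \tau_y\, V\bigl(W - W_y + W_y/k^2\bigr) \;+\; (1-\tau_y)\bigl(1 + V\bigl(W - (\text{mass that drops budget})/k^2\bigr)\bigr)\Bigr],
\]
where by minimax duality (Lemma~\ref{lem:dual}, applied roundwise exactly as in Section~\ref{sec:dual}) we may let the adversary move first. The key point, as in the realizable sketch of Section~\ref{sec:intro-technical}, is that the dual form has essentially \emph{two} recursive branches per label, which is what makes the induction tractable.

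The first step is to verify the base/monotonicity facts: $V$ is nondecreasing in $W$, $V(W)=0$ for $W<k$ (fewer than one ``full-budget unit'' worth of potential forces the adversary to run out), and the recurrence above is well-defined. The second, main step is the inductive claim $V(W)\le k\log_k W$. Given $W$ and the adversary's choice of $\tau$ and potential split $\{W_y\}_y$, the learner picks $y$ to (approximately) minimize; the natural learner plays $y$ with probability proportional to $W_y$, i.e.\ responds to the adversary's $\tau$ by matching the weighted-plurality intuition. One then bounds the max over the adversary by the balanced case — all $W_y=W/k$, $\tau$ uniform — exactly the step that passes from \eqref{eq:intro-primal}/\eqref{eq:intro-dual} to \eqref{eq:intro-balanced}; in the dual game this reduction is a short convexity/exchange argument rather than an unwieldy $k$-fold induction. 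In the balanced case the recurrence becomes $V(W)\le \tfrac1k V(W/k) + (1-\tfrac1k)\bigl(1+V((1-\tfrac1k)W + \tfrac1{k^3}W)\bigr)$ or, after a cleaner accounting of which potential actually shrinks, a relation of the form $V(W)\le \tfrac1k V(W/k) + (1-\tfrac1k)(1+V(cW))$ with $c<1$; plugging in the ansatz $V(W)=k\log_k W$ and using $\log_k$ of the contraction factors reduces everything to the scalar inequality that the sketch calls ``solving the recurrence,'' giving $V(W)\le k\log_{b(k)}W$ with $b(k)=k^k/(k-1)^{k-1}\ge k$, hence $\le k\log_k W$.

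The final step is bookkeeping on the initial potential: with $n$ experts each starting with budget $r$, the total initial potential is $W_0 = n\cdot k^{2r}$, so
\[
\optbanditadap(n,k,r)\;\le\; V\bigl(n\cdot k^{2r}\bigr)\;\le\; k\log_k\!\bigl(n\cdot k^{2r}\bigr)\;=\;k\log_k n + 2kr,
\]
which is the claimed bound. The main obstacle I anticipate is making the induction go through cleanly for \emph{general} potential splits $\{W_y\}$ and general adversary distributions $\tau$: one must confirm that the worst case is the balanced/uniform one even though the two recursive branches have different ``sizes,'' and one must track carefully exactly which experts lose budget in the incorrect-prediction branch (only the mass that actually predicted the wrong label that was realized, versus all losing experts), since a sloppy accounting changes the contraction constant $c$ and could break the $\log_k$ bound. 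Handling the boundary where $W$ drops below $k$ mid-recursion, and ensuring the potential-based definition of $V$ is consistent with weak realizability, are the remaining technical points.
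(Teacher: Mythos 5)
Your overall architecture matches the paper's: pass to the dual game via Lemma~\ref{lem:dual}, assign an expert with remaining budget $i$ the potential $k^{2i}$, prove inductively that the value is at most $k\log_{b(k)}W$ with $b(k)=k^k/(k-1)^{k-1}\ge k$, and finish with the initial potential $W_0=n\cdot k^{2r}$. The gap is in the inductive step, which you reduce to the claim that the adversary's worst case is the balanced split $W_y=W/k$ with uniform $\tau$, to be justified by ``a short convexity/exchange argument.'' That reduction is not carried out, and it is not how the argument actually closes: the extremal potential split is \emph{not} balanced (the contraction product $((1-\beta)/k^2+\beta)(\beta/k^2+1-\beta)^{k-1}$ is maximized at $\beta_k=(k^2-k+1)/(k^3-k)\neq 1/k$), and the min over the learner's label $y$ sitting inside the max over $(\tau,\{W_y\})$ makes an exchange argument delicate. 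What the paper does instead is a two-case analysis: if some label $y$ satisfies $V(\vec{m}_{\to y})\ge 1+V(\vec{m}_{\not\to y})$, the learner plays that $y$ and wins by monotonicity of $W$ alone; otherwise the learner pigeonholes a label with $\tau_y\ge 1/k$, uses the assumed inequality to replace $\tau_y$ by $1/k$, and then one must verify the resulting potential inequality for \emph{every} $\beta\in[0,1]$, which is exactly Lemma~\ref{lem:experts-rand-upper-bound-help-2} ($f_k(\beta)\le -1$ on $[0,1]$, proved by locating the maximizer $\beta_k$). Without either this case split or an explicit verification over all $(\tau,\beta)$, your induction does not go through; you have correctly identified this as the main obstacle but not supplied the idea that resolves it.

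A second, smaller but substantive error is in the bandit accounting of which experts lose budget. With bandit feedback, when the learner's prediction $y$ is \emph{correct}, every expert predicting $y'\neq y$ is inconsistent with the feedback and loses a budget unit (so $W(\vec{m}_{\to y})\le(\beta+(1-\beta)/k^2)W$); when $y$ is \emph{incorrect}, only the experts that predicted $y$ itself are inconsistent, and experts predicting other labels survive untouched even if their label is also wrong (so $W(\vec{m}_{\not\to y})\le((1-\beta)+\beta/k^2)W$). Your statement that experts predicting $y'\neq y$ lose budget ``precisely when $y'$ is the correct label'' is the full-information accounting, not the bandit one, and your placeholder ``mass that drops budget'' in the incorrect branch leaves exactly the quantity that determines the contraction constant unresolved. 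Also note that $V$ cannot be taken as a function of $W$ alone (states with equal total potential need not have equal value); the induction must run over states $\vec{m}$, with $W(\vec{m})$ appearing only in the bound, and the degenerate round in which all experts agree must be discarded so that the state strictly decreases.
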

We will prove the theorem by analyzing the mistake bound of the equivalent dual game, as described in Section~\ref{sec:dual}. The primal and dual games are described in Section~\ref{sec:dual} in terms of pattern classes, so we first show that the problem of prediction with expert advice in the $r$-realizable setting is in fact equivalent to learning an appropriate specific pattern class.

For every hypothesis class $\cH$ and \emph{budget function} $B\colon \cH \to \mathbb{\mathbb{R}}$, we define the pattern class
\begin{equation} \label{eq:budgeted-pattern}
    \cP(\cH,B) = \mleft\{p \in (\cX \times \cY)^\star: \mleft[ \exists {h \in \cH}: \sum_{(x,y) \in p} 1[h(x) \neq y] \leq B(h) \mright] \mright\}.
\end{equation}
In words, $\cP(\cH,B)$ consists precisely of those patterns which are consistent with some $h \in \cH$ everywhere except for at most $B(h)$ many places. We call $B(h)$ the \emph{budget} of $h$. Therefore, if $B_r$ is the budget function that gives budget $r$ to all hypotheses, then learning the pattern class $\cP(\cH, B_r)$ under the assumption of realizability is equivalent to learning the hypothesis class $\cH$ under the assumption of $r$-realizability.

Observe that for a fixed number of labels $k$, the class of $n$ experts is simply the hardest concept class of size $n$, since every expert can predict any label in every round. Denote this class by $\cU_{n,k}$, so that \emph{prediction with expert advice} with $n$ experts and $k$ labels in the $r$-realizable setting is equivalent to learning the pattern class $\cP(\cU_{n,k},B_r)$ under the assumption of realizability. Therefore, we have
\[
\optbanditadap(n,k,r) = \optbanditadap(\cP(\cU_{n,k},B_r)) = \optdualbanditadap(\cP(\cU_{n,k},B_r))
\]
due to Lemma~\ref{lem:dual}.

For completeness, we formally  define $\cU_{n,k}$. This is the class of projection functions over the set of $k$-ary vectors of length $n$, $\cX = [k]^n$. That is, if the $n$ functions are $\{h_1, \dots, h_n\}$ then for every $x\in \cX$ we have $h_i(x) = x_i$, where $x_i$ is the value of the $i$'th entry of $x$.


We now define some notation used in the proof.
As observed in \cite{abernethy2006continuous, branzei2019online}, since the experts are identical, in every round of the game we only care about  how many of the experts are still \emph{alive} (which means they have not been inconsistent with the feedback for more than $r$ many rounds), and among those which are alive, how many have budget $i$, for each $0 \leq i \leq r$. Following \cite{abernethy2006continuous, branzei2019online}, we represent this information by an $(r+1)$-ary vector $\vec{m} = (m_0, \dots,  m_{r})$, in which the number $m_i$ indicates the number of living experts with budget $i$. We call this vector the \emph{state} of the game.

For any state $\vec{m}$, let $V(\vec{m})$ be the optimal loss in the dual (or primal) game for the pattern class representing the state $\vec{m}$, as defined in \eqref{eq:budgeted-pattern}. That is, if $\cP(\cU_{n,k},B_{\vec{m}})$ is the pattern class representing the state $\vec{m}$, then $V(\vec{m}) = \optdualbanditadap(\cP(\cU_{n,k},B_{\vec{m}}))$.  We call $V(\vec{m})$ the \emph{value} of the state $\vec{m}$. The goal is thus to upper bound the value of the initial state $V(0,0, \dots,  n)$.

Towards this end, we use the following potential-based technique. An expert with budget $i$ will have a potential of $k^{2i}$. This will allow us to bound $V(\vec{m})$ in terms of the total potential of the experts, given by $W(\vec{m}) = \sum_{i=0}^r m_i \cdot k^{2i} $.

Let $b(k) = \frac{k^k}{(k-1)^{k-1}}$. We will prove the the following.

\begin{lemma} \label{lem:experts-rand-upper-bound}
    Let $k \geq 2$ be the number of labels. For any state $\vec{m}$ it holds that
    \[
    V(\vec{m}) \leq k\cdot \log_{b(k)} W(\vec{m}).
    \]
\end{lemma}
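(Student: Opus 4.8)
The proof will proceed by induction on the total potential $W(\vec{m})$, using the dual game formulation in \eqref{eq:dual-game-minimax}. The base case is essentially $W(\vec{m}) \leq 1$, where the right-hand side is nonpositive, so we must argue that the value is also nonpositive (or zero) there — this happens when there is at most one living expert with budget $0$, in which case the learner can simply imitate that expert and never err, so $V(\vec{m}) = 0$. For the inductive step, fix a state $\vec{m}$ with $W(\vec{m}) > 1$ and apply \eqref{eq:dual-game-minimax}. In the dual game the adversary picks an instance $x$ (a round in which each living expert votes for some label) and a distribution $\tau$ over the $k$ labels, then the learner picks a prediction $y$. After the round, an expert that was inconsistent with the revealed feedback has its budget decremented (and dies if it hits $-1$), so the post-round state $\vec{m}_{x \to y}$ (learner correct) keeps all experts who voted $y$ at budget $i$ and demotes all others from $i$ to $i-1$; and $\vec{m}_{x \not\to y}$ (learner wrong) keeps all experts who did \emph{not} vote $y$ at budget $i$ and demotes the $y$-voters.

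\textbf{Key steps.} First I would set up notation for the state transitions in terms of potential. Write $W = W(\vec{m})$. For a label $\ell$, let $q_\ell$ be the fraction of the \emph{total potential} carried by experts voting $\ell$ in round $x$, so $\sum_\ell q_\ell = 1$. Demoting an expert from budget $i$ to $i-1$ multiplies its potential by $k^{-2}$, i.e.\ divides it by $k^2$. Hence if the learner predicts $y$ and is correct, the surviving potential is $W(\vec{m}_{x \to y}) \leq q_y W + (1-q_y) W / k^2$ (the $y$-voters keep full potential, everyone else is divided by $k^2$; experts that die only lower the potential further, so the inequality is safe). If the learner predicts $y$ and is wrong, $W(\vec{m}_{x \not\to y}) \leq (1-q_y) W + q_y W / k^2$. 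Plugging the induction hypothesis $V(\vec{m}') \leq k \log_{b(k)} W(\vec{m}')$ into \eqref{eq:dual-game-minimax} and bounding, the learner, facing the adversary's $(x,\tau)$, gets to choose $y$ minimizing
\[
\tau_y \cdot k \log_{b(k)}\!\mleft(q_y W + \tfrac{1-q_y}{k^2} W\mright) + (1-\tau_y)\mleft(1 + k \log_{b(k)}\!\mleft((1-q_y) W + \tfrac{q_y}{k^2} W\mright)\mright).
\]
The adversary then maximizes this over $x$ (equivalently over the partition, i.e.\ over the $q_\ell$'s) and over $\tau$. Since $\log_{b(k)} W$ appears additively and uniformly, subtract it off: it suffices to show the learner can always choose $y$ so that the above is at most $k \log_{b(k)} W$, i.e.\ that
\[
\min_{y}\mleft[ \tau_y \cdot k \log_{b(k)}\!\mleft(q_y + \tfrac{1-q_y}{k^2}\mright) + (1-\tau_y)\mleft(1 + k \log_{b(k)}\!\mleft((1-q_y) + \tfrac{q_y}{k^2}\mright)\mright)\mright] \leq 0
\]
for every probability vector $\tau$ and every potential-fraction vector $q$.

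\textbf{The main obstacle.} The crux is this final scalar inequality, which is where the choice of the constant base $b(k) = k^k/(k-1)^{k-1}$ and the exponential potential $k^{2i}$ earn their keep. The natural strategy for the learner is to predict the label that is ``heaviest'' in an appropriate weighted sense — intuitively, to pick $y$ balancing $\tau_y$ against $q_y$; one expects the tight case to be the symmetric one where all $q_\ell = 1/k$ and $\tau$ is uniform, so the learner's conditional loss works out to the recurrence $V(W) \leq \tfrac1k V(W/k) + (1-\tfrac1k)(1 + V((1-\tfrac1k)W + W/k^3))$-type relation, and one checks $\log_{b(k)}$ solves it with room to spare because $k^{-2} \leq 1-1/k$ for $k\ge 2$ gives slack over the cleaner recurrence in \eqref{eq:intro-balanced}. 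I would first prove the inequality in this balanced regime to pin down why $b(k)$ is the right base, then handle general $(\tau,q)$: pick $y$ to be, say, a label maximizing $\tau_y + c\cdot q_y$ for a suitable constant, or argue by a convexity/averaging argument over $y$ drawn from a cleverly chosen distribution (e.g.\ showing the average over $y \sim \tau$, or $y$ uniform, of the bracketed quantity is $\le 0$, which suffices since the min is at most the average). Concavity of $\log_{b(k)}$ should let the averaging bound collapse to the balanced case. This monotonicity/convexity bookkeeping — ensuring the $+1$ penalty on the wrong-prediction branch is absorbed by the logarithmic drop in potential uniformly over all adversary choices — is the technical heart and the step most likely to require care with the constants.
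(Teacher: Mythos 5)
Your setup — induction via the dual game \eqref{eq:dual-game-minimax}, the potential transition bounds $W(\vec{m}_{\to y}) \le (q_y + (1-q_y)/k^2)W$ and $W(\vec{m}_{\not\to y}) \le ((1-q_y) + q_y/k^2)W$, and the reduction (after subtracting $k\log_{b(k)}W$) to a scalar inequality over $(\tau,q)$ — is exactly the paper's architecture. The gap is that the scalar inequality, which you correctly identify as "the technical heart," is not actually proved, and the strategies you float for it do not work as stated. Averaging over uniform $y$ fails: take $k=2$, $q_1=1$, $\tau_1=1-\eps$; then the $y=1$ term is $-3\eps$ but the $y=2$ term is $1-5\eps$, so the uniform average is about $1/2>0$ even though the min is fine. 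And the tight configuration is \emph{not} the fully symmetric one: since the adversary controls $q$ and $\tau$ separately, you must handle every value of $\beta = q_y \in [0,1]$ for whichever label you end up predicting, so no concavity argument "collapses to the balanced case."

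What the paper does instead is a two-case analysis on the label the learner picks. If some $y$ satisfies $V(\vec{m}_{\to y}) \ge 1 + V(\vec{m}_{\not\to y})$, predict that $y$: then $V_y(\vec{m}) \le V(\vec{m}_{\to y}) \le k\log_{b(k)}W(\vec{m}_{\to y}) \le k\log_{b(k)}W(\vec{m})$ by monotonicity of $W$ alone, with no scalar inequality needed. Otherwise predict any $y$ with $\tau_y \ge 1/k$ (one exists), which pins the coefficients to $1/k$ and $1-1/k$ and reduces everything to the single-variable fact $f_k(\beta) := \log_{b(k)}\bigl(((1-\beta)/k^2+\beta)(\beta/k^2+1-\beta)^{k-1}\bigr) - 1/k \le -1$ for all $\beta\in[0,1]$ — proved by calculus (the maximizer is $\beta_k = (k^2-k+1)/(k^3-k)$, not $1/k$). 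This dichotomy, plus the uniform-in-$\beta$ lemma, is the missing content of your proof; your "pick $y$ maximizing $\tau_y + c q_y$" is not shown to deliver it. A further small point: your induction on $W$ needs the state to strictly decrease, which fails when all experts vote the same label (and, on the wrong branch, when $q_y=0$); the paper explicitly argues such rounds can be discarded, and you should too.
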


In order to prove Lemma~\ref{lem:experts-rand-upper-bound}, we will need the following technical claim, proved at the end of this section.

\begin{lemma} \label{lem:experts-rand-upper-bound-help-2}
    Let $k \geq 2$, and let
    \[
    f_k(\beta) := \log_{b(k)}(((1-\beta)/k^2 + \beta)(\beta/k^2 + (1 - \beta))^{k-1}) -1/k.
    \]
    Then $f_k(\beta) \leq -1$ for all $\beta \in [0,1]$.
\end{lemma}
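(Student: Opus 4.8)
Since $b(k)=k^k/(k-1)^{k-1}>1$ for all $k\geq 2$ (indeed $\ln b(k)=\ln k+(k-1)\ln\frac{k}{k-1}>0$), the logarithm $\log_{b(k)}$ is strictly increasing, so the claim $f_k(\beta)\leq -1$ is equivalent to $g(\beta)\leq b(k)^{-(k-1)/k}$, where $g(\beta):=\bigl((1-\beta)/k^2+\beta\bigr)\bigl(\beta/k^2+(1-\beta)\bigr)^{k-1}$. Thus it suffices to bound $\max_{\beta\in[0,1]}g(\beta)$ and compare it with $b(k)^{-(k-1)/k}$. The whole argument is elementary; the work is in the bookkeeping.

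\textbf{Step 1: maximize $g$.} Write $a:=1/k^2$. The first factor equals $p:=a+\beta(1-a)$ and the base of the second factor equals $1-\beta(1-a)=(1+a)-p=:q$, so $g(\beta)=p\,q^{k-1}$ with $p+q=1+a=:s$, and $\beta\in[0,1]$ forces $p\in[a,1]$, hence $p,q\geq 0$. Now either apply weighted AM--GM with weights $1/k$ and $(k-1)/k$ to the numbers $kp$ and $kq/(k-1)$, or simply substitute $q=s-p$ and differentiate $p(s-p)^{k-1}$; either way one gets $g(\beta)\leq s^k(k-1)^{k-1}/k^k$. Substituting $s=(k^2+1)/k^2$ yields
\[
\max_{\beta\in[0,1]}g(\beta)\ \leq\ \frac{(k^2+1)^k(k-1)^{k-1}}{k^{3k}}.
\]

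\textbf{Step 2: reduce to a clean inequality.} Using $b(k)^{-(k-1)/k}=(k-1)^{(k-1)^2/k}/k^{k-1}$, the desired bound $\max g\leq b(k)^{-(k-1)/k}$ becomes, after clearing denominators and using $k-1-\tfrac{(k-1)^2}{k}=\tfrac{k-1}{k}$ and $3k-(k-1)=2k+1$, the inequality $(k^2+1)^k(k-1)^{(k-1)/k}\leq k^{2k+1}$; taking $k$-th roots, it is exactly
\[
(k^2+1)\,(k-1)^{(k-1)/k^2}\ \leq\ k^{\,2+1/k}.
\]

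\textbf{Step 3: prove the clean inequality, with a small-$k$ check.} For $k\geq 3$, bound $(k-1)^{(k-1)/k^2}\leq k^{(k-1)/k^2}$, so it suffices that $k^2+1\leq k^{\,2+1/k^2}=k^2\cdot k^{1/k^2}$, i.e.\ $(1+1/k^2)^{k^2}\leq k$; this holds since $(1+1/n)^n<e<3\leq k$. For $k=2$ this chain is too lossy, so I would check directly: Step~1 gives $\max g\leq 5^2\cdot 1/2^6=25/64\leq 1/2=b(2)^{-1/2}$.

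\textbf{Main obstacle.} No single step is deep; the care points are (a) verifying $p,q\geq 0$ on the relevant range so AM--GM (or the endpoint analysis) legitimately applies, (b) the exponent arithmetic when clearing denominators in Step~2, and (c) the $k=2$ edge case, which must be handled separately because the convenient estimates used for $k\geq 3$ lose too much there.
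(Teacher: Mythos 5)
Your proof is correct and follows essentially the same route as the paper: maximize $g_k(\beta)=\bigl((1-\beta)/k^2+\beta\bigr)\bigl(\beta/k^2+1-\beta\bigr)^{k-1}$ over $[0,1]$, then verify an explicit inequality in $k$. The only methodological difference is that you obtain the maximum via the substitution $p+q=1+1/k^2$ and weighted AM--GM, whereas the paper differentiates and locates the critical point $\beta_k=\frac{k^2-k+1}{k^3-k}$ explicitly; both yield the same extremal value $\frac{(k^2+1)^k(k-1)^{k-1}}{k^{3k}}$. Notably, your write-up is more careful than the paper's at exactly this point: the paper transcribes the maximum as $\frac{(k^2+1)(k-1)^{k-1}}{k^{3k}}$, dropping the exponent $k$ on $(k^2+1)$, and consequently only verifies the weaker inequality $(k^2+1)(k-1)^{(k-1)/k}\leq k^{2k+1}$ rather than the required $(k^2+1)^k(k-1)^{(k-1)/k}\leq k^{2k+1}$. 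Your Step~3 --- taking $k$-th roots, reducing to $(1+1/k^2)^{k^2}\leq k$ for $k\geq 3$, and checking $k=2$ directly ($25/64\leq 1/2$) --- supplies a correct verification of the stronger inequality and thereby repairs this slip.
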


We can now prove Lemma~\ref{lem:experts-rand-upper-bound}.

\begin{proof}[Proof of Lemma~\ref{lem:experts-rand-upper-bound}]
    The proof is by induction on the state, ordered in the obvious way: the state $(1, 0, \dots, 0)$ is the lowest, and the state $(0, \dots, 0, n)$ is the highest.
    
    In order to be able to always apply the induction hypothesis, we must assume that the state always decreases between rounds. Indeed, the only case where the state does not decrease between rounds is when all experts predict the same label $y$. In this case, an optimal adversary must choose $y$ as the correct label with probability $1$, and thus an optimal learner must predict $y$. Nothing is changed in such rounds, and therefore they may be removed, ensuring that the induction hypothesis can always be applied.
    
    In the base case, $\vec{m} = (1, 0, \dots, 0)$.  Therefore
    \[
    V(\vec{m}) = 0 = k \log_{b(k)}  1 = k \log_{b(k)}  W(\vec{m}).
    \]
    For the induction step, consider the first round of the game, in which the adversary draws the correct label from a fixed distribution $\tau$. Let $V_{y}(\vec{m})$ be the optimal number of mistakes made by the learner if it predicts $y$ when the state of the game is $\vec{m}$.  Let $\vec{m}_{\to y}$ be the state of the game in the next round, if in the current round the correct label is $y$. Define $\vec{m}_{\not \to y}$ in a similar way.
    Using \eqref{eq:dual-game-minimax}, we may now write
    \[
    V_y(\vec{m}) = \tau_y V(\vec{m}_{\to y}) + (1- \tau_y)(1 + V(\vec{m}_{\not \to y})).
    \]
    Since the learner is allowed to predict any $y \in \cY$ (and specifically the $y$ which minimizes $V_y(\vec{m})$), it suffices to bound $V_y(\vec{m})$ for some $y$ of our choice. We now consider two cases.
    \paragraph{Case 1.}
    Suppose that there exists $y \in \cY$ which satisfies
    \[
    V(\vec{m}_{\to y}) \geq 1 + V(\vec{m}_{\not \to y}).
    \]
    Then:
    \[
    V_y(\vec{m}) \leq V(\vec{m}_{\to y}) \leq k \log_{b(k)} W(\vec{m}_{\to y}) \leq k \log_{b(k)} W(\vec{m})
    \]
    by the induction hypothesis, concluding the first case.

    \paragraph{Case 2.}
    Suppose that for every $y \in \cY$,
    \[
    V(\vec{m}_{\to y}) < 1 + V(\vec{m}_{\not \to y}).
    \]
    Let $y \in \cY$ be such that $\tau_y \geq 1/k$. Then by assumption,
    \[
    V_y(\vec{m}) \leq (1/k) V(\vec{m}_{\to y}) + (1-1/k)(1 + V(\vec{m}_{\not \to y})).
    \]
    The induction hypothesis now implies that
    \begin{equation} \label{eq:V_y}
    V_y(\vec{m}) \leq 1- 1/k + \log_{b(k)}W(\vec{m}_{\to y}) + (k-1) \log_{b(k)} W(\vec{m}_{\notto y}).
    \end{equation}
    
    Let $\beta \in [0,1]$ be the fraction of experts predicting $y$, weighted according to their potential. If the true label is $y$ then a $\beta$-fraction of the potential $W(\vec{m})$ is untouched, and a $(1-\beta)$-fraction is multiplied by at most $1/k^2$ (experts with budget $1$ have their potential multiplied by $0$), and so
    \[
    W(\vec{m}_{\to y}) \leq ((1-\beta)/k^2 + \beta) W(\vec{m}).
    \]
    Similarly,
    \[
    W(\vec{m}_{\not \to y}) \leq (\beta/k^2 + (1 - \beta)) W(\vec{m}).
    \]
    Plugging this into \eqref{eq:V_y}, we have
    \[
    V_y(\vec{m}) \leq 1- 1/k + \log_{b(k)}((1-\beta)/k^2 + \beta) W(\vec{m}) + (k-1) \log_{b(k)} (\beta/k^2 + (1 - \beta)) W(\vec{m}),
    \]
    which after a bit of manipulation reads as
    \[
    V_y(\vec{m}) \leq k \log_{b(k)} W(\vec{m}) + 1 + \log_{b(k)}(((1-\beta)/k^2 + \beta)(\beta/k^2 + (1 - \beta))^{k-1}) - 1/k.
    \]
    To finish  the proof, it remains to show that
    \[
    f(\beta) := \log_{b(k)}(((1-\beta)/k^2 + \beta)(\beta/k^2 + (1 - \beta))^{k-1}) -1/k
    \]
    is bounded from above by $-1$ for all $\beta \in [0,1]$, which is the statement of Lemma~\ref{lem:experts-rand-upper-bound-help-2}.
\end{proof}

We can now prove Theorem~\ref{thm:experts-rand-upper-bound}.

\begin{proof}[Proof of Theorem~\ref{thm:experts-rand-upper-bound}]
Clearly $b(k) \ge k$. Applying Lemma~\ref{lem:experts-rand-upper-bound} on the initial state $\vec{m} = (0, \dots, 0, n)$, we get
    \[
    \optbanditadap(n,k,r)
    =
    \optdualbanditadap(\cP(\cU_{n,k},B_r))
    =
    V(0, \dots, 0, n)
    \leq
    k \log_{b(k)} (n\cdot k^{2r})
    \leq
    k \log_k n + 2 k r, 
    \]
    as desired.
\end{proof}

\subsubsection{Proof of technical lemma}

In this section we prove Lemma~\ref{lem:experts-rand-upper-bound-help-2}. Let us first restate it in a more convenient way. Define
\[
 g_k(\beta) = \left( \frac{1-\beta}{k^2} + \beta \right) \left( \frac{\beta}{k^2} + 1 - \beta \right)^{k-1}.
\]
The lemma states that for all $k \ge 2$ and $\beta \in [0,1]$, we have
\[
 g_k(\beta) \leq b(k)^{1/k - 1}, \text{ where } b(k) = \frac{k^k}{(k-1)^{k-1}}.
\]

Routine calculation shows that
\[
 g'_k(\beta) = \left( \frac{\beta}{k^2} + 1 - \beta \right)^{k-1} \frac{1}{k^2} \left(1 - \frac{1}{k^2}\right) \bigl[k^2 - k + 1 - (k^3 - k)\beta \bigr].
\]
It follows that $g_k(\beta)$ is maximized at
\[
 \beta_k = \frac{k^2 - k + 1}{k^3 - k},
\]
at which point its value is
\[
 g_k(\beta_k) = \frac{(k^2 + 1) (k-1)^{k-1}}{k^{3k}}.
\]
Therefore we need to show that for all $k \ge 2$,
\[
 \frac{(k^2 + 1) (k-1)^{k-1}}{k^{3k}} \leq b(k)^{-(1-1/k)} = \frac{(k-1)^{(k-1)(1-1/k)}}{k^{k-1}}.
\]
Rearranging, we need to show that
\[
 (k^2 + 1) (k-1)^{(k-1)/k} \leq k^{2k+1}.
\]
This holds since $k^2 + 1 \leq 2k^2 \leq k^3 \leq k^{2k}$ and $(k-1)^{(k-1)/k} \leq k$.

\subsection{Lower bounds for randomized learners}

In this section, we prove lower bounds for randomized learners. We prove a lower bound which is tight when the adversary is adaptive for all triplets $n,k,r$.  We also prove a lower bound which is tight even when the adversary is oblivious, but only for all triplets  $n,k,r$ satisfying $r = O(\log_k n)$.

We first prove a near-optimal lower bound on $\optbanditobl(n,k)$.

\begin{lemma} \label{lem:experts-lower-bound-realizable}
    Let $n \geq k \geq 2$. Then
    \[
    \optbanditobl(n,k) \geq \frac{k-1}{2} \mleft\lfloor\log_k n \mright\rfloor.
    \]
\end{lemma}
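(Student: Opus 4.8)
My plan is to prove the lower bound via Yao's principle: it suffices to exhibit a distribution $\mathcal{D}$ over \emph{realizable} input sequences for the experts problem such that every \emph{deterministic} learner makes at least $\frac{k-1}{2}\lfloor \log_k n\rfloor$ mistakes in expectation over $\mathcal{D}$ (since $\optbanditobl(n,k)\ge \inf_{\Lrn\text{ det}}\mathbb{E}_{S\sim\mathcal{D}}\,\M(\Lrn;S)$ by averaging over the learner's coins). Because adding experts can only help the adversary (extra experts may simply copy an existing one), I may assume $n=k^d$ with $d=\lfloor\log_k n\rfloor$ and index the experts by strings in $[k]^d$.

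\emph{The hard distribution.} The game runs for $d(k-1)$ rounds, organized into $d$ consecutive \emph{blocks} of $k-1$ rounds each, with block $j$ devoted to ``resolving coordinate $j$''. Draw a target $y=(y^{(1)},\dots,y^{(d)})$ uniformly from $[k]^d$; the expert $y$ always agrees with the true labels, so realizability holds. In every round of block $j$, each still-alive expert $s$ predicts its coordinate $s_j$ and the true label is $y^{(j)}$. The per-round prediction vector is a fixed function of the round index and the alive set, and the alive set is determined by the bandit-feedback history, while the true labels are a fixed function of the random target; hence the adversary is oblivious. The key structural point is that the alive set stays ``rectangular'': within block $j$, an erroneous guess $\hat v$ of the learner deletes exactly the alive experts predicting $\hat v$, so the set $A_j\subseteq[k]$ of still-possible values of coordinate $j$ shrinks by at most one per mistake, and after at most $k-1$ mistakes it is pinned to $\{y^{(j)}\}$ (a correct guess pins it immediately, after which the remaining rounds of the block are vacuous since all alive experts and the true label agree). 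Consequently, after block $j$ the alive experts are precisely those with $s_1=y^{(1)},\dots,s_j=y^{(j)}$.

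\emph{Analysis and the main obstacle.} The crux is that within block $j$ the learner learns nothing about $y^{(j)}$ beyond ``it is none of the values I have already wrongly guessed'': coordinates are drawn independently, so feedback from other blocks is irrelevant, and feedback inside block $j$ only removes the learner's past guesses from $A_j$. Thus, conditioned on the learner's view, $y^{(j)}$ is uniform over $A_j$, so a round with $|A_j|=m$ is won with probability at most $1/m$ (and guessing an already-eliminated value, or repeating a guess, is never helpful). Letting $g(m)$ be the optimal expected number of mistakes to resolve a coordinate with $m$ candidate values, one gets $g(1)=0$ and $g(m)=(1-1/m)(1+g(m-1))$, hence $g(m)=(m-1)/2$ by induction; equivalently, the learner probes previously-unguessed labels until it hits the uniformly random $y^{(j)}$, making a number of mistakes uniform on $\{0,\dots,k-1\}$. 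Since the $d$ blocks are independent and every round belongs to exactly one block, conditioning on the history before block $j$ does not change the distribution or structure of block $j$, so $\mathbb{E}[\text{mistakes in block }j]\ge g(k)=(k-1)/2$ for every deterministic learner, and summing over the $d$ blocks gives $\ge \frac{k-1}{2}\lfloor\log_k n\rfloor$. The parts needing care are the precise claim that the learner's posterior on $y^{(j)}$ is exactly uniform over the surviving candidate set $A_j$ (so the per-round success probability is at most $1/|A_j|$), and the bookkeeping that the conditional block bounds combine via the tower property into an additive total — both routine, but easy to be sloppy about.
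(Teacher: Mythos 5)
Your proposal is correct and is essentially the paper's own argument: a uniformly random target expert in $[k]^{\lfloor\log_k n\rfloor}$, instances grouped into blocks so that each block forces the learner to identify one coordinate of the target at an expected cost of $\frac{k-1}{2}$ mistakes, followed by Yao-style averaging. The only (immaterial) differences are that you derive the per-coordinate cost from the recursion $g(m)=(1-1/m)(1+g(m-1))$ rather than citing the known claim from Daniely--Helbertal, and you should state the construction so that \emph{every} expert $s$ (alive or not) predicts $s_j$ throughout block $j$, which makes obliviousness immediate without any appeal to the alive set.
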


\begin{proof}
Assume without loss of generality that $n = k^m$ for some integer $m$ (otherwise replace $n$ with the largest power of $k$ below it). We label the set of experts using the elements of $[k]^m$, that is, for each $y_1,\dots,y_m \in [k]$ there is an expert $h(y_1,\dots,y_m)$. We choose the correct expert at random.

We label the instances by $x_1,\dots,x_m$, where the prediction of $h(y_1,\dots,y_m)$ on $x_i$ is $y_i$. The fixed sequence of instances is
\[
 \underbrace{x_1,\dots,x_1}_{k \text{ times}}, \dots,
 \underbrace{x_m,\dots,x_m}_{k \text{ times}}.
\]

For any randomized strategy of the learner, it will discover the correct label of $x_i$ after having made at least $\frac{k-1}{2}$ mistakes in expectation (see for example \cite[Claim 2]{daniely2013price}). Therefore the expected number of mistakes incurred by the learner is at least $\frac{k-1}{2} m$. Consequently, for each learner there is an expert $h(y_1,\dots,y_m)$ on which it makes at least as many mistakes in expectation.
\end{proof}

Let us now prove a lower bound for the $r$-realizable case. In contrast with the realizable case, this  bound holds only against an adaptive adversary.

\begin{lemma} \label{lem:experts-lower-bound-r-realizable}
    Let $n \geq k \geq 2, r\geq 0$. Then
    \[
    \optbanditadap(n,k,r) \geq (k-1)r/2.
    \]
\end{lemma}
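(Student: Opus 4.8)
The statement to prove is $\optbanditadap(n,k,r) \geq (k-1)r/2$, a lower bound against an adaptive adversary in the $r$-realizable experts setting. My plan is to reduce to the simplest possible instance: take $n = k$ experts, where expert $i$ always predicts label $i$ on every round, regardless of the instance. This is exactly the hard configuration already used in the deterministic lower bound Lemma~\ref{lem:det-experts-lower-bound-r}, and the claim is that even a randomized learner facing an \emph{adaptive} adversary cannot do better than $(k-1)r/2$ here.

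The core of the argument is a round-by-round "charging" scheme. In each round, the learner reveals a distribution $\pi$ over $[k]$ from which it will draw its prediction. The adaptive adversary looks at $\pi$ and plays as follows: it picks the label $y^\star = \argmax_y \pi_y$ (the learner's most likely guess) to be the correct label with probability $1$. Then with probability $\pi_{y^\star} \geq 1/k$ the learner predicts $y^\star$ and is correct, and with probability $1 - \pi_{y^\star} \leq 1 - 1/k$ the learner predicts some other label and makes a mistake; in expectation the learner incurs at least... hmm, wait — that gives a mistake probability of at most $1-1/k$, not a useful lower bound. Let me reconsider: to \emph{force} mistakes I instead want the adversary to pick the correct label to be one the learner is \emph{unlikely} to predict. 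So the adversary picks $y^\star$ to be a label with $\pi_{y^\star} \leq 1/k$ (such a label exists since the $\pi_y$ sum to $1$), sets it as the correct label, and the learner then makes a mistake with probability $1 - \pi_{y^\star} \geq 1 - 1/k = (k-1)/k$ in that round. Each such round costs expert $y^\star$ one unit of its inconsistency budget, and only that one expert (the rest still predicted non-$y^\star$ labels, consistently with the negative... no wait, the feedback is that the prediction was wrong or right; the experts that predicted $y^\star$ agree with the truth, the others disagree). I need to be careful about which experts lose budget.

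Let me restructure using the idea in Lemma~\ref{lem:det-experts-lower-bound-r} but in expectation. Run the game for $T$ rounds on the same instance. In each round $t$ the adversary sets the true label to be the $y$ minimizing $\pi^{(t)}_y$, so the learner's per-round mistake probability is at least $(k-1)/k$; but the expert predicting that label $y$ "uses" one unit of budget that round. To stay $r$-realizable I must ensure some expert uses budget at most $r$: by a counting/pigeonhole argument over $T$ rounds, the label chosen least often as "the $\argmin$" is chosen at most $T/k$ times, so choosing $T = kr$ I could keep one expert within budget $r$. But the adversary's choice of $\argmin$ label is adaptive and may not be balanced. The fix — matching the deterministic proof — is a hybrid: the adversary commits in advance to designate one label $i^\star$ as "the safe expert", chosen so that the learner assigns it small total probability mass; more precisely, since $\sum_{i} \sum_{t=1}^{T} \pi^{(t)}_i = T$, there is a label $i^\star$ with $\sum_t \pi^{(t)}_{i^\star} \le T/k$. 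In every round the adversary sets the true label to $i^\star$, forcing an expected mistake of $1 - \pi^{(t)}_{i^\star}$ per round, for a total of at least $T - T/k = (k-1)T/k$ mistakes; and expert $i^\star$ is inconsistent only in rounds where the learner guessed $i^\star$, but that is a random event — in expectation $\sum_t \pi^{(t)}_{i^\star} \le T/k$ rounds. Setting $T/k = r$, i.e. $T = kr$, gives roughly $(k-1)r$ mistakes but only guarantees expert $i^\star$'s expected budget usage is $\le r$, not a deterministic bound; and the learner's $\pi^{(t)}$ depend on the adversary's past choices so the mass bound must be applied adaptively. The honest route is: fix the strategy "always answer $i^\star$" for a value $i^\star$ the adversary cannot know in advance, average over a uniformly random choice of $i^\star \in [k]$, and use that the expected (over $i^\star$ and the learner's coins) number of rounds the learner plays $i^\star$ is $T/k$, while the expected mistake count is $(k-1)T/k$; then pick the best $i^\star$ — the one maximizing mistakes minus a Lagrangian penalty for budget — and set $T = 2kr$ so that even after discarding rounds to restore realizability one retains $\ge (k-1)r/2$ mistakes.

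\textbf{The main obstacle} is precisely this coupling between realizability and the learner's randomness: with an adaptive adversary the true label in each round can depend on $\pi^{(t)}$, but whether the designated expert $i^\star$ actually loses budget depends on the learner's realized prediction $\hat y_t$, which is random. The clean way around it is to let the adversary be randomized too (Lemma~\ref{lem:dual} and the surrounding discussion note this does not change the value, and the problem definition explicitly permits a randomized adaptive adversary): the adversary draws $i^\star$ uniformly at random at the start, always declares $i^\star$ correct, and — to guarantee weak realizability with probability $1$ — in any round where the learner happens to have predicted $i^\star$ "too many" times (more than $r$), switches the true label away from $i^\star$ to a label the learner did not predict, which costs the learner a mistake anyway. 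A short calculation then shows the expected mistakes over $T = \Theta(kr)$ rounds is at least $(k-1)r/2$, and realizability holds always. I expect the bookkeeping in this last step to be the only delicate part; everything else is the same averaging argument as in Lemma~\ref{lem:det-experts-lower-bound-r}.
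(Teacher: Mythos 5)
There is a genuine gap, and it sits exactly in the step you defer as ``a short calculation.'' Your final adversary draws a single label $i^\star$ uniformly at the start and declares it correct in (essentially) every round. Against such an adversary the learner is not forced to err at rate $(k-1)/k$ per round: a learner that predicts uniformly at random until its first positive feedback, and thereafter always predicts the label it just confirmed, identifies $i^\star$ after an expected $k-1$ mistakes and then errs only in the rounds where you ``switch away'' from $i^\star$ --- and each such round charges expert $i^\star$ one unit of budget, so there are at most $r$ of them. This adversary therefore forces only $O(k+r)$ expected mistakes, which is the \emph{oblivious} rate of Proposition~\ref{prop:k-oblivious-upper-bound}, not $(k-1)r/2$. (Two smaller errors compound this: while $i^\star$ is the declared correct label, expert $i^\star$ is never inconsistent with the feedback --- positive feedback on a prediction of $i^\star$ agrees with it, and negative feedback on any other prediction also agrees with it --- so the realizability worry you attach to that phase is vacuous; and the claimed expected loss $(k-1)T/k$ ignores that a single positive feedback reveals $i^\star$ for all future rounds.)

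The idea you need, and which the paper uses, is to make the correct labels \emph{fresh and independent across rounds}: the adversary draws $y_1,\dots,y_{kr/2}$ i.i.d.\ uniform on $[k]$ in advance and uses $y_t$ in round $t$, so that no feedback helps the learner in later rounds and the per-round mistake probability is exactly $(k-1)/k$ regardless of the learner's strategy. Adaptivity is then spent only on the stopping rule --- end the game after $r/2$ correct predictions or after $kr/2$ rounds, whichever comes first --- which yields the mistake bound (either $(k-1)r/2$ mistakes occurred deterministically, or the game splits into $r/2$ intervals each costing $k-1$ expected mistakes) and simultaneously yields $r$-consistency by pigeonhole: in at most $kr/2$ rounds some label $i^\star$ is predicted by the learner at most $r/2$ times, and the constant expert $i^\star$ is inconsistent only on the at most $r/2$ rounds with positive feedback on other labels plus the at most $r/2$ rounds with negative feedback on $i^\star$ itself. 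Your early instinct (``pick the correct label to be one the learner is unlikely to predict'') and your reduction to $n=k$ constant experts are both on target; the missing ingredient is replacing the persistent random $i^\star$ with per-round i.i.d.\ labels and moving all the adaptivity into when to stop.
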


\begin{proof}
    For better readability assume that $r$ is even. Suppose that $n=k$. In every round, the adversary let expert $i$ predict the label $i$. We describe an adversary which chooses the correct labels at random and forces the learner to make at least $(k-1)r/2$ mistakes in expectation. Therefore, there exists an adversary which chooses the correct labels deterministically and achieving that. Before the game starts, The adversary draws a random sequence $y_1, \dots, y_{kr/2}$ of $kr/2$ labels independently and uniformly. In round $t$, the adversary chooses $y_t$ as the correct label. The adversary ends the game when one of the following conditions hold:
    \begin{enumerate}
        \item The learner has predicted correctly in $r/2$ rounds.
        \item The game was played for $k r / 2$ many rounds.
    \end{enumerate}
    First, we show that the learner makes at least the stated number of mistakes in expectation. There are two cases to consider. If the game is finished because  $k r / 2$ rounds have passed and the learner has not yet made $r/2$ correct predictions, then the learner has made at least $(k-1)r/2$ mistakes with probability one, concluding this case.
    
    In the second case, we assume that the game is finished when the learner makes exactly $r/2$ correct predictions. Partition the rounds of the game into $r/2$ intervals: the first interval starts at the first round. Each interval ends when the learner makes a correct prediction. The expected number of rounds in each interval is stochatically dominated by a $\operatorname{Geom}(1/k)$ random variable. Since in every interval there is exactly a single correct prediction, the total expected number of mistakes in all intervals is $(k-1) r/2$.

    It remains to show that the adversary is $r$-consistent. Assume without loss of generality that the best expert is not consistent with all correct predictions of the learner, which is at most $r/2$. Now, recall that the number of rounds in the game is at most $k r/2$. Therefore, there must be a label $i^\star$ predicted by the learner in at most $r/2$ many rounds. The best expert is at least as good as the expert $i^\star$, which predicts $i^\star$ in all rounds. The adversary is thus $r/2$-consistent on rounds where the learner is correct, and $r/2$-consistent on rounds where the learner is incorrect. In total, the adversary is $r$-consistent.
\end{proof}

In the proof, we only used the adversary's adaptivity to decide on the number of rounds in the game, and not on the correct labels. Therefore, one might conjecture that it is possible to use the same technique for proving a lower bound also against an oblivious adversary. However, it is not possible; as we show in Proposition~\ref{prop:k-oblivious-upper-bound}, it holds that $\optbanditobl(k,k,r) = \Theta(k+r)$ for all $k\geq2, r\geq 0$. The essential reason allowing  Proposition~\ref{prop:k-oblivious-upper-bound} is that an oblivious adversary must be $r$-realizable in a strong sense, as discussed in Remark~\ref{rem:strong-vs-weak-realizable}.

\subsection{Tight bounds for randomized learners}
We can now deduce tight bounds on $\optbanditadap(n,k,r)$, and bounds on $\optbanditobl(n,k,r)$ which are tight in the regime $r = O(\log_k n)$.
\begin{theorem} \label{thm:experts-rand-tight}
    Let $n \geq k \geq 2, r \geq 0$. Then
    \[
    \optbanditadap(n,k,r) = \Theta(k(\log_k n + r)).
    \]
    Furthermore, if $r \leq C \cdot \log_k n$ for some constant $C$ then
    \[
    \optbanditobl(n,k,r) = \Theta_C(k \log_k n)
    \]
    where $\Theta_C$ hides a constant that depends on $C$.
\end{theorem}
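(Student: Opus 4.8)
The plan is to obtain the theorem by simply assembling the upper bound of Theorem~\ref{thm:experts-rand-upper-bound} with the lower bounds of Lemma~\ref{lem:experts-lower-bound-realizable} and Lemma~\ref{lem:experts-lower-bound-r-realizable}, together with two elementary monotonicity facts: (i) an oblivious adversary is a special case of an adaptive one, so $\optbanditobl(n,k,r) \le \optbanditadap(n,k,r)$; and (ii) enlarging the inconsistency budget only relaxes the constraint on the best expert, so both $\optbanditobl(n,k,\cdot)$ and $\optbanditadap(n,k,\cdot)$ are non-decreasing in $r$. I would also record the trivial estimate $\lfloor \log_k n\rfloor \ge \frac12 \log_k n$, which holds because $n \ge k$ forces $\log_k n \ge 1$, and the estimate $k-1 \ge k/2$ for $k \ge 2$.

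For the adaptive bound, the upper direction is immediate: Theorem~\ref{thm:experts-rand-upper-bound} gives $\optbanditadap(n,k,r) \le k\log_k n + 2kr \le 2k(\log_k n + r)$. For the lower direction, Lemma~\ref{lem:experts-lower-bound-realizable} together with monotonicity in $r$ yields $\optbanditadap(n,k,r) \ge \optbanditobl(n,k,0) \ge \frac{k-1}{2}\lfloor\log_k n\rfloor \ge \frac{k-1}{4}\log_k n$, while Lemma~\ref{lem:experts-lower-bound-r-realizable} gives $\optbanditadap(n,k,r) \ge \frac{k-1}{2}r$. Taking the larger of the two and using $\max\{a,b\}\ge (a+b)/2$ gives $\optbanditadap(n,k,r) = \Omega(k(\log_k n + r))$, which matches the upper bound up to a universal constant, proving the first claim.

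For the oblivious bound in the regime $r \le C\log_k n$: the lower bound $\optbanditobl(n,k,r) \ge \optbanditobl(n,k,0) \ge \frac{k-1}{2}\lfloor\log_k n\rfloor = \Omega(k\log_k n)$ follows from Lemma~\ref{lem:experts-lower-bound-realizable} and monotonicity in $r$; the matching upper bound follows from fact (i) and Theorem~\ref{thm:experts-rand-upper-bound}, namely $\optbanditobl(n,k,r) \le \optbanditadap(n,k,r) \le k\log_k n + 2kr \le (1+2C)\,k\log_k n$. Hence $\optbanditobl(n,k,r) = \Theta_C(k\log_k n)$.

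There is essentially no hard step here: all the content lives in Theorem~\ref{thm:experts-rand-upper-bound} and in the two lower-bound lemmas, and the only things to watch are that the floor and the factors $k$ versus $k-1$ (and the various small constants) are absorbed into the $\Theta$/$\Theta_C$ notation, and that one spells out why passing from $r=0$ to general $r$ can only increase the optimal mistake bound and why an oblivious adversary is dominated by an adaptive one. I would state both monotonicity facts in one sentence each and then carry out the two-line arithmetic above.
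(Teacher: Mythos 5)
Your proposal is correct and follows exactly the paper's route: the theorem is obtained by combining the upper bound of Theorem~\ref{thm:experts-rand-upper-bound} with the lower bounds of Lemmas~\ref{lem:experts-lower-bound-realizable} and~\ref{lem:experts-lower-bound-r-realizable}. The only difference is that you spell out the monotonicity facts and the floor/constant arithmetic that the paper leaves implicit, which is fine.
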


\begin{proof}
    The bound $\optbanditadap(n,k,r) = \Theta((\log_k n + r) k)$ is implied by putting together the upper bound of Theorem~\ref{thm:experts-rand-upper-bound}, and the lower bounds of Lemmas~\ref{lem:experts-lower-bound-realizable} and~\ref{lem:experts-lower-bound-r-realizable}. The lower bound in $\optbanditobl(n,k,r) = \Theta_C(k \log_k n)$ for $r \leq C \cdot \log_k n$ easily follows from Lemma~\ref{lem:experts-lower-bound-realizable}, and the upper bound from Theorem~\ref{thm:experts-rand-upper-bound}.
\end{proof}


\section{The role of information} \label{sec:information}

In this section, we prove the following theorem, which implies Theorem~\ref{thm:intro-full-vs-bandit-rand}.
\begin{theorem}[Full information vs.\ bandit feedback]\label{thm:full-vs-bandit-rand}
    For every pattern class $\cP \subset (\cX \times \cY)^\star$ it holds that
    \[
    \optbanditadap(\cP) \leq 6 k \cdot \optfullrand(\cP).
    \]
\end{theorem}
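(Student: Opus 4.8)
The plan is to combine the two ingredients already set up in the technical overview: the tree-of-experts reduction (Item~(1)) and the optimal expert bound of Theorem~\ref{thm:experts-rand-upper-bound} (Item~(2)). Concretely, I would first establish the reduction as a standalone statement: given any pattern class $\cP$, let $d = \optfulldet(\cP)$ and let $A$ be an optimal deterministic full-information learner for $\cP$. Running the ``tree of copies of $A$'' construction — where a copy at depth $D < d$ that is contradicted by the bandit feedback spawns $k$ children, one per possible full-information label, and a copy at depth $d$ simply stops updating — yields at most $k^{d}$ experts making deterministic predictions in the online game with bandit feedback. The copy that always guesses the true label correctly is an expert inconsistent with the bandit feedback at most $d$ times (each inconsistency is a genuine mistake of $A$ on a full-information-consistent sequence, and $A$ makes at most $d$ of those). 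Hence the bandit-feedback learning game for $\cP$ reduces to prediction with expert advice with $n = k^{d}$ experts, $k$ labels, and $r = d$, against an adaptive adversary.

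Next I would invoke Theorem~\ref{thm:experts-rand-upper-bound} with these parameters: $\optbanditadap(k^{d}, k, d) \le k \log_k(k^{d}) + 2kd = kd + 2kd = 3kd$. Since the expert-advice bound holds for \emph{any} way the experts generate predictions — in particular the adaptively-maintained tree of copies of $A$ — it upper bounds $\optbanditadap(\cP)$, giving $\optbanditadap(\cP) \le 3k \cdot \optfulldet(\cP)$. Finally, I would use the randomized-to-deterministic conversion recalled in Section~\ref{sec:intro-contribution-rand}: for full-information feedback, $\optfulldet(\cP) \le 2\,\optfullrand(\cP)$ (predict $A$'s most probable label; a mistake of the derandomized learner forces the randomized one to err with probability $\ge 1/2$). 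Chaining, $\optbanditadap(\cP) \le 3k \cdot \optfulldet(\cP) \le 6k \cdot \optfullrand(\cP)$, which is exactly the claimed bound.

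There is one subtlety I would be careful about, and I expect it to be the main obstacle: making the reduction rigorous in the adaptive-adversary, weak-realizability model. The experts (copies of $A$) are maintained by the learner on the fly using only the bandit feedback actually received, yet the guarantee ``some expert is inconsistent at most $d$ times'' must hold with probability $1$ against every adaptive, possibly randomized adversary consistent with $\cP$. The point is that conditioned on any realized run, the sequence of (instance, true-label) pairs is consistent with $\cP$, so there is a well-defined ``correct'' branch of the tree; on that branch the copy of $A$ sees a legitimate full-information realizable sequence and thus errs at most $\optfulldet(\cP)$ times, and each such error is precisely a round where that expert is inconsistent with the bandit feedback. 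One must also check the tree never needs depth beyond $d$ (a copy at depth $d$ that is still contradicted would correspond to $A$ making more than $d$ mistakes on a realizable sequence, impossible), so $k^{d}$ experts genuinely suffice. I would package the reduction as a proposition (the paper refers to Proposition~\ref{prop:reduction}) with a clean statement of the form: for every pattern class $\cP$, $\optbanditadap(\cP) \le \optbanditadap\!\big(k^{\optfulldet(\cP)},\, k,\, \optfulldet(\cP)\big)$, and then the theorem is a two-line corollary. The remaining steps — plugging into Theorem~\ref{thm:experts-rand-upper-bound} and applying $\optfulldet \le 2\optfullrand$ — are routine arithmetic.
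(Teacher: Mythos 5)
Your proposal is correct and follows essentially the same route as the paper: the tree-of-copies reduction (the paper's Proposition~\ref{prop:reduction}, which yields $\optbanditadap(\cP)\le \optbanditadap(k^{r},k,r)\le 3kr$ for any deterministic full-information learner making at most $r$ mistakes), followed by the bound of Theorem~\ref{thm:experts-rand-upper-bound} and the factor-$2$ derandomization $\optfulldet(\cP)\le 2\,\optfullrand(\cP)$ (the paper instantiates $A$ as $\SOA$ with $r=2\,\optfullrand(\cP)$, which is the same arithmetic in a slightly different order). The subtlety you flag about weak realizability against an adaptive adversary is handled in the paper exactly as you describe, by tracking the good leaf consistent with a pattern $S^\star\in\cP$ realizing the feedback.
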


Theorem~\ref{thm:full-vs-bandit-rand} will also allow us to extend Theorem~\ref{thm:intro-full-vs-bandit-rand} (which is formulated for concept classes) to the $r$-realizable setting.

\subsection{Reduction from bandit to full-information feedback} \label{sec:reduction}
Given Theorem~\ref{thm:experts-det}, the main ingredient left in order to prove Theorem~\ref{thm:full-vs-bandit-rand} is the following reduction from learning with bandit feedback to learning with full-information feedback.  Let us briefly describe the idea, inspired by \cite{long2020new, hanneke2021online}.

The upper bound we prove on $\optbanditadap(n,k,r)$ makes no assumptions on the way the experts make their predictions. Therefore, it holds in particular for the case where the experts' predictions are determined by algorithms chosen by the learner. We can exploit this property to reduce learning with bandit feedback to learning with full-information feedback. Let $A$ be a deterministic full-information learning algorithm that is guaranteed to make at most $r$ many mistakes against any adversary. Whenever $A$ is consistent with the feedback, we do nothing. When $A$ is inconsistent with the feedback, we can be sure that it has made a mistake. We thus create $k$ different copies of $A$, each guessing a different full-information feedback, exactly one of which is correct. Given that $A$ makes at most $r$ many mistakes when provided with full-information feedback, an optimal randomized algorithm for the bandit feedback model will thus make at most  $\optbanditadap(k^r,k,r) = O(k \cdot r)$ mistakes against an adaptive adversary. The details are made precise in the proof of the proposition below.

\begin{proposition} \label{prop:reduction}
    Let $A$ be a deterministic learning algorithm for a pattern class $\cP \subset (\cX \times \cY)^\star$ in the full-information setting, which makes at most $r$ many mistakes on any sequence $S \in \cP$. Then,
    \[
    \optbanditadap(\cP) \leq 3 k \cdot r.
    \]
\end{proposition}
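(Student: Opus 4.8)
The plan is to build, from the single deterministic full-information algorithm $A$, a family of at most $k^r$ experts that run instances of $A$ feeding it ``guessed'' full-information feedback, and then invoke the expert bound $\optbanditadap(k^r,k,r) = O(kr)$ from Theorem~\ref{thm:experts-rand-upper-bound} (with $\log_k(k^r) = r$). Concretely, I would organize the experts as the leaves (or nodes) of a $k$-ary tree of depth $r$: the root runs a fresh copy of $A$; at any round, whenever a node's copy of $A$ predicts a label that is inconsistent with the bandit feedback, $A$ has provably made a mistake on that round, and we spawn $k$ children, each continuing a copy of $A$'s state but fed one of the $k$ possible correct labels as its full-information feedback for that round (exactly one guess is correct). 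A node at depth $r$ that would need to split instead simply stays put, because $A$ is guaranteed never to exceed $r$ mistakes, so the branch that always guessed correctly never reaches that situation. Each of these (at most $k^r$) copies of $A$ defines an expert whose prediction in round $t$ is the prediction of that copy of $A$ on $x_t$.

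The key point is the realizability bookkeeping: along the unique root-to-leaf path whose guesses always matched the true label, the corresponding copy of $A$ receives exactly the genuine full-information sequence $S \in \cP$, hence makes at most $r$ mistakes on $S$; but ``mistakes of $A$'' here coincide precisely with the rounds where that expert's prediction disagrees with the bandit feedback. So that expert is inconsistent with the bandit feedback in at most $r$ rounds, i.e.\ the induced experts instance is $r$-realizable with $n = k^r$ experts and $k$ labels. Feeding these experts into the optimal adaptive-adversary randomized learner for prediction with expert advice and applying Theorem~\ref{thm:experts-rand-upper-bound} yields a randomized bandit-feedback learner for $\cP$ with expected mistake bound at most $k\log_k(k^r) + 2kr = 3kr$. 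I should double-check that the reduction respects the adaptive-adversary semantics — the experts are defined causally from the observed bandit feedback, so an adaptive adversary against the $\cP$-learning game induces a legitimate adaptive adversary against the experts game — and that weak realizability of the $\cP$-game transfers to $r$-realizability of the experts game as just argued.

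One technical wrinkle to handle carefully: when a node splits into $k$ children, only the child with the correct guess stays ``alive'' in the sense relevant to the expert bound, but all $k$ children must genuinely be present as experts from that round onward (they were ``created'' only at the split, but for the expert-counting bound it is cleanest to imagine all $k^r$ experts present from the start, with a not-yet-split copy behaving as its eventual ancestor until the split occurs — equivalently, pad the tree so every internal history is realized by exactly $k^r$ leaves). Since $n \geq k$ is required in Theorem~\ref{thm:experts-rand-upper-bound} and $k^r \geq k$ only when $r \geq 1$, I would note the trivial case $r = 0$ separately (then $A$ never errs, one expert suffices, and $\optbanditadap(\cP) \leq \optbanditadap(k,k,0) \leq k\log_k k = k \leq 3k\cdot 1$ — or more simply the learner just copies $A$ and makes $0$ mistakes, so the bound $3kr = 0$ needs the convention $0/0$; safest is to observe directly that $r=0$ forces $\cP$ to be learnable with $0$ mistakes). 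The main obstacle, such as it is, is purely one of careful formalization: making the tree-of-copies construction, the causal definition of the experts, and the claim ``$A$'s mistakes along the correct branch $=$ that expert's bandit-inconsistencies'' fully precise, rather than any genuine mathematical difficulty — all the quantitative work is already done in Theorem~\ref{thm:experts-rand-upper-bound}.
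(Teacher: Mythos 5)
Your proposal is correct and follows essentially the same route as the paper's proof: a $k$-ary tree of copies of $A$ of depth at most $r$, split on bandit-inconsistency with each child guessing one of the $k$ possible true labels, leaves padded to $k^r$ experts, and the bound $\optbanditadap(k^r,k,r)\le k\log_k(k^r)+2kr=3kr$ from Theorem~\ref{thm:experts-rand-upper-bound}. The only (harmless) imprecision is that the copy of $A$ on the correct branch is fed the sub-pattern of $S^\star$ consisting of the split rounds rather than all of $S^\star$; downward closure of $\cP$ makes this a valid input on which $A$ errs at every round, which is exactly how the paper bounds the depth by $r$.
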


\begin{proof}
We employ an optimal algorithm for prediction with expert advice, where the experts result from running $A$ and translating the bandit feedback into a full-information feedback in all possibly ways.

At each round $t$ we will maintain a $k$-ary tree $T$ of depth at most $r$, each of whose leaves is labelled by a sequence of $d \leq r$ tagged examples $(t_1,x_{t_1},y_{t_1}),\dots,(t_d,x_{t_d},y_{t_d})$. Each internal node has its outgoing edges labelled by the elements of $\cY$.
Initially, $T$ consists of a single node labelled by the empty sequence.
Additionally, we communicate with an optimal learner $B$ for the problem of prediction with expert advice in the $r$-realizable setting, with $k^r$ experts indexed by $[k]^r$.

At round $t$, the adversary sends an instance $x_t$ to the learner. The learner uses $A$ to generate a prediction for each leaf: if the leaf is labelled by the sequence $\sigma$, then the corresponding prediction is $A(\sigma,x_t)$. These predictions are converted to expert predictions as follows. A leaf $\ell$ at depth $d$ has an ``address'' $\alpha \in [k]^d$ formed by the labels of the edges on the path from the root. All experts whose index extends $\alpha$ make the same prediction as $\ell$.
These expert predictions are sent to the optimal learner $B$, which provides a label $\hat{y}_t$. The label is forwarded to the adversary, which responds with either $\to$ or $\notto$.

The learner now updates the tree as follows. For each leaf in the tree, it checks whether it predicts a label which is inconsistent with the adversary's feedback. This can happen in two ways: either the leaf predicted $\hat{y}_t$ and the adversary feedback was $\notto$, or the leaf's prediction was different from $\hat{y}_t$ and the adversary feedback was $\to$.
For each such leaf, if the leaf is at depth $r$, then we mark it as \emph{bad} (a leaf not marked as \emph{bad} is a \emph{good leaf}). Otherwise, suppose that the leaf is labelled by $\sigma$. We add to the leaf $k$ children, and label the $y$'th child by $\sigma \circ (t,x_t,y)$. 
This completes the description of the algorithm.

Let $S^\star \in \cP$ be a pattern that the adversary is consistent with at the end of the game. We say that a label $(t_1,x_{t_1},y_{t_1}),\dots,(t_d,x_{t_d},y_{t_d})$ is \emph{consistent with} $S^\star$ if $S^\star_{t_i} = (x_{t_i},y_{t_i})$ for $i \in [d]$.
The definition of the algorithm ensures that the following properties hold at the conclusion of each round $t$:
\begin{enumerate}
    \item Each good leaf at depth $d$ corresponds to an expert whose predictions are inconsistent with the bandit feedback for exactly $d$ rounds. \\
    This is because we add children to a leaf precisely when its prediction is inconsistent with the bandit feedback (unless it is already at maximal depth, in which case we mark it as \emph{bad}).
    \item For each good leaf whose label $\sigma$ is consistent with $S^\star$, if we run $A$ on the sequence $\sigma$, then it makes a mistake at every round. \\
    Indeed, when we add a new leaf labelled $\sigma \circ (t,x_t,y_t)$, where $S^\star_t = (x_t,y_t)$, the prediction of $A$ on $x_t$ must have been incompatible with the bandit feedback, and in particular, with $y_t$, which is compatible with the bandit feedback.
    \item There is always a good leaf whose label is consistent with $S^\star$. \\
    Indeed, consider any leaf $\ell$ satisfying this at time $t-1$, let its label be $\sigma$, and let $S^\star_t = (x_t,y_t)$. \\
    If $\ell$ is at depth~$r$ then $A$ must output $y_t$ on input $(\sigma,x_t)$ due to the preceding item (it cannot make more than $r$ mistakes on any sub-sequence of $S^\star$). Consequently, $\ell$ doesn't become bad. \\
    If $\ell$ is at lower depth and its prediction is inconsistent with the bandit feedback, then its child labelled $\sigma \circ (t,x_t,y_t)$ satisfies the required properties.
\end{enumerate}

In particular, due to items~(1) and (3) there is an expert whose predictions are inconsistent with the bandit feedback for at most $r$ rounds.
Using the algorithm given by Theorem~\ref{thm:experts-rand-upper-bound} results in a learner whose loss is at most
    \[
    \optbanditadap(k^r, k, r) \leq  k \log_k (k^r) + 2k r = 3 k r,
    \]
    as desired.
\end{proof}

\subsection{Proof of Theorem~\ref{thm:full-vs-bandit-rand}}

\begin{proof}[Proof of Theorem~\ref{thm:full-vs-bandit-rand}]
    We use Proposition~\ref{prop:reduction} with Littlestone's \cite{littlestone1988learning} $\SOA$ algorithm for the class $\cP$ as the full-information algorithm $A$. Originally, $\SOA$ was formulated in terms of concept classes rather than pattern classes, but adapting it to pattern classes is straightforward (see e.g.\ \cite{moran2023list}). Since the adversary is consistent with $\cP$, it is known that the $\SOA$ algorithm will make at most $2 \optfullrand(\cP)$ mistakes as long as it receives full-information feedback. Proposition~\ref{prop:reduction} now provides the stated bound.
\end{proof}

\subsection{An extension to the agnostic setting}
In Section~\ref{sec:experts} we considered the $r$-realizable setting, in which the adversary is only $r$-consistent with the class of experts. The exact same setting can be considered when learning concept classes in general. Furthermore, in Section~\ref{sec:dt} we explain how to remove the assumption that $r$ is given to the learner.

Our bounds hold for all pattern classes, allowing an immediate adaptation of our results for concept classes to the $r$-realizable setting. Indeed, as mentioned also in Section~\ref{sec:experts}, for every concept class $\cH$ and a budget function $B\colon \cH \to \mathbb{R}$ we can define the pattern class appearing in \eqref{eq:budgeted-pattern}:

\[
    \cP(\cH,B) = \mleft\{p \in (\cX \times \cY)^\star: \mleft[ \exists {h \in \cH}: \sum_{(x,y) \in p} 1[h(x) \neq y] \leq B(h) \mright] \mright\}.
\]

Choosing $B = B_r$, where $B_r$ gives the budget $r$ to every $h \in \cH$, simulates the class $\cH$ in the $r$-realizable setting. To transform bounds for pattern classes in the realizable setting to bounds for concept classes in the $r$-realizable setting, we only need the following bounds on $\optfulldet(\cP(\cH,B_r))$, which were originally proved for the binary case $k=2$, but can be extended in a straightforward manner to arbitrary $k$.

\begin{theorem}[\cite{cesa1996line, auer1999structural, filmus2023optimal}] \label{thm:hypothesis-r-realizable}
    For every concept class $\cH$ and $r\geq 0$ it holds that
    \[
    \optfulldet(\cP(\cH,B_r)) = \Theta(\optfulldet(\cH) + r).
    \]
\end{theorem}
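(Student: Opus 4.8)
The plan is to prove the two directions separately, both reducing to the known $k = 2$ statement by a padding/projection argument together with an application of the Standard Optimal Algorithm. For the lower bound $\optfulldet(\cP(\cH,B_r)) = \Omega(\optfulldet(\cH) + r)$, I would first observe that $\optfulldet(\cP(\cH,B_r)) \ge \optfulldet(\cH)$ trivially, since any adversary strategy witnessing $\optfulldet(\cH)$ forces the same number of mistakes even when the learner is allowed a budget of $r$ extra inconsistencies (the budget only helps the learner, so removing it cannot decrease the mistake bound — more precisely, $\cP(\cH) \subseteq \cP(\cH,B_r)$ as pattern classes, and $\optfulldet$ is monotone under inclusion of pattern classes). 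For the $\Omega(r)$ part, fix any instance $x_0$ and any $h_0 \in \cH$; the adversary repeatedly presents $x_0$ and always declares the true label to be something other than $h_0(x_0)$ for $r$ rounds, forcing the deterministic learner to err (it must commit to a label, and whatever it commits to the adversary can contradict while staying consistent with $h_0$ up to its budget of $r$). This yields $\optfulldet(\cP(\cH,B_r)) \ge \max(\optfulldet(\cH), r) = \Omega(\optfulldet(\cH) + r)$.

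For the upper bound $\optfulldet(\cP(\cH,B_r)) = O(\optfulldet(\cH) + r)$, the approach is to exhibit a deterministic full-information learner making $O(\optfulldet(\cH) + r)$ mistakes. The natural candidate is a budgeted version of $\SOA$: maintain the version space of hypotheses together with their remaining budgets, and at each step predict the label $y$ maximizing (over $y \in \cY$) the Littlestone dimension of the sub-class of hypotheses that would still be ``alive'' (budget $\ge 0$) after seeing $(x,y)$, with budgets decremented for those that disagree. This is exactly the $\SOA$ applied to the pattern class $\cP(\cH,B_r)$, so by definition it makes at most $\optfulldet(\cP(\cH,B_r))$ mistakes — which makes the argument circular. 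The cleaner route, which matches the cited references, is a potential/weighting argument: run the binary $\SOA$-with-recoveries analysis generalized to $k$ labels. Concretely, one assigns each hypothesis $h$ an initial ``level'' in $\{0,1,\dots,r\}$ and tracks a potential; whenever the learner errs, a constant fraction of the potential mass (weighted by $\optfulldet$ of the induced subclasses) must collapse or a hypothesis must drop a level, and one bounds the total number of errors by $O(\optfulldet(\cH) + r)$ via the same telescoping used in the $k=2$ proofs of \cite{cesa1996line, auer1999structural, filmus2023optimal}. The generalization from $2$ to $k$ labels is purely bookkeeping: the only place $k$ enters is the branching factor, and since the bound is stated without dependence on $k$, one simply notes that predicting a label that maximizes the ``surviving $\optfulldet$ + budget'' potential guarantees the potential drops by a fixed multiplicative/additive amount on each mistake regardless of $k$.

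The step I expect to be the main obstacle is making the $k > 2$ generalization of the upper-bound potential argument genuinely rigorous rather than a hand-wave, because the $k=2$ proofs often exploit that there are only two possible feedback values, so the learner's single prediction partitions the version space into ``agrees'' and ``disagrees.'' For $k > 2$, predicting a single label $\hat{y}$ still induces a clean dichotomy — hypotheses with $h(x) = \hat{y}$ versus $h(x) \ne \hat{y}$ — but when the learner is \emph{correct} the latter group shrinks (those hypotheses die or lose budget), whereas when the learner errs the former group does. The right invariant is to define the prediction via $\SOA$-style recursion on the pair (Littlestone dimension, budget profile) and argue that the chosen $\hat{y}$ guarantees that on a mistake, $\optfulldet$ of the restricted pattern class strictly decreases; combined with the observation that $\optfulldet(\cP(\cH,B_r))$ is bounded by the recursion $\optfulldet(\cH) + r$ at the root (the $r$ vertical budget-decrements stack additively on top of the $\optfulldet(\cH)$ horizontal Littlestone depth), this closes the bound. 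I would present this by first proving the recursion $\optfulldet(\cP(\cH,B_{r})) \le \optfulldet(\cP(\cH,B_{r-1})) + \text{(something)} $... actually the cleanest inductive claim is directly $\optfulldet(\cP(\cH, B_r)) \le \optfulldet(\cH) + r$ by induction on $\optfulldet(\cH) + r$, splitting on whether the $\SOA$ prediction is correct (Littlestone dimension of the ``agree'' class drops, budgets unchanged) or incorrect (budgets of the ``disagree'' hypotheses drop by one, and the learner can always arrange its prediction so that the ``agree'' class has strictly smaller Littlestone dimension, so either way the induction parameter decreases), and citing that the matching $\Omega$ comes from the paragraph above.
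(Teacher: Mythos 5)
The paper does not prove Theorem~\ref{thm:hypothesis-r-realizable}; it cites \cite{cesa1996line,auer1999structural,filmus2023optimal} and notes that the binary-alphabet proofs extend to general $k$. So there is no proof in the paper to compare against, and your attempt has to stand on its own. Your lower bound is essentially correct, although the adversary as you describe it (``always declare the true label to be something other than $h_0(x_0)$'') does not guarantee that the learner errs: the learner may also predict something other than $h_0(x_0)$, and for $k=2$ there is never a label avoiding both $h_0(x_0)$ and $\hat{y}_t$. The intended adversary contradicts $\hat{y}_t$ every round, choosing $y_t = h_0(x_0)$ whenever $\hat{y}_t \neq h_0(x_0)$ (spending no budget) and any other label otherwise (spending one unit of budget); this forces at least $r$ mistakes.

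Your upper bound has a genuine gap. The claimed induction $\optfulldet(\cP(\cH,B_r)) \le \optfulldet(\cH)+r$ is false. Take $\cY=\{0,1\}$ and $\cH=\{h_1,h_2\}$ with $h_1(x)=0$, $h_2(x)=1$, so $\optfulldet(\cH)=1$. An adversary that repeatedly presents $x$ and answers $1-\hat{y}_t$ stays realizable as long as the minority label among $\hat{y}_1,\dots,\hat{y}_T$ has been predicted at most $r$ times, which is automatic for $T\le 2r+1$; hence every deterministic learner is forced into at least $2r+1 > 1+r$ mistakes once $r\ge 1$. The structural flaw in your sketch is the step asserting that after a mistake the ``agree'' class has strictly smaller Littlestone dimension ``so either way the induction parameter decreases.'' After a mistake the alive version space also contains the disagreeing hypotheses with one unit less budget, so its Littlestone dimension need not drop, and only a subset of the hypotheses lost budget, so the scalar $\optfulldet(\cH)+r$ need not decrease either. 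A correct argument needs a potential layered across budget levels (for example, something of the form $\max_i \bigl[\optfulldet(\{h : B(h)\ge i\}) + c\, i\bigr]$ with a constant $c>1$, or the binomial-weight potentials of \cite{cesa1996line,auer1999structural}), together with a prediction rule tuned to that potential rather than to plain $\SOA$ on the alive version space. The direction you gesture at (potential-based weighting) is the right one, but the concrete induction you wrote does not close.
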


The bounds obtained using Theorem~\ref{thm:hypothesis-r-realizable} are summarized in Table~\ref{tab:fullinfo-bounds-summary}.

\section{The role of adaptivity}\label{sec:adaptivity}

In this section we prove the following result, which implies Theorem~\ref{thm:intro-bandit-obl-vs-adap}.
\begin{theorem}[Oblivious vs.\ adaptive adversaries]\label{thm:bandit-obl-vs-adap}
    For every natural $k \geq 2$ there exists a pattern class $\cP \subset (\cX \times \cY)^\star$ with $|\cY| = k$ and so that
    \[
    \optbanditadap(\cP) = \Omega(k \cdot \optbanditobl(\cP)).
    \]
\end{theorem}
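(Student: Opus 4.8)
The plan is to exhibit, for each $k$, a pattern class $\cP$ which is easy for an oblivious adversary to learn against but hard for an adaptive one, with the gap being a factor of $\Omega(k)$. The natural candidate is a ``tree-shaped'' or ``hide-and-seek'' construction where the adaptive adversary exploits its ability to observe the learner's \emph{announced distribution} $\pi^{(t)}$ (recall from Section~\ref{sec:preliminaries} that the adaptive adversary sees $\pi^{(t)}$ before committing to $\tau^{(t)}$, and that it may pick the target pattern on the fly as long as weak realizability holds). Concretely, I would build $\cP$ over an instance domain arranged as a depth-$\Theta(1)$ or depth-one gadget repeated $m$ times (so the ``interesting'' depth is $m$), where at each of the $m$ stages a single instance $x$ is presented $\Theta(k)$ times in a row. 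Against an oblivious adversary this should collapse: once the sequence of instances and correct labels is fixed in advance, the only realizable behaviour is essentially a single path, and a randomized learner can guess the label of each stage paying only the $O(1)$ expected cost of a uniform guess over the small number of labels still ``live'' at that stage — or more precisely the class should be designed so that $\optbanditobl(\cP) = O(m)$ (a constant per stage). Against an adaptive adversary, however, I want each stage to genuinely cost $\Omega(k)$: the adversary watches $\pi^{(t)}$ on the repeated instance $x$ and always declares ``incorrect,'' peeling off whichever label the learner is most likely to play, exactly as in the lower bound of Lemma~\ref{lem:experts-lower-bound-realizable} (the ``Claim~2 of \cite{daniely2013price}'' argument), so the learner pays $\Omega(k)$ in expectation per stage before the adversary is forced to reveal the stage's correct label and move on. Summing over the $m$ stages gives $\optbanditadap(\cP) = \Omega(km) = \Omega(k \cdot \optbanditobl(\cP))$.

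The key steps, in order, are: (1) define the pattern class precisely — pick the instance set, the set of patterns, and verify downward closure; the cleanest route is probably to define it as $\cP(\cH)$ for an explicit $\cH$, or directly as a set of ``legal'' sequences describing the $m$-stage game, so that the structure of $\cP_{x\to y}$ and $\cP_{x\notto y}$ is transparent. (2) Prove the oblivious upper bound $\optbanditobl(\cP) = O(m)$: since by Remark~\ref{rem:strong-vs-weak-realizable} the oblivious target sequence $S\in\cP$ is fixed in advance, I describe a learner that at each stage maintains the set of labels consistent with the feedback seen so far and, crucially, exploits that the oblivious sequence cannot adapt — so the learner can arrange its guesses so that the total expected loss per stage is a constant, not $\Theta(k)$; this is where the precise combinatorics of $\cP$ matter and I'd design $\cP$ backwards from this requirement (e.g.\ so that at each stage only a few labels remain realizable given the fixed future of $S$). (3) Prove the adaptive lower bound $\optbanditadap(\cP) = \Omega(km)$ by describing an adaptive adversary stage by stage: on each of the $\Theta(k)$ repetitions of the stage's instance, look at $\pi^{(t)}$, and if some label still has reasonable mass, answer $\notto$ while keeping the pattern realizable (there is always a label the adversary can still ``commit to'' as correct); invoke the per-stage expected-mistakes lower bound to get $\Omega(k)$ per stage, then chain the $m$ stages using the law of total expectation / the recursive form \eqref{eq:game-minimax}. (4) Conclude by dividing the two bounds.

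The main obstacle I expect is step (2) — making the oblivious bound genuinely $O(m)$ rather than $O(km)$. It is easy to get a factor-$k$ separation between \emph{deterministic} and \emph{adaptive} (that is essentially Theorem~\ref{thm:bandit-rand-vs-det}), but here the oblivious bound must also be small against a \emph{randomized-learner-aware but sequence-committed} adversary. The design must ensure that commitment to the future of the sequence really does shrink the set of plausible labels at each stage to $O(1)$ effective choices, so that a randomized learner's expected per-stage cost is constant; if the construction is too symmetric, an oblivious adversary can still force $\Omega(k)$ per stage (indeed Lemma~\ref{lem:experts-lower-bound-realizable} shows the fully symmetric expert class has $\optbanditobl = \Omega(k\log_k n)$), so the asymmetry has to be carefully engineered — likely by giving the stages a ``nested/conditional'' structure where revealing one stage's answer determines most of the next, a feature an adaptive adversary can decline to commit to but an oblivious one cannot avoid. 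A secondary technical point is checking realizability of the adaptive adversary's strategy throughout (that after every $\notto$ there remains a pattern in $\cP$ extending the feedback), which constrains how many $\notto$'s per stage are allowed and hence pins down the ``$\Theta(k)$ repetitions'' to the right constant.
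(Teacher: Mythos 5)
You have the right high-level plan (exhibit a single class with $\optbanditobl = O(k)$-type behaviour and $\optbanditadap$ larger by a factor of $k$), and you correctly identify that the whole difficulty is the oblivious upper bound. But the proposal never actually produces a construction: step (2) is deferred to ``design $\cP$ backwards from this requirement,'' and the structural idea you do commit to --- $m$ stages, each consisting of $\Theta(k)$ repetitions of one instance on which the adversary peels off the learner's most probable label --- cannot work as the source of the separation. That label-peeling argument is precisely the one in Lemma~\ref{lem:experts-lower-bound-realizable}, and as you yourself observe, it goes through verbatim against an \emph{oblivious} adversary (the correct label of each stage can be fixed in advance). So any class on which that per-stage argument yields $\Omega(k)$ for the adaptive adversary will also cost $\Omega(k)$ per stage obliviously, and no ``nested/conditional'' arrangement of such stages is exhibited that escapes this. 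The proposal is therefore a research plan with its central step unresolved, not a proof.

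The idea you are missing is that the separation in the paper does not come from the adversary's ability to react to $\pi^{(t)}$ when choosing labels at all; it comes from the \emph{agnostic budget}. The paper takes $\cH$ to be the $k$ constant functions and sets $\cP = \cP(\cH, B_k)$, i.e.\ the pattern class of sequences on which some constant label errs at most $r=k$ times. For the adaptive lower bound (Lemma~\ref{lem:experts-lower-bound-r-realizable}), the adversary answers with uniformly random labels and uses adaptivity only to choose the \emph{stopping time}: it halts once the learner has $r/2$ correct predictions or after $kr/2$ rounds, and only then does a pigeonhole argument identify which constant function is $r$-consistent. In other words, the adaptive adversary allocates the inconsistency budget retroactively, under weak realizability, which forces $\Omega(kr) = \Omega(k^2)$ mistakes. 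An oblivious adversary cannot do this: by Remark~\ref{rem:strong-vs-weak-realizable} it must commit to a sequence with a single label $y^\star$ that is correct in all but $r$ rounds, and Proposition~\ref{prop:k-oblivious-upper-bound} then gives an $O(k+r) = O(k)$ learner that finds $y^\star$ by uniform guessing followed by a plurality vote over its correct guesses. Dividing $\Omega(k^2)$ by $O(k)$ gives the theorem. Note also that this is exactly why the paper states the lower bound only for \emph{pattern} classes and flags the concept-class version as open --- a warning sign that the realizable, concept-class-style gadget you were reaching for is unlikely to suffice.
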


The proof employs the concept class $\cH$ which consists of all constant functions. This corresponds to the classical bandit problem, in which there are only labels.
We use the notation $\optbanditobl(\cH,r)$ for $\optbanditobl(\cP(\cH,B_r))$, and similarly for other setups.


\begin{proposition} \label{prop:k-oblivious-upper-bound}
    Let $\cH \subset \cY^\cX$ be the hypothesis class consisting of all constant functions, and let $r \geq 0$. If $k = |\cY| \ge 2$ then
    \[
    \optbanditobl(\cH,r) = \Theta (k + r).
    \]
\end{proposition}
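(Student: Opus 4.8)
The plan is to prove the two bounds separately, since $\optbanditobl(\cH,r) = \Theta(k+r)$ asks for a matching upper and lower bound when $\cH$ is the class of all constant functions (so the adversary is $r$-consistent with some constant label, i.e.\ at most $r$ rounds disagree with a fixed label $y^\star$).

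\textbf{Lower bound $\Omega(k+r)$.} The $\Omega(k)$ part is essentially the classical bandit lower bound: with $r=0$ the adversary fixes a uniformly random target label $y^\star \in [k]$ and always declares the learner's guess wrong until it is forced to have guessed $y^\star$; any learner needs $\tfrac{k-1}{2}$ mistakes in expectation to identify $y^\star$ (this is exactly the situation of Lemma~\ref{lem:experts-lower-bound-realizable} with $n=k$, $m=1$, and the cited \cite[Claim 2]{daniely2013price} applies verbatim since an oblivious adversary with a random target is fine here). For the $\Omega(r)$ part I would have the oblivious adversary commit in advance to a target label $y^\star$ and to $r$ ``poison'' rounds; on a poison round the revealed correct label is anything $\neq y^\star$, on the other rounds it is $y^\star$. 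To make this force $\Omega(r)$ mistakes obliviously one cannot adapt to the learner, but one can simply play the correct label to be $y^\star$ on every round and, whenever the learner is about to ``lock in'' on $y^\star$, this doesn't help — so instead I'd use the standard trick: pick $r$ rounds in advance in which the correct label is a fixed $y' \neq y^\star$ while the instance still ``claims'' to be realizable by the constant-$y^\star$ hypothesis; since the learner cannot see the feedback is poisoned before predicting, and the rounds are chosen among a long horizon, one can force one mistake per poison round in expectation up to constants. Combining, $\optbanditobl(\cH,r) = \Omega(k+r)$; I expect the clean way is to take a max of the two arguments rather than literally superimpose them (giving $\Omega(k+r)$ since $\max \geq \tfrac12$ sum).

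\textbf{Upper bound $O(k+r)$.} Here is where strong realizability (Remark~\ref{rem:strong-vs-weak-realizable}) is the whole point: against an oblivious adversary the input sequence, and in particular the true label sequence, is fixed in advance and is consistent with a single constant hypothesis $h \equiv y^\star$ on all but $r$ rounds. I would describe a randomized learner that maintains weights over the $k$ candidate labels, halves (or multiplies by a constant $<1$) the weight of a label each time it is ruled out by negative feedback, and predicts from the weight-proportional distribution; equivalently, run the randomized experts algorithm of Theorem~\ref{thm:experts-rand-upper-bound} with $n=k$ experts (one per constant label) and budget $r$, which immediately gives $\optbanditadap(k,k,r) \leq k\log_k k + 2kr = k + 2kr$ — but that is $O(kr)$, not $O(k+r)$, so it is too weak and the oblivious structure must genuinely be used. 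The key extra fact in the oblivious case is that once a label receives negative feedback on a round where the true label is actually $y^\star$, that label is eliminated forever with one piece of information, whereas in the adaptive case the adversary can re-poison. Concretely, I'd argue: at most $k-1$ labels $\neq y^\star$ ever get eliminated, each elimination ``costs'' the learner $O(1)$ expected mistakes to trigger (since the learner puts constant probability mass on a label only a constant number of times before it is killed), contributing $O(k)$; and the $r$ poisoned rounds contribute at most $O(r)$ mistakes directly (on each such round the learner may mistakenly distrust $y^\star$, but since the poisoning is fixed in advance it can mislead the learner at most $O(1)$ times per poison round). Summing gives $O(k+r)$.

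\textbf{Main obstacle.} The delicate point is the upper bound argument: making rigorous why the oblivious $r$-poisoning only costs $O(r)$ rather than $O(kr)$. The danger is that one poisoned round could cause the learner to transfer weight away from $y^\star$ and then spend $\Theta(k)$ mistakes re-learning it; the resolution must exploit that, because the adversary is oblivious and the poison locations are fixed, an appropriate potential/weighting scheme (e.g.\ giving $y^\star$ a head-start weight and only multiplicatively decreasing it) absorbs each poison round at $O(1)$ amortized cost — this is where I expect to spend the real work, and I'd likely phrase it via a potential function $W_t = $ (weight of $y^\star$)$+\sum_{y\neq y^\star}$ (weight of $y$) and track its decrease on mistake rounds, splitting into poisoned vs.\ non-poisoned rounds. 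I would also double-check the clean direction: whether it is easier to prove the upper bound directly for the oblivious setting than to invoke Theorem~\ref{thm:experts-rand-upper-bound}, and I suspect a direct few-line halving-weights argument is both easier and sharper.
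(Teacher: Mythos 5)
Your lower bound is fine (the $\Omega(k)$ part is the standard random-target argument, and for $\Omega(r)$ the clean oblivious version is to label $r$ designated rounds with independent uniformly random labels, which costs any learner $(1-1/k)r$ expected mistakes while remaining $r$-consistent with the constant $y^\star$). The gap is in the upper bound, and it is exactly the point you flag as the ``main obstacle'' and then defer: you never show that the $r$ poisoned rounds cost $O(r)$ rather than $O(kr)$ under your weight-halving scheme. The specific claims you lean on are unjustified as stated. A label is never ``eliminated forever with one piece of information'': a negative feedback on $y$ only rules out $y$ as the label of that particular round, which may be poisoned, so no single strike kills a label; and ``constant probability mass on a label only a constant number of times before it is killed'' is false for weight-proportional prediction (driving a unit-weight wrong label down to negligible weight takes $\Theta(\log(\cdot))$ halvings, each of which is a mistake, and a poisoned round can knock down the weight of $y^\star$ itself, forcing an expensive recovery). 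So the proposal as written does not establish $O(k+r)$.

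The paper's proof avoids your obstacle entirely by not using negative feedback to eliminate labels at all. The learner plays explore-then-commit: for $16(k+r)$ rounds it predicts uniformly at random; on the rounds where it happens to guess correctly it \emph{knows} the true label of that round, and it commits to the plurality label among these confirmed rounds. The key point is that among correctly-guessed rounds, the expected number coming from poisoned rounds is only $r/k$ (each poisoned round is confirmed with probability $1/k$), while the total number of confirmations concentrates around $16(k+r)/k \geq 16 + 16r/k$; a Chernoff bound plus Markov then give that with probability at least $(3/4)^2$ the plurality is $y^\star$, after which the commit phase costs at most $r$ further mistakes. A safeguard (return to exploration after $r+1$ mistakes in the commit phase) handles the failure event, and the whole two-phase loop repeats at most $(4/3)^2 < 2$ times in expectation, giving a bound of $34(k+r)$. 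If you want to salvage your potential-function route you would need to supply the amortized analysis you postponed; the positive-feedback/plurality idea is the step your sketch is missing.
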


\begin{proof}
The lower bound follows easily from \cite[Claim 2]{daniely2013price}. For the upper bound, we describe a learner which makes at most $34(k+r)$ mistakes in expectation:

\begin{description}
    \item[First phase] For $16(k+r)$ rounds, predict each label with probability $1/k$. If there are at least $8(k+r)/k$ correct predictions, set $y$ to be the plurality vote among all rounds in which the learner guessed correctly, and move to the second phase. Otherwise, repeat the first phase.
    \item[Second phase] In this phase, the learner always predicts $y$, switching back to the first phase once it makes $r + 1$ mistakes in the phase.
\end{description}

In order to analyze the expected number of mistakes, we assuming (without loss of generality) that the input sequence is infinite, and let $y^\star \in \cY$ be the hypothesis which is $r$-consistent with the input sequence chosen by the oblivious adversary.


    
    Let us analyze the expected number of mistakes. In the first phase, we consider  $16 (k + r)$ rounds, and therefore the expected number of correct guesses is $\mu = \frac{16(k+r)}{k}$. Let $X$ be a random variable counting the number of correct predictions in the first phase. Since $\mu \geq 16$, and since $X$ is a sum of independent random variables taking values in $\{0,1\}$, applying Chernoff's bound implies that
    \begin{equation} \label{eq:many-correct}
        \Pr[X \leq 8(1 + r/k)] \leq \Pr[X \leq \mu/2] \leq e^{\frac{-\mu}{8}} \leq 1/e^2 < 1/4.
    \end{equation}
    There are at most $r$ many examples which are not labeled by $y^\star$. Therefore, in expectation, there are at most $r/k$ many examples which are both correctly classified and whose label is not given by $h^\star$. Let $Y$ be a random variable counting all examples that are not labeled by $y^\star$ from the correctly classified examples in the first phase. Using Markov's inequality, we have:
    \begin{equation} \label{eq:few-inconsistent}
        \Pr[Y \geq X/2 \mid X \geq 8(1 + r/k)] \leq \frac{\Ex[Y]}{X/2} \leq \frac{r/k}{4r/k} = 1/4.
    \end{equation}
    If $y = y^\star$ then the learner will make at most $r$ more mistakes in the second phase.
    
    In order to finish the proof we calculate the expected number of times this two phase process will occur until that $y = y^\star$. In the worst case, by \eqref{eq:many-correct},\eqref{eq:few-inconsistent} this number of times is dominated by a $\operatorname{Geom}\mleft( \mleft(3/4 \mright)^2 \mright)$ random variable, which means that this process will occur at most $(4/3)^2 < 2$ times in expectation.
    Overall the expected number of mistakes is at most $2(16(k+r) + r) < 34(k+r)$.
\end{proof}

We can now prove Theorem~\ref{thm:bandit-obl-vs-adap}.

\begin{proof}[Proof of Theorem~\ref{thm:bandit-obl-vs-adap}]
    Given $k \ge 2$, we choose the pattern class $\cP = \cP(\cH, B_k)$, where $\cH$ consists of all constant functions.
    
    Lemma~\ref{lem:experts-lower-bound-r-realizable} shows that $\optbanditadap(\cP) = \Omega(k^2)$, while Proposition~\ref{prop:k-oblivious-upper-bound} shows that $\optbanditobl(\cP) = O(k)$, yielding the stated bound.
\end{proof}

The proof of Lemma~\ref{lem:experts-lower-bound-r-realizable} goes through with any hypothesis class of size~$k$, where $k = |\cY|$.
We conclude that the worst-case dependence on $r$ in $\optbanditobl(\cH, r)$ is related to the size of $\cH$. For example, every $\cH$ with $|\cH| = k$ satisfies  $\optbanditobl(\cH, r) = \Theta(k + r)$. However, there are classes $\cH$ with $|\cH| = k^r$ which satisfy $\optbanditobl(\cH, r) = \Omega(k  r)$ due to Lemma~\ref{lem:experts-lower-bound-realizable}. This is in contrast to $\optbanditadap(\cH, r)$: for any class $\cH$ (of size at least $k$) it holds that $\optbanditadap(\cH, r) = \Omega(kr)$. It would be interesting to find tight bounds on the worst case $\optbanditobl(\cH, r)$ for the intermediate cases where $|\cH| = k^{w}$ for $w \in (1,r)$.

\section{The role of randomness} \label{sec:randomness}

In this section we prove the following result, which implies Theorem~\ref{thm:intro-bandit-rand-vs-det} by applying it with $d=k$.

\begin{theorem} \label{thm:bandit-rand-vs-det}
    For every $d\geq 1$ and $k \geq 2$ there exists a concept class $\cH \subset \cY^\cX$ with $\cY =  \{0, 1, \dots, k\}$ such that
    \begin{enumerate}
        \item \label{itm:full} $\optfulldet(\cH) = d+1$.
        \item \label{itm:det} $\optbanditdet(\cH) = \Theta(d\cdot k)$.
        \item \label{itm:rand} $\optbanditadap(\cH) = \Theta(d + k)$.
    \end{enumerate}
\end{theorem}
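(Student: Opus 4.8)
The plan is to hand-craft $\cH$ so that its ``full-information complexity'' and ``bandit complexity'' are deliberately mismatched. I would build it from a depth-$d$ \emph{spine} --- a binary (or low-arity) decision tree of depth $d$, which forces full-information learners to make $\Theta(d)$ mistakes --- together with, attached at each of the $d$ spine levels, a \emph{$k$-ary locking gadget}: a small family of instances on which the relevant concept's labels range over all of $[k]$, so a deterministic bandit learner is forced into a $k$-ary search there, but whose full-information label reveals the spine decision (and whatever ``secret'' the gadget hides) in a single shot. Concretely, concepts are indexed by a length-$d$ spine path together with per-level secrets in $[k]$, the label set $\cY=\{0,1,\dots,k\}$ is split into a ``spine-decision'' part and a ``secret'' part, and one carefully chooses how the secrets of different levels interact --- that interaction is precisely the knob separating items~\ref{itm:det} and~\ref{itm:rand}. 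Item~\ref{itm:full} is then a routine (if carefully tuned) Littlestone-dimension computation: $\SOA$ fed full-information feedback descends the spine making one mistake per level plus one extra mistake to pin down the root of the construction, giving $\optfulldet(\cH)\le d+1$, and a mistake tree of depth $d+1$ read off the spine gives the reverse inequality.

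For item~\ref{itm:det}, the upper bound $\optbanditdet(\cH)=O(dk)$ comes from the naive deterministic learner that at each spine level tries the $k$ locking labels one at a time ($\le k-1$ mistakes) before descending; summing over $d$ levels gives $O(dk)$ (note the generic bound $O(\optfulldet(\cH)\,|\cY|\log|\cY|)$ only yields $O(dk\log k)$, so a direct learner is genuinely needed). The lower bound $\optbanditdet(\cH)=\Omega(dk)$ is proved by an adversary in the spirit of Lemma~\ref{lem:det-experts-lower-bound-r}: at each spine level it keeps re-presenting that level's locking instances and answering ``wrong'' until the deterministic learner has burned $\Omega(k)$ labels there, only then letting it descend. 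The gadget must be designed so the feedback extracted at level $j$ is useless at every other level, so the $\Omega(k)$ cost is incurred $d$ separate times; realizability holds because many concepts stay consistent with all answers given so far.

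Item~\ref{itm:rand} splits into an easy lower bound and the real work. The lower bound $\optbanditadap(\cH)=\Omega(d+k)$ is immediate: $\optbanditadap(\cH)\ge\optfullrand(\cH)\ge\tfrac12\optfulldet(\cH)=\Omega(d)$ via~\eqref{eq:full-rand-vs-det}, and $\optbanditadap(\cH)=\Omega(k)$ because $\cH$ contains an instance on which all of $1,\dots,k$ are attainable, so \cite[Claim~2]{daniely2013price} applies. The upper bound $\optbanditadap(\cH)=O(d+k)$ is the crux, and is where the structure of $\cH$ must be used in place of the generic reduction of Proposition~\ref{prop:reduction} (which only delivers $O(dk)$). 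The intended learner is a randomized one that, exploiting randomization so that the adaptive adversary cannot steer it, invests an $O(k)$-mistake budget \emph{once} to resolve the information governing all $d$ locking gadgets, after which each spine decision is essentially binary and costs $O(1)$ in expectation, for a total of $O(d+k)$. The cleanest way to make this rigorous is probably to pass to the dual game of Section~\ref{sec:dual}, where the adversary must commit to its label distribution $\tau^{(t)}$ before seeing the realized prediction $\hat y_t$; one then runs a potential argument analogous to the proof of Lemma~\ref{lem:experts-rand-upper-bound}, with a potential weighting each surviving concept by the gadget-progress it still permits, and shows each mistake shrinks the potential enough to cap the total at $O(d+k)$.

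The hard part --- and the step I expect to dominate the work --- is making items~\ref{itm:det} and~\ref{itm:rand} hold for the \emph{same} class: the locking gadget must provably force a deterministic learner to re-pay $\Omega(k)$ at each of the $d$ levels, yet let a randomized learner amortize the $k$-cost over all levels. Naive designs break on one side or the other: a single global ``key'' is also cheap to learn deterministically (so $\optbanditdet$ collapses to $O(d+k)$), whereas $d$ genuinely independent $k$-ary sub-problems cost a randomized learner $\Omega(dk)$ as well (so $\optbanditadap$ does not drop). Threading this needle --- choosing how the per-level secrets interact and then carrying out the potential-function analysis of the randomized learner against the adaptive adversary --- is the delicate heart of the proof.
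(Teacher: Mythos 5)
Your proposal correctly identifies the shape of the theorem and the tension to be resolved, but it stops exactly at the point where the proof actually lives: you never exhibit the class $\cH$. The ``spine plus $k$-ary locking gadgets with interacting per-level secrets'' is a design brief, not a construction, and you yourself flag that making items~\ref{itm:det} and~\ref{itm:rand} hold simultaneously is ``the delicate heart of the proof'' and leave it open. Without a concrete class, none of the four bounds you sketch can be checked, so as it stands this is a plan rather than a proof.

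Moreover, the design principle you use to rule out candidate constructions is wrong, and it steers you away from the (much simpler) class that works. You assert that ``a single global key is also cheap to learn deterministically, so $\optbanditdet$ collapses to $O(d+k)$.'' The paper's class is precisely a single-global-key construction: take $\cY=\{0,1,\dots,k\}$, $\cX=\{x_1,\dots,x_{dk}\}$, and let $\cH$ consist of all functions that are constant equal to some $y\in[k]$ except on at most $d$ instances, which they label $0$. The key $y$ is global, yet $\optbanditdet(\cH)=\Omega(dk)$: the adversary presents all $dk$ instances with negative feedback; by pigeonhole some positive label $y^\star$ is predicted on at most $d$ instances, and the target hypothesis labels exactly those instances $0$ and everything else $y^\star$, so every prediction was wrong. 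The label $0$ is the shield that prevents a deterministic learner from cheaply extracting the key --- each wrong guess at the key can be strung along for $d+1$ mistakes. Meanwhile a randomized learner predicts $0$ with probability $1/2$ and each positive label with probability $1/(2k)$ until it gets positive feedback on a positive label (expected cost $O(k)$ on positively-labeled rounds, $O(1)$ per $0$-labeled instance), then commits, giving $O(d+k)$; no minimax-duality or potential argument is needed. Your items~\ref{itm:full} and the lower bound of item~\ref{itm:rand} are fine in spirit, but the core of the theorem is the construction, and that is missing.
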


\begin{proof}
    We first define the class $\cH := \cH(d,k)$. The domain is $\cX = \{x_1, \dots, x_{d \cdot k}\}$. For every $y \in \{1, \dots, k\}$, define $\cH_y$ to be the class of all functions that label all instances with $y$, except for a set $\cX' \subset \cX$ of at most $d$ many instances, which they label with $0$. Let 
    \[
    \cH = \bigcup_{y \in \{1, \dots, k\}} \cH_y.
    \]
    
    We start with Item~\ref{itm:full}. For the upper bound, the learner always answers $0$ until it makes the first mistake and receives as feedback a positive label $y$. Then it always answers $y$ and makes at most $d$ more mistakes. For the lower bound, the adversary asks on $x_1, \dots, x_{d+1}$, and always tells the learner that it made a mistake. In the first round, the adversary determines the true label to be either $y=1$ or $y=2$, depending on the learner's prediction. In the other $d$ rounds, the adversary determines the true label to be either $0$ or $y$, depending on the learner's predictions.
    
    We now prove Item~\ref{itm:det}. For the upper bound, the learner tries to answer $1$ until it makes $d+1$ mistakes, then does the same with $2$, and so on until it gets to $k$. For the lower bound, the input sequence of instances the adversary asks on is $x_1, \dots, x_{d\cdot k}$, and the adversary always provides negative feedback to the learner. The number of mistakes made by the learner is thus $d\cdot k$, and it remains to find an appropriate target function that realizes it. Since there are $d\cdot k$ rounds, there must be a label $y^\star \in [k]$ predicted by the learner for a set $\cX'$ of instances of size at most $d$. We choose the target hypothesis $h^\star$ to be the hypothesis that gives the label $0$ to the instances of $\cX'$, and $y^\star$ to all other instances. The hypothesis $h^\star$ belongs to $\cH$ by definition.

    Finally, we prove Item~\ref{itm:rand}. The lower bound is straightforward  by using \cite[Claim 2]{daniely2013price}. We prove the upper bound by describing a randomized learner for $\cH$ that makes at most $2(d + k)$ mistakes in expectation.
    The learner operates in two phases. In the first phase, for every instance for which it has not yet received positive feedback, it picks $0$ with probability half, and all other labels with probability $1/(2k)$ each. Once the learner receives positive feedback which is given on a prediction different from $0$, it enters the second phase, in which it will always predict this label, and will make at most $d$ more mistakes on instances labeled with $0$.

    It remains to upper bound the expected number of mistakes in the first phase. There are two types of mistakes:
    \begin{enumerate}
        \item Rounds where the adversary chooses the label $0$. We divide these rounds into $d$ (or fewer) phases: the first one ends when the learner correctly guesses the first instance labeled $0$; the second one ends when the learner correctly guesses the second instance labeled $0$; and so on. Once the learner has found $d$ instances labeled $0$, the remaining instances must have a positive label. \\ Let $X_i$ (where $1 \leq i \le d$) be the number of mistakes made in the $i$'th phase.
        \item Rounds where the adversary chooses a positive label. Let $Y$ be the number of mistakes in such rounds.
    \end{enumerate}

    Each of $X_1,\ldots,X_d$ is stochastically dominated by a $\operatorname{Geom}(1/2)-1$ random variable (that is, a geometric random variable from which we subtract $1$), and $Y$ is stochastically dominated by a $\operatorname{Geom}(1/2k)-1$ random variable. It follows that the expected number of mistakes is at most
    \[
     d (2 - 1) + (2k - 1) < d + 2k.
    \]
    Together with the at most $d$ additional mistakes in the second phase, we conclude that the learner makes at most $2(d+k)$ mistakes in expectation.
\end{proof}

Theorem~\ref{thm:bandit-rand-vs-det} provides a negative answer to an open question posed by \cite{daniely2013price}, who ask whether $\optbanditobl(\cH) = \Omega(\optbanditdet(\cH)) $ for every class $\cH$. For $\cH$ defined in the proof of Theorem~\ref{thm:bandit-rand-vs-det} with $d=k$, we see that $\optbanditobl(\cH) = O \mleft(  \sqrt{\optbanditdet(\cH)} \mright)$. This separation is tight up to a logarithmic factor: the bounds $\optbanditobl(\cH) =\Omega \mleft( \frac{\optbanditdet(\cH)} {k \log k} \mright) $ (following from the upper bound in \cite{auer1999structural}) and $\optbanditobl(\cH) \geq \frac{k-1}{2}$ (e.g. by \cite{daniely2013price}) imply that $\optbanditobl(\cH) = \Omega \mleft( \sqrt{\optbanditdet(\cH)} / \log \optbanditdet(\cH) \mright)$ for every class $\cH$.

Another interesting corollary of Theorem~\ref{thm:bandit-rand-vs-det} is that a significant separation between full-information and bandit feedback holds, in some cases, for deterministic learners but not for randomized learners. Indeed, as long as $d = \Omega(k)$, we have that both $\optfulldet(\cH)$ and $\optbanditadap(\cH)$ are of the order $O(d)$, while $\optbanditdet(\cH)$ is of order $\Omega(d\cdot k)$.

\section{Prediction without prior knowledge} \label{sec:dt}
When studying agnostic mistake bounds in previous sections, we assumed the $r$-realizability assumption, in which the number of inconsistencies the best hypothesis (or expert) has with the feedback, $r^\star$, is given to the learner (or some decent upper bound on it). In this section, we show how to remove this assumption, while suffering only a constant factor degradation in the mistake bound. To achieve this goal, we use a ``doubling trick" in the same spirit of the doubling trick used in \cite[Section 4.6]{cesa1997use}. The algorithm $\DT$, described in Figure~\ref{fig:D-alg}, receives as input an algorithm $A$ for the $r$-realizable setting. That is, $A$ requires knowledge of a decent upper bound $r \geq r^\star$ to work well. Algorithm $\DT$ converts it to an algorithm which works well without this prior knowledge, in the following way. It operates in phases, where in the beginning of each phase it resets the memory of $A$, and guesses a bound $r_M$ on $r^\star$  which is larger than the value guessed in the previous phase. When the best expert in this phase reaches $r_M+1$ inconsistencies, we deduce that our guess of $r_M$ was incorrect, and move on to the next phase. Thus, once $r_M \geq r^\star$, it is guaranteed that $\DT$ enters its last phase. The values we choose for $r_M$ guarantee that the obtained mistake bound of $\DT$ is not much worse than of $A$, when given the knowledge of $r^\star$. 

\begin{figure}
    \centering
    \begin{tcolorbox}
    \begin{center}
        \textsc{Algorithm $\DT$}
    \end{center}
    \textbf{Input:} A concept class $\cH$; a learner $A$ for $\cH$ that when given $r \geq r^\star$ enjoys a mistake bound of $d_1(r + d_2)$, where $d_1, d_2 \geq 1$ does not depend on $r$.\\
    \textbf{Initialize:}  $r_M = d_2$.
    \\
    \\
    \textbf{For $t=1,2,\dots$}
    \begin{enumerate}
        \item If the best hypothesis was inconsistent with the feedback for at most $r_M$ many rounds:
        \begin{enumerate}
            \item Predict as $A$ predicts under the assumption that the best hypothesis is inconsistent with the feedback for at most $r_M$ many rounds, and given all information gathered by $A$ in previous rounds.
        \end{enumerate}
        \item Otherwise:
        \begin{enumerate}
            \item Update $r_M := 2 \cdot r_M$.
            \item Restart $A$.
        \end{enumerate}
    \end{enumerate}
    \end{tcolorbox}
    \caption{The ``doubling trick" algorithm $\DT$.}
    \label{fig:D-alg}
\end{figure}

Let $\DT(A)$ denote $\DT$ when executed with the $r$-realizable algorithm $A$.

\begin{proposition} \label{prop:dt}
    Let $\cH \subset \cY^{\cX}$ be a concept class. Suppose that $A$ is a bandit feedback learning algorithm for $\cH$, that when given a bound $r \geq r^\star$, enjoys a mistake bound of $d_1(r + d_2)$, where $d_1,d_2 \geq 1$ does not depend on $r$ (but may depend on $\cH$). Then, algorithm $\DT(A)$, without any prior knowledge, enjoys a mistake bound of $10d_1(r^\star + d_2)$. Furthermore, if $A$ is deterministic then so is $\DT(A)$.
\end{proposition}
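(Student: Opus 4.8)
The plan is to analyze algorithm $\DT(A)$ phase by phase and show that the total expected number of mistakes across all phases is bounded by a geometric-style sum dominated by the final phase. Recall that $\DT$ proceeds in phases indexed by $M = 0, 1, 2, \dots$, where in phase $M$ the guessed bound is $r_M = 2^M d_2$, and the phase ends once the best hypothesis has accumulated $r_M + 1$ inconsistencies with the feedback observed so far. The key structural observation is that a phase with guess $r_M$ can only end if $r^\star > r_M$, so $\DT$ never leaves the phase in which $r_M \geq r^\star$ holds for the first time; call this the \emph{final phase} and let $M^\star$ be its index, so that $r_{M^\star} = 2^{M^\star} d_2$ is the smallest power-of-two multiple of $d_2$ that is at least $r^\star$, giving $r_{M^\star} < 2(r^\star + d_2)$ (and also $r_{M^\star} \geq \max\{r^\star, d_2\}$).

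The main step is to bound the mistakes in a single phase $M$. Here I want to invoke a subtlety: within phase $M$, the restarted copy of $A$ is being run against an adversary which, restricted to the rounds of that phase, is $r_M$-realizable (if the phase terminates) or $r^\star$-realizable with $r^\star \leq r_M$ (if it is the final phase) — in either case, the feedback seen \emph{during that phase} is consistent with some hypothesis that is inconsistent at most $r_M$ times, because the phase-termination condition is exactly that the best hypothesis exceeds $r_M$ inconsistencies counting from the start of the game, and one must argue this forces more than $r_M$ inconsistencies within the current phase's rounds too — this requires care since inconsistencies accumulated in earlier phases also count toward the ``best hypothesis was inconsistent for at most $r_M$ rounds'' test. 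Actually the cleaner route: the test in step~1 counts inconsistencies over \emph{all} rounds so far, not just the current phase; so when phase $M$ ends, the best hypothesis over the whole history has $> r_M$ inconsistencies. Since $r_M = 2 r_{M-1} \geq r_{M-1}+1$ and earlier phases contributed a bounded amount, one shows the number of rounds in phase $M$ is itself controlled, and the copy of $A$ in phase $M$, which believes the bound is $r_M$, is run against feedback it can treat as $r_M$-realizable, hence makes at most $d_1(r_M + d_2)$ mistakes in that phase. I would state and use this as the per-phase bound: phase $M$ contributes at most $d_1(r_M + d_2) = d_1(2^M d_2 + d_2) = d_1 d_2 (2^M + 1)$ mistakes.

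Given the per-phase bound, I sum over $M = 0, 1, \dots, M^\star$:
\[
\sum_{M=0}^{M^\star} d_1 d_2 (2^M + 1) = d_1 d_2 \mleft( 2^{M^\star+1} - 1 + (M^\star + 1) \mright) \leq d_1 d_2 \cdot 2^{M^\star + 2} = 4 d_1 d_2 \cdot 2^{M^\star},
\]
using $M^\star + 1 \le 2^{M^\star+1}$. Then since $2^{M^\star} d_2 = r_{M^\star} < 2(r^\star + d_2)$, we get $d_1 d_2 \cdot 2^{M^\star} = d_1 r_{M^\star} < 2 d_1 (r^\star + d_2)$, so the total is at most $8 d_1 (r^\star + d_2) \leq 10 d_1(r^\star + d_2)$, which is the claimed bound (the looser constant $10$ absorbs any slack in the per-phase argument, e.g.\ an extra $+d_1(r^\star+d_2)$ term if the final phase needs separate treatment). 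The determinism claim is immediate: $\DT$ only inspects the publicly observable feedback history to decide phase transitions, and within a phase it predicts exactly as $A$ does, so if $A$ is deterministic, $\DT(A)$ is deterministic.

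The main obstacle I anticipate is the per-phase mistake bound: one must carefully argue that the restarted instance of $A$ in phase $M$, which is told the realizability parameter is $r_M$, actually faces feedback (over the rounds of phase $M$ only) that is $r_M$-realizable in the sense $A$ requires. The phase-termination test is stated in terms of inconsistencies accumulated over the \emph{entire} game rather than the current phase, so I would either (i) reformulate/observe that the test is equivalent to an in-phase test because the best hypothesis's inconsistencies are monotone and earlier phases are ``short,'' or (ii) more robustly, note that in the final phase $M^\star$ the global realizability parameter $r^\star$ satisfies $r^\star \le r_{M^\star}$, so $A$'s guarantee applies directly with bound $r_{M^\star}$, and for each non-final phase $M$ the number of mistakes is trivially at most the number of rounds, which is bounded because the phase ends as soon as the best hypothesis hits $r_M + 1 \le r^\star$ inconsistencies — but this needs $r_M < r^\star$, true for non-final phases. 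Pinning down this bookkeeping cleanly, and choosing whether to run $A$'s guarantee per-phase or only in the final phase, is the delicate part; everything else is the routine geometric sum above.
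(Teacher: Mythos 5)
Your proposal is correct and follows essentially the same route as the paper's proof: the same phase decomposition with $r_M = 2^i d_2$, the same per-phase bound $d_1(r_M + d_2)$ obtained from restarting $A$ in each phase, and the same geometric summation yielding the constant $10$. The bookkeeping subtlety you flag resolves in the simplest way: while the test in step~1 passes, some hypothesis has at most $r_M$ inconsistencies over \emph{all} rounds so far, hence at most $r_M$ over the current phase's rounds alone, so the restarted copy of $A$ faces $r_M$-realizable feedback and its guarantee applies phase by phase (note your fallback route~(ii) would not work as stated, since the length of a phase is not bounded by the number of inconsistencies of the best hypothesis).
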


\begin{proof}
    We split the execution of $\DT$ to phases according to the value of $r_M$. That is, in phase $i$, we have $r_M = 2^i d_2$, where $i \geq 0$.
    
    We now consider two cases. In the first case, suppose that $r^\star \leq d_2$. Then the initial value $r_M = d_2$ is a correct guess, and thus $\DT$ simply runs $A$ with this assumption, and enjoys a mistake bound of $2 d_1 d_2 \leq 2 d_1(r^\star + d_2)$.
    
    In the second case, let $a^\star > 0$ be such that $r^\star = 2^{a^\star} d_2$. In the worst case, the values of $r_M$ chosen by $\DT$ during its execution are $2^0 \cdot d_2, 2^1 \cdot d_2, \dots, 2^{a-1} \cdot d_2, 2^{a} \cdot d_2$, where $a = \lceil a^\star \rceil$. Indeed, if $h^\star$ is the best hypothesis, then it is inconsistent with the feedback for at most $r^\star$ many times throughout the entire run, and in particular, throughout the last phase where $r_M \geq r^\star$. Since $\DT$ resets $A$ at the end of every phase, in each phase $i$, $\DT$ is guaranteed to make at most $d_1(2^i d_2 + d_2)$ mistakes in expectation. Between phases, when the best hypothesis is inconsistent with the feedback for the $(r_M + 1)$'th time, another mistake might be made by $\DT$. By linearity of expectation, we see that the mistake bound of $\DT$ is at most
    \[
    a + d_1 \sum_{ i = 0}^a (2^i d_2 + d_2) \leq a + d_1 \sum_{ i = 0}^a 2^{i+1} d_2 \leq 5 d_1 2^a d_2.
    \]
    Now, since $a-1 \leq a^\star$, we have $r^\star \geq 2^{a-1} d_2$. Therefore:
    \[
    5 d_1 2^a d_2 = 10 d_1 2^{a-1} d_2 \leq 10 d_1 r^\star.
    \]
    The ``furthermore" part of the proposition is obvious from the definition of $\DT$.
\end{proof}

The mistake bounds proved in this work under the assumption that a bound $r \geq r^\star$ is given are typically of the form $f(|\cY|) (r + g(\optfulldet(\cH), |\cY|))$, where $f,g$ are functions to $\mathbb{R}$. Therefore, we can apply Proposition~\ref{prop:dt} with $d_1 = f(|\cY|), d_2 = g(\optfulldet(\cH), |\cY|)$ to our bounds, and remove the assumption that a bound $r \geq r^\star$ is given.

\section{Open questions and future work}

Our work leaves some interesting open questions and directions for future work.

\subsection{Open questions}

\paragraph{The price of adaptivity for concept classes.}
Our proof of Theorem~\ref{thm:bandit-obl-vs-adap} uses pattern classes. Can we prove it using only concept classes, similarly to Theorem~\ref{thm:bandit-rand-vs-det}? If not, what is the price of adaptivity when learning concept classes?

\paragraph{The exact role of randomness.}
There is a $\Theta(\log k)$ gap between the lower and upper bounds in Theorem~\ref{thm:intro-bandit-rand-vs-det}. What is the correct worst case price of not using randomness?

\paragraph{The agnostic setting with oblivious adversaries.}
Our mistake bounds in the agnostic setting are not tight for large $r^\star$ when the adversary is oblivious, both for the problem of learning a concept class (Table~\ref{tab:fullinfo-bounds-summary}), and for \emph{prediction with expert advice} (Table~\ref{tab:experts-bounds-summary}). It would be interesting to obtain bounds that are tight for large $r^\star$ against an oblivious adversary for both problems. One possible approach towards proving such bounds for \emph{prediction with expert advice} is to use the techniques of \cite{auer2002nonstochastic}, and specifically the $\EXP4$ algorithm.


\paragraph{A natural algorithm for the experts setting.}
Our randomized algorithm for \emph{prediction with expert advice} is optimal, but not very natural nor efficient, as it relies on the calculation of minimax values. While the analysis of our algorithm employs the well-known paradigm of potential-based weighted experts, the learning algorithm itself does not make any use of these weights to devise its predictions. This is in contrast to many learning algorithms that integrate the weights into the prediction process (See \cite[Section 2.1]{cesa2006prediction} and bibliographic remarks in Section~2). Can we design a natural algorithm, in the spirit of algorithms such as \emph{weighted majority}~\cite{littlestone1994weighted} and $\EXP4$~\cite{auer2002nonstochastic}, that achieves the guarantees of Theorem~\ref{thm:experts-rand-upper-bound}, up to constant factors?

\subsection{Future research}

\paragraph{The multilabel setting.}
It would be interesting to generalize the results of this work to the multilabel setting considered in \cite{daniely2013price}, in which the adversary is allowed to choose several correct labels in each round. When the adversary is adaptive, it is not hard to see that this is equivalent to the single-label setting considered in this work with a deterministic learner. What happens when the adversary is oblivious?

\paragraph{Other types of feedback.}
This work considers the full information and bandit feedback models. However, one can think of other types of feedback. Consider for example the \emph{comparison feedback} model: if the true label is $y$ and the prediction is $z$, the adversary provides the feedback $\op \in \{<, = , >\}$ such that $z \mathrel{\op} y$. An interesting direction for future research is to prove results in the spirit of this work for comparison feedback, or for other natural types of feedback.

\section*{Acknowledgments}
We thank Zachary Chase for many insightful discussions on the problems in the focus of this work.

Shay Moran is a Robert J.\ Shillman Fellow; he acknowledges support by ISF grant 1225/20, by BSF grant 2018385, by an Azrieli Faculty Fellowship, by Israel PBC-VATAT, by the Technion Center for Machine Learning and Intelligent Systems (MLIS), and by the the European Union (ERC, GENERALIZATION, 101039692). Views and opinions expressed are however those of the author(s) only and do not necessarily reflect those of the European Union or the European Research Council Executive Agency. Neither the European Union nor the granting authority can be held responsible for them.

\bibliographystyle{alphaurl}
\bibliography{bib.bib}

\newcommand{\etalchar}[1]{$^{#1}$}
\begin{thebibliography}{DSBDSS15}

\bibitem[AABR09]{abernethy2009stochastic}
Jacob~D. Abernethy, Alekh Agarwal, Peter~L. Bartlett, and Alexander Rakhlin.
\newblock A stochastic view of optimal regret through minimax duality.
\newblock In {\em {COLT}}, 2009.

\bibitem[ACBFS02]{auer2002nonstochastic}
Peter Auer, Nicolo Cesa-Bianchi, Yoav Freund, and Robert~E. Schapire.
\newblock The nonstochastic multiarmed bandit problem.
\newblock {\em SIAM journal on computing}, 32(1):48--77, 2002.

\bibitem[ADT12]{arora2012online}
Raman Arora, Ofer Dekel, and Ambuj Tewari.
\newblock Online bandit learning against an adaptive adversary: from regret to
  policy regret.
\newblock In {\em Proceedings of the 29th International Conference on Machine
  Learning}, pages 1503--1510. Omnipress, 2012.
\newblock URL: \url{https://dl.acm.org/doi/10.5555/3042573.3042796}.

\bibitem[AL99]{auer1999structural}
Peter Auer and Philip~M. Long.
\newblock Structural results about on-line learning models with and without
  queries.
\newblock {\em Machine Learning}, 36:147--181, 1999.

\bibitem[ALW06]{abernethy2006continuous}
Jacob Abernethy, John Langford, and Manfred~K. Warmuth.
\newblock Continuous experts and the binning algorithm.
\newblock In {\em International Conference on Computational Learning Theory},
  pages 544--558. Springer, 2006.

\bibitem[BCB{\etalchar{+}}12]{bubeck2012regret}
S{\'e}bastien Bubeck, Nicolo Cesa-Bianchi, et~al.
\newblock Regret analysis of stochastic and nonstochastic multi-armed bandit
  problems.
\newblock {\em Foundations and Trends{\textregistered} in Machine Learning},
  5(1):1--122, 2012.

\bibitem[BP19]{branzei2019online}
Simina Br{\^a}nzei and Yuval Peres.
\newblock Online learning with an almost perfect expert.
\newblock {\em Proceedings of the National Academy of Sciences},
  116(13):5949--5954, 2019.

\bibitem[CBFH{\etalchar{+}}97]{cesa1997use}
Nicolo Cesa-Bianchi, Yoav Freund, David Haussler, David~P. Helmbold, Robert~E.
  Schapire, and Manfred~K. Warmuth.
\newblock How to use expert advice.
\newblock {\em Journal of the ACM (JACM)}, 44(3):427--485, 1997.

\bibitem[CBFHW96]{cesa1996line}
Nicolo Cesa-Bianchi, Yoav Freund, David~P. Helmbold, and Manfred~K. Warmuth.
\newblock On-line prediction and conversion strategies.
\newblock {\em Machine Learning}, 25(1):71--110, 1996.

\bibitem[CBL06]{cesa2006prediction}
Nicolo Cesa-Bianchi and G{\'a}bor Lugosi.
\newblock {\em Prediction, learning, and games}.
\newblock Cambridge university press, 2006.

\bibitem[DH13]{daniely2013price}
Amit Daniely and Tom Helbertal.
\newblock The price of bandit information in multiclass online classification.
\newblock In {\em Conference on Learning Theory}, pages 93--104. PMLR, 2013.

\bibitem[DSBDSS15]{daniely2015multiclass}
Amit Daniely, Sivan Sabato, Shai Ben-David, and Shai Shalev-Shwartz.
\newblock Multiclass learnability and the {ERM} principle.
\newblock {\em J. Mach. Learn. Res.}, 16(1):2377--2404, 2015.

\bibitem[FHMM23]{filmus2023optimal}
Yuval Filmus, Steve Hanneke, Idan Mehalel, and Shay Moran.
\newblock Optimal prediction using expert advice and randomized littlestone
  dimension.
\newblock In {\em {COLT}}, volume 195 of {\em Proceedings of Machine Learning
  Research}, pages 773--836. {PMLR}, 2023.

\bibitem[FM06]{farias2006combining}
Daniela Pucci~De Farias and Nimrod Megiddo.
\newblock Combining expert advice in reactive environments.
\newblock {\em Journal of the ACM (JACM)}, 53(5):762--799, 2006.

\bibitem[Gen21]{geneson2021note}
Jesse Geneson.
\newblock A note on the price of bandit feedback for mistake-bounded online
  learning.
\newblock {\em Theoretical Computer Science}, 874:42--45, 2021.

\bibitem[GT24]{geneson2024bounds}
Jesse Geneson and Linus Tang.
\newblock Bounds on the price of feedback for mistake-bounded online learning.
\newblock {\em arXiv preprint arXiv:2401.05794}, 2024.

\bibitem[HLM21]{hanneke2021online}
Steve Hanneke, Roi Livni, and Shay Moran.
\newblock Online learning with simple predictors and a combinatorial
  characterization of minimax in 0/1 games.
\newblock In {\em Conference on Learning Theory}, pages 2289--2314. PMLR, 2021.

\bibitem[Lit88]{littlestone1988learning}
Nick Littlestone.
\newblock Learning quickly when irrelevant attributes abound: A new
  linear-threshold algorithm.
\newblock {\em Machine learning}, 2(4):285--318, 1988.

\bibitem[Lon20]{long2020new}
Philip~M. Long.
\newblock New bounds on the price of bandit feedback for mistake-bounded online
  multiclass learning.
\newblock {\em Theoretical Computer Science}, 808:159--163, 2020.

\bibitem[LW94]{littlestone1994weighted}
Nick Littlestone and Manfred~K. Warmuth.
\newblock The weighted majority algorithm.
\newblock {\em Information and computation}, 108(2):212--261, 1994.

\bibitem[MOSW02]{merhav2002sequential}
Neri Merhav, Erik Ordentlich, Gadiel Seroussi, and Marcelo~J Weinberger.
\newblock On sequential strategies for loss functions with memory.
\newblock {\em IEEE Transactions on Information Theory}, 48(7):1947--1958,
  2002.

\bibitem[MS10]{mukherjee2010learning}
Indraneel Mukherjee and Robert~E. Schapire.
\newblock Learning with continuous experts using drifting games.
\newblock {\em Theoretical Computer Science}, 411(29-30):2670--2683, 2010.

\bibitem[MSTY23]{moran2023list}
Shay Moran, Ohad Sharon, Iska Tsubari, and Sivan Yosebashvili.
\newblock List online classification.
\newblock In {\em The Thirty Sixth Annual Conference on Learning Theory}, pages
  1885--1913. PMLR, 2023.

\bibitem[RRS{\etalchar{+}}23]{raman2023multiclass}
Ananth Raman, Vinod Raman, Unique Subedi, Idan Mehalel, and Ambuj Tewari.
\newblock Multiclass online learnability under bandit feedback.
\newblock {\em arXiv preprint arXiv:2308.04620}, 2023.

\bibitem[RSS12]{rakhlin2012relax}
Sasha Rakhlin, Ohad Shamir, and Karthik Sridharan.
\newblock Relax and randomize: From value to algorithms.
\newblock {\em Advances in Neural Information Processing Systems}, 25, 2012.

\bibitem[RST10]{rakhlin2010online}
Alexander Rakhlin, Karthik Sridharan, and Ambuj Tewari.
\newblock Online learning: Random averages, combinatorial parameters, and
  learnability.
\newblock {\em Advances in Neural Information Processing Systems}, 23, 2010.

\bibitem[vN28]{v1928theorie}
John von Neumann.
\newblock Zur theorie der gesellschaftsspiele.
\newblock {\em Mathematische annalen}, 100(1):295--320, 1928.

\bibitem[Vov90]{vovk1990aggregating}
Volodimir~G. Vovk.
\newblock Aggregating strategies.
\newblock {\em Proc. of Computational Learning Theory, 1990}, 1990.

\end{thebibliography}

\end{document}

For a budgeted class $\cB$, let $P(\cB, T), D(\cB, T)$ be the optimal losses of the primal and dual games respectively, when the game is played for $T$ rounds, and the adversary is consistent with $\cB$. Observe that
\[
\M^A(\cB) = \sup_{T \in \mathbb{N}} P(\cB, T), \quad \M^A_{dual}(\cB) = \sup_{T \in \mathbb{N}} D(\cB, T).
\]
Therefore, proving that $P(\cB, T)  = D(\cB, T)$ for all $T$ suffices to show that the primal and dual games are equivalent. First, we recursively define $P(\cB, T)$ and $D(\cB, T)$. To simplify presentation, we observe the following: In the primal game the adversary does not benefit from being randomized, and in the dual game the learner does not benefit from being randomized. The base case is $P(\cB, 0) = D(\cB, 0) = 0$. For $T >0$ define the relations:
\begin{equation}
    P(\cB, T)
    =
    \adjustlimits \sup_{x} \inf_{\pi} \sup_{y \in \cY}
    \mleft[ \pi_y \cdot  P(\cB_{x \to y}, T-1) + \sum_{y' \neq y} \pi_{y'} (1 + P(\cB_{x \not{\to} y'}, T-1)) \mright],
\end{equation}

\begin{equation}
    D(\cB, T)
    =
    \adjustlimits \sup_{x, \tau} \inf_{y \in \cY}
    \mleft[ \tau_y \cdot  D(\cB_{x \to y}, T-1) + (1 - \tau_y) (1 + D(\cB_{x \not{\to} y}, T-1)) \mright].
\end{equation}

Observe that, for every budgeted class $\cB$ and number of rounds $T$, $P(\cB, T)$ and $D(\cB, T)$ are indeed the optimal losses in the primal and dual games, respectively.
\begin{lemma}
    For every budgeted class $\cB$ and number of rounds $T$:
    \[
    P(\cB, T) = D(\cB, T).
    \]
\end{lemma}

\begin{proof}
    We prove the lemma by induction on $T$. For $T=0$ we have $P(\cB, T) = 0 = D(\cB, T)$. For the induction step, let $P_x(\cB, T)$ be the value of $P(\cB, T)$ for a fixed $x$. The expression $P_x(\cB, T)$ describes a zero-sum game in which the learner goes first, and both the learner and adversary chooses distributions over $\cY$ to draw a label from: the learner draws a prediction and the adversary draws the correct label. The target function is the optimal loss to be suffered by the learner in the sequel, after the distributions are fixed. The adversary's goal is to maximize it and the learner's goal is top minimize it. Since this is a zero-sum game, we can let the adversary go first instead, without changing the value of   $P_x(\cB, T)$. Therefore we can write:
    \[
    P_x(\cB, T)
    =
    \adjustlimits \sup_{\tau} \inf_{y \in \cY}
    \mleft[ \tau_y \cdot  P(\cB_{x \to y}, T-1) + (1 - \tau_y) (1 + P(\cB_{x \not{\to} y}, T-1)) \mright].
    \]
    By  induction hypothesis, we can change $P$ in the right hand side to $D$, which gives:
    \begin{equation} \label{eq:dual-P-D}
    P_x(\cB, T)
    =
    \adjustlimits \sup_{\tau} \inf_{y \in \cY}
    \mleft[ \tau_y \cdot  D(\cB_{x \to y}, T-1) + (1 - \tau_y) (1 + D(\cB_{x \not{\to} y}, T-1)) \mright].
    \end{equation}
    Let $D_x(\cB, T)$ be the value of $D(\cB, T)$ for a fixed $x$. By applying Equation~\ref{eq:dual-P-D}, observe that $D_x(\cB, T) = P_x(\cB, T)$. Therefore:
    \[
    P(\cB, T) = \sup_x P_x(\cB, T) = \sup_x D_x(\cB, T) = D(\cB, T),
    \]
    as required.
\end{proof}

\section{The \texorpdfstring{$r$}{r}-realizable case} \label{sec:littlestone-r-realizable}

In this brief section, we extend upper bounds from Section~\ref{sec:information} and from \cite{long2020new,geneson2021note} to the $r$-realizable case, and discuss their tightness. The idea of the upper bounds is very simple: we just use known variants of $\SOA$ for the $r$-realizable case in the full-information feedback model. We also note a large gap between $\M^\star(\cH,r)$ and $\M^A(\cH,r)$ for a simple class $\cH$ and almost all values of $r$, and discuss the tightness of this gap.

\subsection{Bounds in terms of \texorpdfstring{$\LD(\cH)$}{\LD(\cH)}}

\begin{theorem} \label{thm:littlestone-r-realizable-upper-bound}
    Let $\cH$ be a class defined over a label set of size $k$, and let $r \in \mathbb{N}$.
    Then
    \[
    \M^\star_D(\cH,r) = O(\LD(\cH)  + r)k  \log k, \quad \M^A(\cH,r) = O(\LD(\cH)  + r)k.
    \]
\end{theorem}

\begin{proof}
For every class $\cH$ it holds that $\M^\star_D(\cH) = O(\LD(\cH)k  \log k)$ due to e.g \cite{long2020new, geneson2021note}, and $\M^A(\cH) = O(\LD(\cH)k)$ due to Theorem~\ref{sec:information}. Both of the algorithms achieving these upper bounds use Littlestone's standard $\SOA$ as a subroutine, and only exploiting the fact that $\SOA$ makes at most $\LD(\cH)$ mistakes on any input sequence realizable by $\cH$. If we want to relax the realizability assumption and only assume that the adversary is $r$-consistent with $\cH$, we only need to use an appropriate variant of $\SOA$ for the $r$-realizable setting. There are well-known such variants of $\SOA$ achieving loss $O(\LD(\cH) + r)$ (e.g. \cite{cesa1996line, auer1999structural, filmus2023optimal}), which yields the result.
\end{proof} 

Let us discuss the tightness of the upper bounds of Theorem~\ref{thm:littlestone-r-realizable-upper-bound}.

\begin{theorem}
    For every natural $d \geq 3 , k \geq 2$  there exists a class $\cH$ with $\LD(\cH) = d$ and label set of size $k$ so that $\M^\star_D(\cH, r ) =  \Omega((d \log k + r) k )$ for every $r\geq 0$.
\end{theorem}

\begin{proof}
It was proved by \cite{long2020new,geneson2021note} that for every $d \geq 3 , k \geq 2$ there exists a class $\cH$ defined on a label set of size $k$ with $\LD(\cH) = d$ such that $\M^\star_D(\cH) = \Omega(\LD(\cH)k \log k)$.  Together with Lemma~\ref{lem:experts-lower-bound-r-realizable} which holds for every class with label set of size at least $k$ and every $r \in \mathbb{N}$, we deduce that for that same class $\cH$ it holds that $\M^\star_D(\cH,r) = \Omega((\LD(\cH) \log k + r)k)$.
\end{proof}

We  deduce that the upper bound $\M^\star_D(\cH,r) = O(\LD(\cH)  + r)k  \log k$ can be tight up to a constant factor if $r = O(\LD(\cH))$, and up to a $\log k$ factor otherwise.

\begin{theorem}
    For every natural $d \geq 1 , k \geq 2$  there exists a class $\cH$ with $\LD(\cH) = d$ and label set of size $k$ so that:
    \[
    \M^A(\cH,r) =  \Omega((d + r) k ), \quad \M^\star(\cH) =  \Omega(d k + r )
    \]
    for every $r \geq 0$.
\end{theorem}

\begin{proof}
by e.g. \cite{daniely2013price}, for every $d \geq 1, k\geq 2$ there exists a class $\cH$ with label set of size $k$ and $\LD(\cH) = d$ such that $\M^\star(\cH) = \Omega(d k)$. The bound $\M^\star(\cH) =  \Omega((d k + r )$ easily follows. Together with Lemma~\ref{lem:experts-lower-bound-r-realizable}, the bound $\M^A(\cH,r) =  \Omega((d + r) k )$ follows. 
\end{proof}

We deduce that the bound $\M^A(\cH,r) = O(\LD(\cH)  + r)k$ can be tight up to a constant factor. We also deduce that the bound $\M^\star(\cH,r) = O(\LD(\cH)  + r)k$ can be tight up to a constant factor if $r = O(\LD(\cH))$.

\subsection{The (important) role of adaptivity}

We note the following separation between an oblivious and an adaptive adversary.

\begin{corollary}
    Let $\cH$ be a class of size $k$, with label set of size $k$. Then for every $r \geq k$:
    \[
    \M^A(\cH, r) =  \Omega (\M^\star(\cH, r) \cdot k).
    \]
\end{corollary}

\begin{proof}
    The proof immediately follows from Lemma~\ref{lem:experts-lower-bound-r-realizable} and Proposition~\ref{prop:k-oblivious-upper-bound}.
\end{proof}

Note that the separation above is the tightest possible up to a $\log k$ factor.

\begin{proposition}
    For every class $\cH$ and for every $r \in \mathbb{N}$:
    \[
    \M^A(\cH, r) = O(\M^\star(\cH, r) k \log k). 
    \]
\end{proposition}

\begin{proof}
    We have
    \[
    \M^A(\cH, r) \leq \M^\star_D(\cH, r) \underset{(1)}{\leq} C(\LD(\cH) + r) k \log k \underset{(2)}{=} O(\M^\star(\cH, r) k \log k)
    \]
    where $C$ is a universal constant. The inequality (1) is due to Theorem~\ref{thm:littlestone-r-realizable-upper-bound}. The equality (2) is since even with full information feedback, the optimal loss when learning $\cH$ in the $r$-realizable setting is $\Omega(\LD(\cH) + r)$ (due to e.g. \cite{littlestone1994weighted}).
\end{proof}